\title
[Algorithmic causal structure emerging through compression]
{Algorithmic causal structure emerging through compression}
\newif\ifcomments
\newcommand{\acomment}[1]
\begin{document}

\maketitle

\begin{abstract}%
  We explore the relationship between causality, symmetry, and compression. We build on and generalize the known connection between learning and compression
  to a setting where causal models are not identifiable. We propose a framework where causality emerges as a  consequence of compressing data across multiple environments. We define algorithmic causality as an alternative definition of causality when traditional assumptions for causal identifiability do not hold. We demonstrate how algorithmic causal and symmetric structures can emerge from minimizing upper bounds on Kolmogorov complexity, without knowledge of intervention targets. We hypothesize that these insights may also provide a novel perspective on the emergence of causality in machine learning models, such as large language models, where causal relationships may not be explicitly identifiable.
\end{abstract}

\begin{keywords}%
  algorithmic causality, compression, symmetry, Kolmogorov complexity
\end{keywords}

\newpage
\tableofcontents

\newpage
\section{Introduction}
The case has been made that learning and compression are intimately related~\citep[Sec. 4.6]{blumer1987occam, mackay2003information,grunwald2007minimum,vapnik1999nature}: both are made possible by regularities in datasets. In the present paper, we seek to extend this connection beyond predictive machine learning. 
Such settings are often studied using causal models such as structural causal models (SCMs) and causal Bayesian networks (CBNs)~\citep{pearl2009causality}.
We are particularly interested in settings where data is non i.i.d.\ and originates from multiple environments which share (most of the) mechanisms or causal conditionals.
This assumption is termed sparse mechanism shift (SMS)~\citep{scholkopf2021toward}. Intuitively, SMS helps us when jointly compressing models learned across multiple environments, since shared mechanisms then need only be compressed once. However, it is not trivial to integrate this into the known compression framework. This serves as a motivation for our paper, with the goal of working towards a rigorous framework of compression in which causality emerges as a regularity bias.

As a starting point, we observe that most of the previous literature on identifiability in causal discovery and causal representation learning can be rephrased in a classical compression framework: There is (up to tolerable ambiguities) a unique model under which the data has maximal likelihood, equivalently, a unique model whose distribution has minimal cross-entropy with the data distribution. A common criticism of causality research is that identifiability generally requires strong assumptions. We are interested in cases when those assumptions do not hold, and investigate what we can still say about causality. As a motivation we remark that if the only well-defined notion of causality were to build upon identifiability subject to unrealistic assumptions, why could humans and animals possess reliable causal knowledge of aspects of the world, and how should we formalize this knowledge despite the issues of non-identifiability?

\paragraph{Contributions}
\begin{itemize}
    \item We discuss the relationship between identifiability (in causal discovery and causal representation learning) and compression (\cref{sec:prelim_ident}).
    \item To the best of our knowledge, our work is the first rigorous treatment of principled decisions (rather than only non-identifiability)
    of causal arrows with no constraints on distribution classes and no knowledge of intervention types or targets in a non-Bayesian perspective, with a weaker definition of causality (\cref{sec:algo_causality}, \cref{def:algo_causality_informal}, \cref{def:algorithmic_causality_CFMP}).
    \item Under the settings where minimum cross-entropy cannot identify causal arrows, we use a more general notion of compression, i.e., Kolmogorov complexity, to carry out model selection over the algorithmic causal models. (\cref{sec:learn_compression}) We provide computable upper bounds on Kolmogorov complexity (i.e., finite codebook complexity (\cref{def:FC_complexity})) under certain non-universal Turing machines, i.e., universal finite codebook computers (UFCC, \cref{def:UFCC}).
    \item We prove that under some UFCCs, models using causal factorizations and models with the sparsest mechanism shifts are preferred by minimizing the finite codebook complexity. (\cref{sec:case_studies}, \cref{prop:SMS}) This means that algorithmic causality emerges as a by-product when minimizing an upper bound of Kolmogorov complexity.
    We further show that some UFCCs align simplicity (i.e., short coding length) with symmetries such as invariance or equivariance under group actions (\cref{prop:inv_shorter}).
\end{itemize}

\section{Identifiability in causality and its relation with compression}\label{sec:prelim_ident}

In this section, we review the general relation of compression
and identifiability in causality, and the limitations of both two notions. We focus on CBNs since \cite{pearl2009causality} shows that identifying CBNs is strictly easier than identifying SCMs, and hence all difficulties regarding identifiability in CBNs also exist in SCMs.
An \textbf{observational CBN model} is a tuple $(G, \PP)$ where $G=(V,E)$ is a directed acyclic graph (DAG), and the distribution $\PP$ is Markov relative to $G$, i.e., $\PP(X) = \prod_i \PP(X_i | X_{\pa^G(i)})$. $\Dcal_n$ denotes n iid samples from $\PP$.
A \textbf{multi-env CBN model} is tuples $M = (G^i, \PP^i)_{i\in [I]}$ where each tuple is called an \textbf{environment (env)}, and each $\PP^i$ is Markov relative to $G^i$.
$\Dcal_n^i$ denotes n iid samples from $\PP^i$. Denote $\Mcal$ as a (multi-env) \textbf{CBN model class} that contains different (multi-env) CBN models. 

\begin{definition}[Identifiability in causal discovery]\label{def:identCDCRL}
    Given an observational CBN model class $\Mcal$ in which all the joint distributions are absolutely continuous w.r.t.\  a  measure $\mu$, we say $\Mcal$ is \emdef{identifiable} if for any $(G, \PP_\theta), (G', \PP_{\theta'}) \in \Mcal$ with $\PP_\theta(x) = \PP_{\theta'}(x)$ $\mu$-almost everywhere, we have $G=G'$.
    Given an multi-env CBN model class $\Mcal$, we say $\Mcal$ is \emdef{identifiable} if for any $(G^i, \PP_\theta^i)_{i\in [I]}, (G'^i, \PP_{\theta'}^i)_{i\in [I]} \in \Mcal$ with $\PP^i_\theta(x) = \PP^i_{\theta'}(x)$ $\mu$-almost everywhere for all $i\in [I]$, we have $G^i=G'^i$ for all $i\in [I]$.
\end{definition}

For both observational and multi-env models, we have the following well-known result, similar to that in classical statistics~\citep[e.g.][Thm 17.3]{greene2003econometric}. Defining the likelihood function $L(\theta|\Dcal_n)=p_\theta(\Dcal_n)$, we have:
\begin{restatable}{lemma}{lemIdentminCE}\label{lem:IdentminCE}\textbf{(Identifiability implies uniqueness of solution of minimum cross-entropy)\footnote{For readability we stay in the unconfounded setting and the strong version of identifiability. We can readily generalize~\cref{def:identCDCRL} and this lemma to identifiability up to an equivalence class, or generalize to the setting of causal representation learning, by changing the observed distribution to be a marginal distribution $\PP_X$ of the model distribution $\PP_{XZ}$, while forcing the invariance of $\PP_{X|Z}$.}}

    Given a CBN model class $\Mcal$, if $\Mcal$ is identifiable and its distribution class is parametric, then 
    
    (1) For the observational CBN model, the solution of maximum likelihood\\ $\argmax_{M\in \Mcal} \lim_{n\to\infty} \frac{1}{n}\log L(M|\Dcal_n)$
    is unique.
    Equivalently, the minimizer of cross-entropy
    $\argmin_{(G,\theta)\in \Mcal}\EE_{\PP_\theta^*}[-\log\PP_\theta(X)]$
    is unique.
    
    (2) For the multi-env CBN model with uniform prior over environments, the solution of maximum likelihood $\argmax_{M\in \Mcal} \lim_{n\to\infty} \sum_{i=1}^K \frac{1}{n}\log L(M|\Dcal_n^i)$
    is unique. Equivalently,
    the minimizer of the sum of cross-entropies across multi-env
    \begin{align}
    \argmin_{\left((G_i)_{i\in [I]},(\theta_i)_{i\in [I]}\right)\in \Mcal} \sum_{i=1}^I \EE_{\PP_{\theta^*_i}}[-\log\PP_{\theta_i}(X)]
    \end{align}
    is unique.
\end{restatable}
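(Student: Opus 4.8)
The plan is to reduce both statements to the single information-theoretic fact that cross-entropy is minimized exactly at the data-generating law, and then to invoke identifiability to promote uniqueness of the distribution to uniqueness of the model $(G,\theta)$.

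First I would establish the stated equivalence between the two optimization problems. Writing $L(M|\Dcal_n)=\prod_{j=1}^n \PP_\theta(x_j)$, the normalized log-likelihood $\frac1n\log L(M|\Dcal_n)=\frac1n\sum_{j=1}^n\log\PP_\theta(x_j)$ is an empirical average of i.i.d.\ terms, so by the strong law of large numbers it converges almost surely to the expectation of the log-density. Hence
\begin{equation}
\lim_{n\to\infty}\frac1n\log L(M|\Dcal_n)=\EE_{\PP_{\theta^*}}[\log\PP_\theta(X)]=-\,\EE_{\PP_{\theta^*}}[-\log\PP_\theta(X)],
\end{equation}
where $\PP_{\theta^*}$ is the data-generating law, so that maximizing the limiting normalized log-likelihood over $M\in\Mcal$ is literally the same problem as minimizing the cross-entropy $\EE_{\PP_{\theta^*}}[-\log\PP_\theta(X)]$.

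Second I would decompose the cross-entropy. For a fixed true parameter $\theta^*$,
\begin{equation}
\EE_{\PP_{\theta^*}}[-\log\PP_\theta(X)]=H(\PP_{\theta^*})+D_{\mathrm{KL}}(\PP_{\theta^*}\,\|\,\PP_\theta),
\end{equation}
and the entropy $H(\PP_{\theta^*})$ does not depend on $\theta$, so minimizing cross-entropy over $\Mcal$ is equivalent to minimizing $D_{\mathrm{KL}}(\PP_{\theta^*}\|\PP_\theta)$. By Gibbs' inequality the KL term is nonnegative and vanishes iff $\PP_\theta=\PP_{\theta^*}$ $\mu$-almost everywhere; since the true model lies in $\Mcal$ this minimum value $0$ is attained, and every minimizer $M=(G,\PP_\theta)$ must satisfy $\PP_\theta=\PP_{\theta^*}$ $\mu$-a.e. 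It then remains only to argue that such a minimizer is unique: identifiability (\cref{def:identCDCRL}) forces $G=G^*$, and the parametric (injective-parametrization) assumption on the distribution class forces $\theta=\theta^*$, which gives part (1).

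Finally, for part (2) the same chain applies environment by environment. With a uniform prior the objective is $\sum_{i=1}^I \EE_{\PP_{\theta_i^*}}[-\log\PP_{\theta_i}(X)]$, which by the decomposition above is bounded below by $\sum_{i=1}^I H(\PP_{\theta_i^*})$, with equality iff $\PP_{\theta_i}=\PP_{\theta_i^*}$ $\mu$-a.e.\ for every $i$. Crucially this lower bound is itself attained by the true multi-env model, so even when mechanisms are shared across environments and thereby couple the $\theta_i$, any minimizer must match the true distribution in all environments simultaneously; multi-env identifiability then yields $G^i=G'^i$ for all $i$ and parametric identifiability yields $\theta_i=\theta_i^*$, establishing uniqueness of the whole tuple. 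The main obstacle I anticipate is not structural but lies in the regularity bookkeeping: ensuring $\log\PP_\theta(X)$ is integrable so that the SLLN applies and the cross-entropy is finite, and pinning down precisely what \emph{parametric} must mean (namely injectivity of $\theta\mapsto\PP_\theta$) so that distributional uniqueness genuinely upgrades to parameter uniqueness rather than only graph uniqueness.
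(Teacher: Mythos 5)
Your proposal is correct and follows essentially the same route as the paper: both reduce the problem to showing that any minimizer of the (sum of) cross-entropies must reproduce the data-generating distribution(s) $\mu$-a.e.\ — you via the cross-entropy $=$ entropy $+$ KL decomposition and the equality case of Gibbs' inequality, the paper by deriving that same strict inequality directly from Jensen's inequality applied to the likelihood ratio $\frac{p_\theta(X)}{p_{\theta^*}(X)}$ — and then invoke identifiability to pin down the graph(s) and injectivity of the parametrization to pin down $\theta$. Your closing remarks on integrability for the SLLN and on reading ``parametric'' as injectivity of $\theta\mapsto\PP_\theta$ correctly flag exactly the assumptions the paper's proof uses implicitly, and your term-by-term treatment of the multi-environment sum is an equivalent (slightly more direct) alternative to the paper's device of forming the $I$-fold product distribution on $\Xcal^{dI}$.
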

The proof is in \cref{sec:proof_sec_prelim_ident}.

By Shannon's source coding theorem (\cref{Thm:Shannon_source_coding})
the entropy, (equivalently the minimum cross-entropy) is the shortest (most compressed) average coding length for an iid sequence. 
Therefore the desideratum of identifiability research is to justify that compression (minimum cross-entropy) is the correct model selection method in causal discovery.

\paragraph{Limitations}
\begin{itemize}
    \item \textbf{Identifiability is deterministic model selection.}
    By \cref{lem:IdentminCE}, once we have identifiability results in a model class $\Mcal$, we can deterministically select the ground truth model in $\Mcal$ using maximum likelihood or minimum cross-entropy.
    It is known that without constraints of distribution class and without knowledge of the intervention targets, there is no identifiability beyond the Markov equivalence class. 
    There is extensive research on finding model classes that make causal models identifiable. 
    These correspond to hard priors, restricting the model class to a lower dimensional submanifold, resulting in subjective model pre-selection. However, the model selection problem is inescapable
    ---one can either pre-select the model class with constraints and then derive a deterministic model selection result, or 
    perform model selection directly in an unconstrained model class. The success of modern empirical machine learning is based on the latter, while identifiability, based on the former, 
    excludes parts of the model space a priori.
    \item \textbf{Intervention types and targets.}
    Many papers show identifiability results under no distribution constraints on causal mechanisms but with the knowledge of intervention types or targets. In~\cref{def:identCDCRL}, we can see that intervention types or targets are constraints in $\Gcal^I$, the $I$-th cartesian product of all graphs of $d$ nodes. For example, the assumption of ``all interventions are soft''~\citep{perry2022causal,wildberger2023interventional} implies that the multi-env graphs $(G_i)_{i\in [I]}$ have to be the same graph $G$ across all environments. 
   Under faithfulness assumption, $\Mcal=\bigsqcup_{(G_i)_{i \in [I]}\in\Gcal^I} \Theta_{(G_i)_{i \in [I]}}$, a disjoint union of model classes, each of which contains $I$-env systems that are Markov and faithful to $I$-many graphs. 
   By reducing the possible graphs, they in fact force hard priors over the probabilistic model class on multi-env systems.
   The more assumptions we have on intervention types or targets, the smaller the distribution class is, and the more chance of model misspecification there is.
   Without intervention types or targets, multi-env data are in fact \textit{correlational} data, because every new environment can arise from interventions in each variable respectively, which can be completely unrelated to the mechanisms in the observational environment. In such cases, no identifiability beyond Markov equivalence class is possible.
    \item \textbf{Entropy is not all the bits needed to encode datasets.} The length of the codebook is missing in cross-entropy, which is why identifiability theories cannot distinguish different computational models that compute the same probabilistic model. We discuss this in detail in~\cref{sec:comment_entropy}.
\end{itemize}

\paragraph{The central questions in this paper:} 
\begin{enumerate}
    \item If we observe multi-env data, but have no certain knowledge about intervention types or targets, no hard prior over the model class, and thus no identifiability guarantees, can we decide on the causal relationship between variables, with a weaker definition of causality?
    \item How to learn such a causal relationship? What is its relation with compression?
\end{enumerate}

\section{Algorithmic causality}\label{sec:algo_causality}
We will now present an approach to describe causal models as probabilistic models implemented by Turing machines (TM).
While this idea was first proposed by \citet{janzing2010causal} for deciding the causal graphs among strings, our approach differs in that (1) our complexity score is fundamentally different from their Postulate 7; (2) we focus on the complexity of probabilistic models implemented by a specific class of Turing machines, see~\cref{sec:related_work} and \cref{sec:remark_AMC} for details.

To understand our intuition, suppose we have datasets from different countries on physical health $X$ and sleep quality $Y$. To represent the distribution of these datasets, one can (a) use probability tables for categorical values or binned continuous values; (b) train linear models or nonparametric models such as neural networks; (c) use one of the above methods, but separately over $\PP^i(X)$ and $\PP^i(Y|X)$ or (d) separately over $\PP^i(Y)$ and $\PP^i(X|Y)$. (a-d) might express the same distribution, but we consider them different algorithms or Turing machines. The coding lengths for these models are different. Therefore, we can distinguish (a-d) when we compress the data and take the codebook length into account.

Another example that reflects our motivation is the following: LLMs show impressive performance on the questions that require the use of composable mechanisms such as physical laws, reasoning, and mathematics, but their objective in pre-training is only minimizing cross-entropy of next word given the context \citep{gupta2023context}. By minimizing cross-entropy, can a learned model store and call some reusable mechanisms (conditional probability to certain precisions) just like copying the laws in different environments? Does compression mean that LLMs compress many similar scenarios into sparse mechanisms? To illustrate this point, consider two models $A,B$ which are indistinguishable in inputs and outputs, both compute a function $F:(i,x)\mapsto f(x)$ for all $i$. On input $(1, x)$, $A$ uses a neural network for all $x$ and outputs $f(x)$, and on input $(2, x)$, $A$ copies the neural network $f$ as $\text{COPY}(f)(x)$, where $\text{COPY}(f)$ can be stored as an address in memory pointing to the neural network $f$. Here, 1 or 2 denote the context or environment. B utilizes a sophisticated neural network to compute $F$. Since A and B are undistinguishable as functions, their cross-entropies are also undistinguishable (recall that entropy is the minimal length of encoding iid data \textit{given} a model). Consequently, minimizing cross-entropy does not ensure the presence of any sparse structure within the model itself. We have to go beyond entropy and identifiability, by taking model length into account.

The idea of “copying mechanisms” is the motivation behind \cref{sec:algo_causality} (step 2 and 3 in \cref{def:cond_feat_mechanism_program}) and \cref{prop:SMS}. Instead of treating all the parameters equally, we can decompose a model into some computation steps. First, the model creates some “raw mechanisms”(\cref{def:prob_mech_map}) without assigning them to a variable in $\mathcal{X}^d$, for example ``something falls at the acceleration $9.8m/s^2$''; second, the model apply the raw mechanisms to some concrete objects or values: “An apple in Europe falls at the acceleration $9.8m/s^2$”, “An iron ball in Asia falls…”. The raw mechanism ``something falls at the acceleration $9.8m/s^2$'' is copied multiple times and “something” is replaced by objects in different places or contexts by a feature mechanism (\cref{def:feature_mechanism}), such as ``apple/iron balls in Europe/Asia $\mapsto$ something'' (\cref{eg:CFMP} Step 2). The gravitational acceleration in Europe and Asia are slightly different, but compression will decide whether saving one or two different mechanisms in the model is better, by balancing the model coding length and data-to-model coding length. The model that compresses the Internet optimally would save the variables in the following way: ``\textbf{if} something is on earth, \textbf{then} it falls at the acceleration of $9.8m/s^2$''. Algorithmic causal statements are such reusable ``if...then...'' judgments that emerge from compression.

It is important to notice that there is no interventional data or identifiability here: compression prefers to put "on earth" as a cause of ``acceleration value'', not because we did single-node interventions on those two variables respectively.
Available training data for LLMs does not include intervention targets or environment variables. The data is multi-env, in the sense that contexts are always different in each data sentence (country, temperature, etc.), but there is only statistical dependence (correlation) in the data. Since there are no intervention targets or hard constraints over model class, no identifiability in causal literature is applicable. However, human and compression algorithms can still believe that it is ``on earth'' that ``causes'' the "acceleration value", not the other way around. The rest of the paper explains how compression can prefer one causal statement over others.

\begin{table}[ht]
\centering
\begin{tabular}{|c|c|c|}
\hline
      & \thead{Identifiable, Pearl's causality}  & \thead{Algorithmic causality} \\ \hline
Model class & \makecell{A class of CBNs/SCMs\\ which computes not all possible \\(multi-env) joint distributions}  & \makecell{A class of CFMPs (\cref{def:cond_feat_mechanism_program})/TMs\\ which computes all possible\\ (multi-env) joint distributions}  \\ \hline
\makecell{Goal of model\\selection} & \makecell{Recover the ground truth graph \\ up to certain symmetries by \\ minimum cross-entropy (\cref{lem:IdentminCE})} & \makecell{Ground truth is ill-defined. Even\\bivariate identifiability is impossible.\\Just select the model that minimizes\\ Kolmogorov/ FC complexity\\(\cref{def:FC_complexity}) among CFMPs}\\ \hline
\makecell{Is there model\\pre-selection\\before training}  & \makecell{Yes. Hard priors (including\\ intervention targets) are needed\\ on probabilistic model classes;\\ otherwise no identifiability beyond\\ Markov equivalence class,\\ even if we have multi-env datasets.}  & \makecell{No. Constraining the class\\of CFMPs does not necessarily\\ constrain the class of probabilistic\\ models that those CFMPs can\\ compute.}   \\ \hline
\makecell{Subjectivity of\\ model selection} & \makecell{The probabilistic model class where\\ the groud truth is believed to live}& \makecell{Choice of UFCC (\cref{def:UFCC})\\ on which FC complexity is based}\\ \hline  
\end{tabular}
\caption{Comparison between the learning of identifiable causal models and our algorithmic causal models.}
\end{table}

\begin{definition}[informal, algorithmic causality]\label{def:algo_causality_informal}
    Consider a class of Turing machines where each $T$ halts on any input in $\Xcal^d:=(\Bcal^m)^d$ (see \cref{def:discrete_proba_space}) and outputs a codeword for $x$ or outputs $\PP(x)$ for a certain distribution $\PP$. Suppose all $T$ can simulate certain subprograms in the form of ``If $X_i=\dots$ then $X_j=\dots$''. We say that according to a model selection method (e.g. compression), \emdef{$X_i$ algorithmically causes $X_j$ locally} at $x\in \Xcal^d$ if the method selects a Turing machine $T$ such that, given the input $x$, among all the subprograms that $T$ simulates there exists one in the form of ``If $X_i=\dots$ then $X_j=\dots$'' and there does not exist the opposite. If such a statement holds for all $x\in \Xcal^d$, we say $X_i$ \emdef{algorithmically causes} $X_j$.
\end{definition}

We emphasize that algorithmic causality is a property of the selected Turing machine w.r.t.\ a model selection method, not a property of the data distribution computed by a Turing machine. Notice that in identifiability research, the decision on causal graphs also depends on the model selection method (e.g. choice of the model class). In the following, we are interested in those model selection methods that do not constrain the joint distribution class.

A formal definition for a special case is provided below (\cref{def:algorithmic_causality_CFMP}). We will see in~\cref{sec:learn_compression} that compression prefers selecting a Turing machine that says ``$X_i$ causes $X_j$'' if the estimated conditional probability $\PP(X_j|X_i)$ is approximately invariant across environments. Our model selection strategy is to constrain the class of Turing machines but not the distribution class that our Turing machine class can compute.
We consider discrete random variables, but it is a standard fact that they can approximate all absolutely continuous distributions with compact support to arbitrary precision:
\begin{definition}\label{def:disc_distr_cylinder}
    A \emdef{discrete distribution} $\PP$ supported in the rational number space $(\QQ\cap [0,1))^D$ is a function $(\Bcal^*)^D \to \Bcal^*$, where $\Bcal=\{0,1\}$, and $\Bcal^*$ is the set of binary sequences of arbitrary finite length, which is isomorphic to $\QQ\cap [0,1)$ by the canonic dyadic expansion: $b_1 b_2\dots \in\Bcal^*$ is equivalent to $\sum_{i=1}^\infty \frac{b_i}{2^i}$.\footnote{For simplicity of the reading, in this paper, we do not allow deterministic distributions, i.e., there exists $x\in (\Bcal^*)^D$ such that $\PP(x)=1$. In fact, to represent all values in $\QQ\cap [0,1]$ with precision $n$ with the value $1$ included, $n+1$ bits are necessary instead of $n$ bits.} The \emdef{ $(m,n)$-projection} $\PP^{(m,n)}: (\Bcal^m)^D \to \Bcal^n$ is defined by 
      \begin{equation}
        \PP^{(m,n)}(x)=
        \PP(x|_m)|_n
    \end{equation}
    where $x|_m$ denotes the $m$-length prefix of $x$. We call $m$ the \emdef{precision} of the variables in $\PP$, and $n$ the precision of the probability values in $\PP$.
\end{definition}
We leave the more formal definition of above, using the notion of the \textbf{cylinder}, to~\cref{sec:Discrete_proba}.

\begin{definition}\label{def:discrete_proba_space}
    Given the precision $m$, we denote $\Xcal:=\Bcal^m$. In this paper, we focus on probability distributions on the  space $\Xcal^d=(\Bcal^m)^d$, where any $(x_1,\dots x_d)\in \Xcal^d$ is a concatenation of $d$-many binary sequences of length $m$.
    We call a \emdef{multi-env system} a list of discrete distributions $(\PP^i)_{i \in [I]}$ on $\Xcal^d$. $I$ denotes the number of environments.\footnote{
Below, we use the terms ``distribution'' and ``multi-env system'' 
interchangeably, since the latter can be written as a distribution over $\Xcal \times [I]$. We are not given intervention targets, hence any variable can be the environment label.} Equivalently, we can also write a multi-env system on $\Xcal^{d-1}$ with less than $2^m$ environments as a discrete distribution on $\Xcal^d$.
\end{definition}
\begin{definition}\label{def:compute}
    A Turing machine $T$ is said to \emdef{compute} a function $f: A\subset \Xcal \to \Bcal^*$, if $T$ rejects any input outside $A$ 
    and for all $x\in A$, $T(x)=f(x)$. We define the \emdef{equivalence relation} $\sim$ between two Turing machines $S$ and $T$ if they compute the same function.
\end{definition}
We provide a more detailed introduction on computation theory in~\cref{sec:Computation}, and a more formal version of \cref{def:compute} in \cref{def:compute_formal}.
We now introduce algorithmic models that can compute certain probability distributions. For a review of related approaches we refer to Section~\ref{sec:related_work}.
\begin{definition}\label{def:prob_mech_map}
    A \emdef{probabilistic mechanism} is a Turing machine that computes
    the following discrete functions, which we call \emdef{probabilistic maps}: $f: \Bcal^{v_f} \times \Bcal^{c_f}\to \Bcal^{n}$. We call $v_f$ the bit length for the \emdef{value variable} of $f$, $c_f$ the bit length for the \emdef{conditional variable} of $f$.
    Value and conditional variables are just the placeholders in the input of the probabilistic mechanism. The output $b_1\dots b_n \in\Bcal^n$ is equivalent to a rational number in $[0,1)$, i.e. $\sum_{i=1}^n \frac{b_i}{2^i}$.

    For a set of probabilistic mechanisms $\Pcal$, we define its corresponding \emdef{probabilistic map set} $\overline{\Pcal}:= \Pcal / \mathord{\sim}$. 
\end{definition}
The key idea is that we do not sample from a distribution stochastically; we consider a probabilistic map as a deterministic function that maps points to values in $[0,1)$. A probabilistic mechanism is a Turing machine that computes such a function. In this paper, a \emdef{mechanism} always denotes a Turing machine instead of a function or distribution.

\begin{definition}\label{def:feature_mechanism}
    Given the dimension $d$ and precision $m$, we define the \emdef{feature mechanism set} $\Phi$ as a set of Turing machines that compute functions $\Xcal^d\to \Bcal^*$, which we call \emdef{feature maps}. We define its corresponding \emdef{feature map set} $\overline{\Phi} := \Phi / \mathord{\sim}$.
\end{definition}
\begin{figure}[t]
    \centering
    \includegraphics[width=\textwidth]{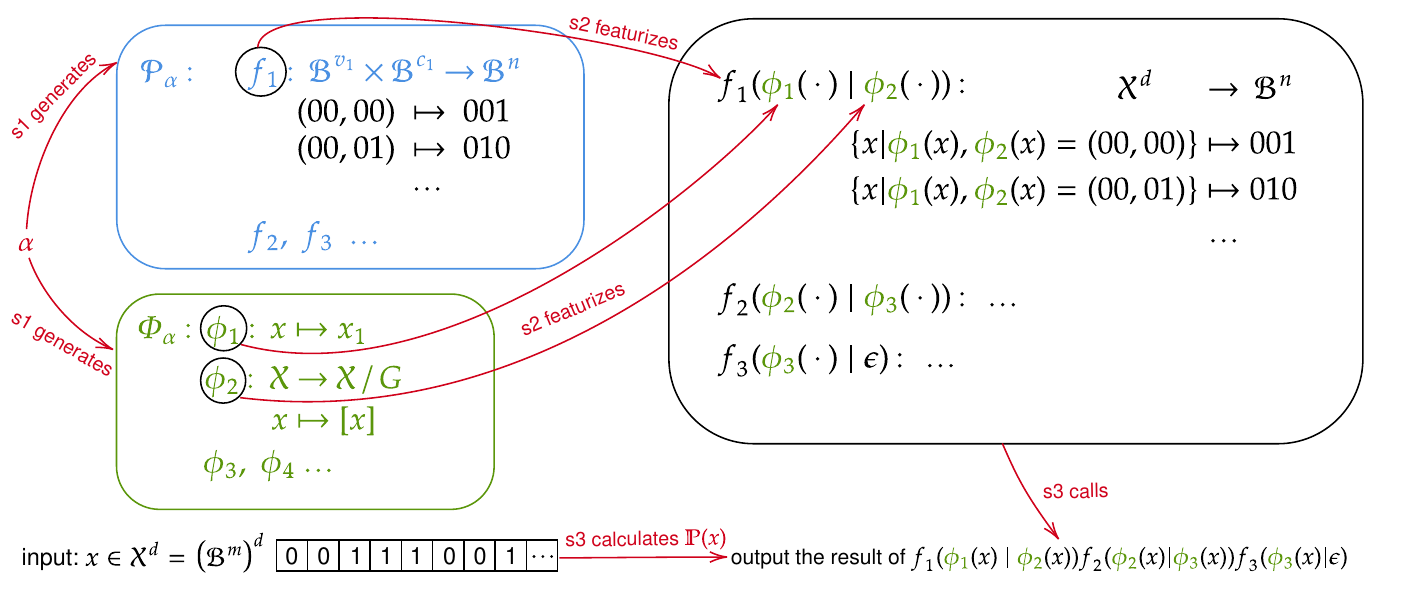}
    \caption{Illustration of a CFMP (\cref{def:cond_feat_mechanism_program}). A CFMP $\alpha$ is a Turing machine that sequentially proceeds in three steps in red given any input in $\Xcal^d$. Probabilistic mechanisms are blue and feature mechanisms are green. $\epsilon$ denotes the empty string. Before reading the input tape, $\alpha$ proceeds in two steps: generates $\Pcal_\alpha, \Phi_\alpha$, and featurizes the probabilistic mechanisms. In the third step, $\alpha$ multiplies the conditional probabilities it needs for calculating $\PP(x)$ and marginalizes over latent variables if there are hidden-variable mechanisms. We emphasize that this figure, or CFMP, is not a process of \textit{learning}, but just a \textit{model} in the model class where we proceed model selection.}

    \label{fig:CFMP}
\end{figure}
In this paper, we focus on one class of Turing machines that we term CFMPs, and we show in~\cref{sec:case_studies} that it aligns compression with causality and symmetry.

\begin{definition}\label{def:cond_feat_mechanism_program}
    For any discrete distribution $\PP$, a \emdef{conditional feature-mechanism program (CFMP)} $\alpha$ is a Turing machine that computes $\PP$ by doing the following steps (\cref{fig:CFMP}):

        \begin{enumerate}
            \item Without reading the input tape, $\alpha$ generates $\Pcal_\alpha$ and $\Phi_\alpha$ in the memory, where $\Phi_\alpha$ is a feature mechanism set, and $\Pcal_\alpha$ is a probabilistic mechanism set.
            \item Without reading the input tape, $\alpha$ featurizes the probabilistic mechanisms: for certain $f\in \Pcal_\alpha$, $\alpha$ selects\footnote{This selection procedure determines a function: $\{\text{featurized mechanisms generated in Step 2\}} \to \Pcal_\alpha$. Note that the bit length is not intrinsic to the function, but depending on the CFMP that computes this function.} certain $\phi:\Xcal^d\to \Bcal^{v_f}$ and $\psi:\Xcal^d\to \Bcal^{c_f}$ in $\Phi_\alpha$, and stores either the Turing machine which computes $x \mapsto f(\phi(x)|\psi(x))$ (termed ``no-hidden-variable mechanism'') or $(x,x') \mapsto f(\phi(x,x')|\psi(x,x'))$ (termed ``hidden-variable mechanism'').
            Namely, while $f$ maps  $\Bcal^{v_f} \times \Bcal^{c_f} \to \Bcal^n$, 
             the featurized mechanism $f_i$ maps $\Xcal^d \to \Bcal^n$ or $\Xcal^d \times \Xcal^d \to \Bcal^n$. 
            \item The set of featurized mechanisms is written in the memory. 
            For any input $x\in \Xcal^d$, $\alpha$ selects\footnote{In principle the selection of featurized mechanisms in Step 3 depends on $x\in\Xcal^d$. Namely, this selection procedure determines a function $\Xcal^d\to \{\text{featurized mechanisms generated in Step 2\}}$.\label{ftn:CFMP_select}} and computes the multiplication of a sequence of featurized mechanisms evaluated on $x$: 
            \begin{enumerate}
                \item if only no-hidden-variable mechanisms are selected for $x$:
                $$f_1(\phi_1(x)|\psi_1(x))\dots f_k(\phi_k(x)|\psi_k(x))$$
                for certain $k$, then compute and output the result.
                \item if there exist hidden-variable mechanisms in the formula
                $$f_1(\phi_1(x,x')| \psi_1(x,x'))\dots f_k(\phi_k(x,x')| \psi_k(x,x'))$$
                (where $x'$ is a formal placeholder in the formula without value assignment) for certain $k$, then marginalize over $x'\in \Xcal^d$ and output the result.
            \end{enumerate} 
    \end{enumerate}
\end{definition}

\begin{example}[multi-env CBN]
Consider a multi-env system: $\PP(X_1,X_2,E=1)=\PP(X_2|X_1)\PP(X_1|E=1)\PP(E=1)$, $\PP(X_1,X_2,E=2)=\PP(X_2|X_1)\PP(X_1|E=2)\PP(E=2)$. 

Now we give an example of CFMP $\alpha$ that computes this system. Step 1 in Def 9, $\alpha$ generates the following:
Probabilistic mechanisms (\cref{def:prob_mech_map}): $f_1(\cdot|\cdot)$, $f_2(\cdot|\cdot)$, $f_3(\cdot)$. 

Feature mechanisms (\cref{def:feature_mechanism}): $\phi_1: (x_1, x_2 ,e)\mapsto x_1$, $\phi_2: (x_1, x_2 ,e)\mapsto x_2$, $\phi_3: (x_1, x_2 ,e)\mapsto e$.

Step 2, $\alpha$ featurizes the mechanisms: $f_1 \mapsto f_1(\phi_2(\cdot)|\phi_1(\cdot))$, $f_2 \mapsto f_2(\phi_1(\cdot))$, $f_3 \mapsto f_3(\phi_3(\cdot))$.

Step 3, given any $z=(x_1,x_2,e)$, compute the probability value $$\PP(z)= f_1(\phi_2(z)|\phi_1(z))f_2(\phi_1(z))f_3(\phi_3(z)).$$

\end{example}

\begin{example}\label{eg:CFMP}
    The following examples show that the class of CFMPs can model a wide range of probabilistic models, which turn into classes of Turing machines instead of distributions. All of the examples except causal representation learning models use only no-hidden-variable feature mechanisms. In the following examples, $\PP$ denotes respectively a discrete (multi-env) system that the CFMP computes:
    \begin{enumerate}[itemsep=0pt]
        \item A \emdef{causal Bayesian network (CBN)} model computing $\PP$ on $\Xcal^d$ is a CFMP $\alpha$ such that
        \begin{enumerate}
            \item the feature map set is $\overline{\Phi_\alpha} \subset \{(\cdot)_A | A\subset 2^{[d]}\}$, projections over coordinates.
            \item In step 3, for all $x\in\Xcal^d$, $\alpha$ computes $\PP(x)$ using the same set of featurized mechanisms: $f_1(x_{A_1}|x_{B_1})\dots f_k(x_{A_k}|x_{B_k})$, such that $\bigsqcup_{i=1}^k A_i=[d]$ and $A_i \cap B_i = \emptyset \quad \forall i$ and $B_i \subset \cup_{j=1}^{i-1} A_j \quad \forall i$.
        \end{enumerate}
        \item A \emdef{context-specific Bayesian network} model~\citep{boutilier1996context} computing $\PP$ is a CFMP $\alpha$ that has the same definition as CBN except (b) in step 3 of CBN: for different points $x,x'\in\Xcal^d$, $\alpha$ is allowed to use different sets of featurized mechanisms depending on $x,x'$, see footnote \ref{ftn:CFMP_select}.
        \item A \emdef{causal representation learning} model computing $\PP$ on $\Xcal^d$ is a CFMP $\alpha$ such that
        \begin{enumerate}
            \item  in $\Pcal_\alpha$ there exists a probabilistic mechanism $f: \Xcal^d \times \Zcal^d \to \Bcal^n$, where $\Zcal^k:=(\Bcal^{m'})^k$ denotes a space for hidden variables, of arbitrary precision $m'$ and dimension $k$.
            \item There exists a feature mechanism $\psi: \Xcal^d \to \Zcal^k$ that featurizes $f$ into a hidden-variable mechanism $(x,x')\mapsto f(x | \psi(x')): \Xcal^d \times \Xcal^d\to \Bcal^n$.
            \item In step 3, for all input $x\in\Xcal^d$, $\alpha$ outputs $\sum_{x'\in\Xcal^d} f(x | \psi(x')) g(\psi(x'))$, where $g$ is any featurized mechanism, or a product of different mechanisms.
        \end{enumerate}
        \item A \emdef{$G$-invariant learning} model computing $\PP$ on $\Xcal^2$ is a CFMP $\alpha$ where $G$ is a group that acts on $\Xcal$ and\footnote{For simplicity of presentation, we give examples that compute one-dimensional invariant and equivariant models. One can easily generalize examples 4, 5 and 6 to higher dimensions.}
        \begin{enumerate}
            \item there exists $\phi: \Xcal \to \Xcal / G$ in $\Phi_\alpha$ partitioning the orbits of $\Xcal$ under the action of $G$. In addition, $\Phi_\alpha$ contains projections on two coordinates $\pi_1$ and $\pi_2$. $\Phi_\alpha$ also contains $\phi \circ \pi_1$.\\
            There exists $f_1,f_2\in\Pcal_\alpha$ such that $f_1(\pi_2(x)|\phi\circ \pi_1(x))=\PP(x_2|\phi(x_1))$, and $f_2(\pi_1(x))=\PP(x_1)$ for all $x\in\Xcal^2$.
            \item In step 2, $\alpha$ featurizes the probabilistic mechanisms: $f_1\mapsto f_1(\pi_2(\cdot)|\phi\circ \pi_1(\cdot))$, $f_2\mapsto f_2(\pi_1(\cdot))$, and $f_3\mapsto f_3(\pi_1(\cdot))$.
            \item In step 3, for all $x\in \Xcal^2$, $\alpha$ computes $\PP(x)=\PP(x_2|\phi(x_1))\PP(x_1)=f_1(\pi_2(x)|\phi\circ \pi_1(x)) f_2(\pi_1(x))$.
        \end{enumerate}
        \item A \emdef{transitive $G$-equivariant learning} model computing $\PP$ on $\Xcal^2$ is a CFMP $\alpha$ such that
        \begin{enumerate}
            \item in $\Pcal_\alpha$ there exists a probabilistic mechanism $f: \Xcal \to \Bcal^n$ modeling a function $x \mapsto\PP(\Tilde{x}|g.x)$ where $\Tilde{x}$ is an arbitrary fixed point in $\Xcal^d$. For each $(x,x')\in \Xcal \times \Xcal$, there exists $g\in G$ such that $x=g.\Tilde{x}$ and $\PP(x|x')=f(x|x')=f(g^{-1}.x|g.x')= f(\Tilde{x}|g.x')=\PP(\Tilde{x}|g.x')$.
            \\There exists $|G|$-many feature mechanisms in $\Phi_\alpha$ computing the group actions on $\Xcal$: for all $g\in G$, $\phi_g: x\mapsto g.x$.
            \item In step 2, $\alpha$ generates $|G|$-many featurized mechanisms using $f$ and $(\phi_{g^{-1}}, \phi_g)$ for all $g\in G$: $f \mapsto f(\phi_{g^{-1}}(\cdot) |\phi_g (\cdot))$. $\alpha$ also generates a featurized mechanism $h(\phi_g \circ \pi_1(\cdot))$ such that $\PP_1(\phi_g(x_1))=h(\phi_g \circ \pi_1(x))$
            \item In step 3, $\alpha$ only uses those featurized $f(\phi_{g^{-1}}(\cdot) |\phi_g (\cdot))$ for computing the value variable $y\in\pi_2(\Xcal^2)$: for all $x\in\Xcal^2$, there exists a featurized mechanism $h$ and $g\in G$ such that \\
            $\PP(x)= \PP_2(\phi_{g^{-1}}(x_2)|\phi_g(x_1))\PP_1(\phi_g(x_1))
            =f(\phi_{g^{-1}}\circ \pi_2(x)|\phi_g \circ \pi_1(x)) h(\phi_g \circ \pi_1(x))$.
        \end{enumerate}
        \item A \emdef{$G$-equivariant learning} model computing $\PP$ on $\Xcal^2$ is a CFMP $\alpha$ that has the same definition as the transitive $G$-equivariant learning model except that in (a):
        \begin{enumerate}
            \item in $\Pcal_\alpha$ there exists a probabilistic mechanism $f: \Xcal / G \times \Xcal \to \Bcal^n$ where each element in $\Xcal / G$ is a fixed representative element of an equivalence class.\footnote{Different from the common usage of the quotient in algebra, here each representative in $\Xcal / G$ is a given point in $\Xcal$.} For each $(x,x')\in \Xcal \times \Xcal$, there exists $g\in G$ and $\Tilde{x}\in\Xcal / G$ such that $x=g.\Tilde{x}$ and $\PP(x|x')=f(x|x')=f(g^{-1}.x|g.x')= f(\Tilde{x}|g.x')=\PP(\Tilde{x}|g.x')$.
        \end{enumerate}
        \item A \emdef{statistical density estimator} computing $\PP$ on $\Xcal^d$ is a CFMP $\alpha$ that
        \begin{enumerate}
            \item in $\Pcal_\alpha$ there is only one Turing machine, which computes the distribution $\PP: \Xcal^d\to \Bcal^n$.
            \item In $\Phi_\alpha$ there is only one Turing machine, which computes identity $\textrm{Id}: \Xcal^d \to \Xcal^d$.
            \item In step 2, $P$ is featurized trivially by identity feature map, so $\PP$ is not changed.
            \item In step 3, for all $x\in \Xcal^d$, $\alpha$ outputs $\PP(x)$.
        \end{enumerate}
    \end{enumerate}
\end{example}
Now we concretize the general definition of algorithmic causality (\cref{def:algo_causality_informal}) for a class of Turing machines, i.e., CFMPs. First, let us focus on the class of CFMPs that compute only CBNs. This is the most familiar case to the readers in causality.
\begin{definition}[Algorithmic causality in CBN]\label{def:algorithmic_causality_CBN}
    Given a class of CFMPs that compute the causal Bayesian network models, for $A, B \subseteq [d]$ with $A\cap B=\emptyset$, we say that according to a model selection method\footnote{Such as minimizing the finite codebook complexity, which we introduce in~\cref{sec:learn_compression} and~\cref{def:FC_complexity}.}, \emdef{$X_A$ algorithmically causes $X_B$ locally at $x\in \Xcal^d$} if the method selects a CFMP $\alpha$ such that, in the third step in $\alpha$ the featurized mechanisms for $x$ include one mechanism $f(X_B|X_A)$ in which $X_B$ is a set of value variables and $X_A$ is a set of conditional variables (\cref{def:prob_mech_map}), and do not include mechanisms with the opposite direction $f(X_A|X_B)$.
    Similarly, we say that \emdef{$X_A$ algorithmically causes $X_B$} if the above assumption holds for all $x\in \Xcal^d$.
    \end{definition}

We observe that the causal variables $X_i$ are in fact $\pi_i(X)$, the $i$-th projection over $X$. We generalize the definition above to the case of arbitrary feature map:
\begin{definition}[Algorithmic causality]\label{def:algorithmic_causality_CFMP}
    Given a class of CFMPs, we say that according to a model selection method, \emdef{$\phi(X)$ algorithmically causes $\psi(X)$ locally at $x\in \Xcal^d$} if the method selects a CFMP $\alpha$ such that, in the third step in $\alpha$ the featurized mechanisms for $x$ include one mechanism $f(\psi(X)|\phi(X))$ in which $\psi(X)$ is a value variable and $\phi(X)$ is a conditional variable (\cref{def:prob_mech_map}), and do not include mechanism with the opposite direction.
    Similarly, we say that \emdef{$\phi(X)$ algorithmically causes $\psi(X)$} if the above assumption holds for all $x\in \Xcal^d$.
    
    Moreover, if the class of CFMP allows hidden-variable mechanisms (\cref{def:cond_feat_mechanism_program}, 2.) in certain CFMPs and if the method selects a CFMP $\alpha$ such that in the third step in $\alpha$ the featurized mechanisms for $x\in\Xcal^d$ include one hidden-variable mechanism $(x,x')\mapsto f(\phi(x)|\psi(x'))$ and do not include the mechanisms with the opposite direciton, then we say that the hidden variable $\psi(X')$ algorithmically causes $\phi(X)$ locally at $x\in\Xcal^d$, and $\psi(X')$ algorithmically causes  $\phi(X)$ if it holds for all $x\in\Xcal^d$.
\end{definition}
For example, in causal representation learning we are typically interested in the latent causal graph, which is in our language the causal relationships between $(\pi_i \circ \psi(X'))_{i\in [k]}$, where $k$ denotes the dimension of the latent space. Our definition of causality is flexible enough to model the causal relationships between \textit{imagined} variables, i.e. between $\phi(X)$ and $\psi(X)$ with $\phi,\psi$ being any feature mechanisms. Different from the measure-theoretic foundation of interventional causality \citep{park2023measure}, we do not need to predetermine the causal variables and their spaces. Our choice of defining causal variables can emerge from the model selection through the choice of feature mechanisms. In fact, the dimension $d$ and precision $m$ in $\Xcal$ are only used to illustrate the abstract concepts of \cref{def:cond_feat_mechanism_program}; equivalently, one can replace $\Xcal^d$ by a binary input tape of arbitrary fixed length, since in a computer or UFCC there are only processes of binary variables at the basic level.
\section{Learning algorithmic causality by compression}\label{sec:learn_compression}

In~\cref{sec:algo_causality}, we introduced a computational model that can compute a wide range of probabilistic models. Recall that in our problem setting, no identifiability beyond the Markov equivalence class is possible for probabilistic models. However, model selection beyond the Markov equivalence class is possible for our computational model, e.g., based on the coding length of the model. This model selection is crucial because it determines algorithmic causations (Definition~\ref{def:algorithmic_causality_CFMP}).
Since the goal of identifiability is to justify that the model that minimizes the multi-env cross-entropy should be the preferred model (in that case, the ground truth), we generalize that principle:
\begin{principle}\label{priniple:compression}
    Given (multi-env) datasets, the preferred (causal) models to select (no matter algorithmic or probabilistic) are the ones that minimize the bit length of a sender's message enabling the receiver to reconstruct the multi-env datasets.
\end{principle}
Notice that this principle holds in classical causal discovery whether the model class is identifiable or not because the model that minimizes cross-entropy also maximizes likelihood, which is preferred over those that are less likely.
As discussed in~\cref{sec:comment_entropy}, entropy is the minimal average data-to-model coding length \textit{given a codebook}, therefore it is not the \textit{overall} bit length needed by a sender. Algorithmic causality makes the codebook length unignorable because the length of CFMP is upper boundable (sometimes computable) and unignorable when the data is finite. 
The rigorous \textit{overall} bit length that a sender needs for lossless encoding of a data sequence $x$ is called \textit{Kolmogorov complexity} $C(x)$. 
\subsection{Kolmogorov complexity}\label{sec:KC}
We review Kolmogorov complexity, which led to the idea of two-part code~\citep{li2019introduction}. This inspired our idea of finite codebook complexity. We leave some definitions in computation theory in \cref{sec:Computation}.
\begin{definition}\label{def:KC}\citep{kolmogorov1968three, li2019introduction}
    (First version) For any $x\in \NN$, the \emdef{Kolmogorov complexity} of $x$ w.r.t. the universal Turing machine $U$ is defined as
    \begin{equation}
        C_U(x)=\min_{T\in \text{\{Turing machines\}}} \{l_U(T)| U(T)=x\}
    \end{equation}
    where $l_U$ is a mapping from the class of all Turing machines to $\NN$ such that for each $n\in \NN$ there are less than $2^n$ Turing machines $T$ such that $l_U(T)\leq n$.

    (Second version) equivalently, the \emdef{Kolmogorov complexity} of $x$ w.r.t. the universal Turing machine $U$ can also be defined as
    \begin{equation}
        C_U(x)=\min_{n\in\NN} \{l_U(n)| U(n)=x\}
    \end{equation}
    where $l_U$ is a monotonically increasing map from $\NN \to \NN$ such that for any $n\in \NN$, $l_U(2^n) \leq n$. 
    \end{definition}
In the second version of the definition, the input of $l_U$ is not a Turing machine, but an index of a Turing machine. It is important to note that each universal Turing machine (UTM) defines a computable bijective mapping $\NN \to \{\text{Turing machines}\}$: $U(0)=T_0, U(1)=T_1, \dots$,\footnote{By convention, $U(0)$ is often defined as $U$ itself \citep{li2019introduction}.} which is called an \emdef{effective enumeration of TMs}. A UTM does not have to take the literal description of a Turing machine as input. There exists a UTM $U$ for which a TM $T$ with 5 states has the length $l_U(T)=1$. In \cite{li2019introduction} and much of the literature, people use $l$ instead of $l_U$, which can be somewhat confusing because they implicitly assume that readers are aware that each UTM defines a different effective enumeration over all TMs. In this paper, we use $l$ to denote the \emdef{literal length function} that maps a natural number $n$ (or its equivalent binary string $B(n)$, see \cref{def:binary_lexico_code}) to $\lfloor \log_2 (n + 1) \rfloor$, and we use $l_U$ in both versions, which can be distinguished automatically by their input.
\begin{lemma}\citep{kolmogorov1968three, li2019introduction}\label{lem:UTM_optimal}
    There is an additively optimal universal Turing machine $U$, i.e. for all UTM $V$ and all x, $C_U(x)\leq C_V(x) + O(1)$.
\end{lemma}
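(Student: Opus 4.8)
The plan is to prove this classical invariance theorem by explicitly building an additively optimal reference machine $U$ from an effective enumeration of all Turing machines, so that $U$ can simulate any competitor $V$ at a cost that is constant in $x$. The whole argument turns on separating two contributions to a program's length: the (constant) cost of naming which machine to run, and the cost of the program fed to that machine.

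First I would fix an effective enumeration $(T_i)_{i\in\NN}$ of all Turing machines, together with a self-delimiting (prefix-free) encoding $i\mapsto \bar{i}\in\Bcal^*$ of the indices; the only properties I need are that $\bar{i}$ can be parsed off the front of any binary string unambiguously and that $|\bar{i}|$ is a finite constant for each fixed $i$. I would then define $U$ by: on a binary input, read and decode the prefix $\bar{i}$, recover $i$, and simulate $T_i$ on the remaining suffix $p$, emitting $T_i(p)$ (and rejecting whenever $T_i$ rejects). By construction $U(\bar{i}\,p)=T_i(p)$ for all $i$ and $p$, so $U$ is universal, and I would check that the induced length function $l_U$ meets the counting condition of \cref{def:KC}, which follows from the structure of the self-delimiting encoding.

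Next I would establish optimality. Let $V$ be any universal Turing machine. Being a Turing machine, $V$ occurs in the enumeration as $V=T_{n_V}$ for some fixed index $n_V$. Fix $x$ and let $p^\ast$ be a $V$-minimal program, so $V(p^\ast)=x$ and $l_V(p^\ast)=C_V(x)$. Then $U(\bar{n_V}\,p^\ast)=T_{n_V}(p^\ast)=V(p^\ast)=x$, and this program for $U$ has length $|\bar{n_V}|+l_V(p^\ast)=C_V(x)+c_V$ with $c_V:=|\bar{n_V}|$ depending only on $V$. Minimizing over programs for $U$ yields $C_U(x)\leq C_V(x)+c_V=C_V(x)+O(1)$, as claimed; the same argument applies verbatim in the index formulation (the second version of \cref{def:KC}).

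The main obstacle is making precise that the overhead is genuinely additive and uniform in $x$, rather than something growing with $x$ or with the program length. This rests on two points: (i) the self-delimiting prefix $\bar{i}$ lets $U$ split its input into ``which machine'' and ``its program'' without any separator whose size scales with $|p|$; and (ii) once $V$ is fixed, its index $n_V$ is fixed, so $c_V=|\bar{n_V}|$ is a true constant absorbed into the $O(1)$. A minor secondary point is the verification of the counting condition on $l_U$, which I would dispatch from the injectivity of $\bar{i}\,p\mapsto(i,p)$ together with the prefix-freeness of the index code.
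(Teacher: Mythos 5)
Your proposal is correct and follows essentially the same route as the paper: build $U$ so that it parses a self-delimiting encoding of an index $n$ off the front of its input and simulates $T_n$ on the remainder, then note that any UTM $V$ appears as some $T_{n_V}$, so the overhead is the fixed cost $|\bar{n_V}|$ of naming $V$ (the paper instantiates this with the concrete code $1^{l(n)}0\,B(n)$, giving $c_{U,V}=2l(n)+1$). The only difference is that you keep the prefix code abstract where the paper writes it out explicitly; the argument is otherwise identical.
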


\begin{proof}
    We will construct a UTM $U$. It needs inputs in the form
    $$\langle n,p\rangle=\underbrace{11\dots1}_{\begin{array}{c}l(n)\text{ times}\\\end{array}}\quad0~B(n) B(p)$$
    where \emdef{$\langle \cdot, \cdot \rangle: \NN\times \NN \to \Bcal^*$ is the self-delimiting concatenation (\cref{def:self_delim_code})}\footnote{Its use is simply to ensure that the concatenated numbers $(np)$ are uniquely identified.}, and $B(\cdot)$ is the \emdef{binary lexicographic code} defined in \cref{def:binary_lexico_code}.
    $U$ lists all TMs $T_0, T_1, T_2, \dots$. Define $U(\langle n,p\rangle):= T_n(p)$. Namely, the UTM $U$ first receives the self-delimiting code for $n$, then it simulates the Turing machine $T_n$, on the input number $p$.

    For any UTM $V$, there exists $n$ such that $V=T_n$. Suppose $p^*\in \argmin_{p\in \NN}\{l(p): T_n(p)=x\}$, then $C_U(x)\leq l(\langle n,p^*\rangle) = 2 l(n)+1 + l(p^*) = 2 l(n)+1 + C_{T_n}(x) = c_{U,V} + C_V(x)$ where $c_{U,V}:= 2 l(n)+1$ only depends on $U$ and $V$.

\end{proof}
This Lemma implies that for all such additively optimal TM, we can choose any of them with a coding length different by $O(1)$.

\begin{definition}\citep[Def. 2.1.1]{li2019introduction}
    Given an additively optimal UTM $U$, given $x,y\in \NN$, the conditional Kolmogorov complexity $C_U(x|y)$ is defined as 
\begin{equation}
    C_U(x|y)=\min_{p\in \NN}\{l(p): U(\langle y,p \rangle)=x\}
\end{equation}
\end{definition}

\begin{lemma}[Kolmogorov complexity can be written as two-part code~\citep{li2019introduction}]\label{lem:two-part_code}

Suppose $U$ is an additively optimal UTM taking inputs $\langle n,p\rangle$ as in the proof of~\cref{lem:UTM_optimal}, then
\begin{align}
C_U(x)&=\min\{2l_U(T) +l_U(p):T(p)=x\}\\
& =\min\{2l_U(T) +C_U(x|T):T\in\{T_0,T_1,\ldots\}\}
\end{align}
where $2l_U(T) (+1)$ is the binary bit length of the \emdef{self-delimiting code} (\cref{def:self_delim_code}) of $T$ in the effective enumeration of $U$.
\end{lemma}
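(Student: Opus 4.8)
The plan is to unpack the explicit input format of the particular $U$ built in the proof of \cref{lem:UTM_optimal}, for which every admissible program is a self-delimiting pair $\langle n,p\rangle$ with $U(\langle n,p\rangle)=T_n(p)$; the two claimed identities are then essentially a rewriting of the first-version definition of $C_U$ in terms of this format. First I would start from $C_U(x)=\min\{l(q):U(q)=x\}$ ranging over program strings $q$, and note that $U$ rejects any string not of the shape $\langle n,p\rangle$. By the unique decodability of the self-delimiting concatenation (\cref{def:self_delim_code}), each admissible $q$ is in bijection with a unique pair $(n,p)$, so minimizing over programs $q$ is the same as minimizing over pairs $(T_n,p)$ subject to $T_n(p)=x$.

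Next I would compute the length of an admissible program. Since $\langle n,p\rangle=\underbrace{1\cdots1}_{l(n)}\,0\,B(n)B(p)$, its literal length is $l(\langle n,p\rangle)=2\,l(n)+1+l(p)$, which splits cleanly into an index part and an input part. Writing $l_U(T_n)=l(n)$ for the index length (so that $2\,l_U(T_n)+1$ is exactly the length of the self-delimiting code of $T_n$) and $l_U(p)=l(p)$ for the literal length of the input, substitution into the definition yields the first equality,
\begin{equation}
C_U(x)=\min\{2\,l_U(T)+l_U(p):T(p)=x\}
\end{equation}
up to the single separator bit.

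For the second equality I would split the joint minimization over $(T,p)$ into an outer minimization over $T$ and an inner one over $p$, giving $C_U(x)=\min_T\bigl(2\,l_U(T)+\min_p\{l_U(p):T(p)=x\}\bigr)$. The inner term matches the definition of conditional complexity: because $U(\langle n,p\rangle)=T_n(p)$, the constraint $U(\langle n,p\rangle)=x$ is literally $T(p)=x$, so $\min_p\{l(p):T(p)=x\}=C_U(x\mid T)$. This produces $C_U(x)=\min\{2\,l_U(T)+C_U(x\mid T):T\in\{T_0,T_1,\ldots\}\}$.

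This lemma is really a bookkeeping statement once the construction of $U$ is fixed, so I do not expect a deep obstacle; the one point requiring care is the exact length accounting, namely that the self-delimiting code of $T$ contributes $2\,l_U(T)+1$ rather than $2\,l_U(T)$, a constant the statement absorbs into its ``$(+1)$'' notation and which affects neither identity. A secondary check is that the minima are over nonempty sets and are attained, which holds because for every finite $x$ there is some $(T,p)$ with $T(p)=x$ (e.g.\ a print-$x$ machine), so the two-part code is finite.
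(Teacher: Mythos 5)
Your proposal is correct and follows essentially the same route as the paper's proof: both unpack the optimal program $q^*=\langle n^*,p^*\rangle$ of the specific $U$ from \cref{lem:UTM_optimal}, compute $l(\langle n,p\rangle)=2l(n)+1+l(p)$, and identify the inner minimization over $p$ with $C_U(x\mid T)$. Your version is slightly more explicit about the bijection between admissible programs and pairs $(T_n,p)$ and about the second equality, but there is no substantive difference.
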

\begin{proof}(Adapted from \cite[2.1.1]{li2019introduction})
     By definition, $C_U (x)=\min\{l(q): U(q)=x\}$, with $q=\langle n,p\rangle$ as in \cref{lem:UTM_optimal}. If $q^*$ is a number such that $l(q^*) = C_U(x)$, then there exists $n^*, p^*$ s.t. $C_U (x) = l(q^*) = 2l(n^*) +1 + l(p^*) = 2l_U(T_{n^*})+1+ l(p^*)$ and
     $T_{n^*}(p^*)=x$, where $l_U(T_{n^*})=l(n^*)$ is the length of the index of the Turing machine $T_{n^*}$ in $U$ and its self-delimiting coding length is $2l(n^*)+1$ according to \cref{def:self_delim_code}.
\end{proof}

Intuitively, the $l_U(T)$ part of the code squeezes out the regularities in x. $C_U(x|T)$ is irregularities, or random aspects, of x relative to that Turing machine. Since most strings are algorithmically random or incompressible \citep[Thm. 2.2.2]{li2019introduction}, minimum cross-entropy is often the shortest coding length for a sequence $x$ \textit{given} the distribution computed by $T$ that achieves the minimum cross-entropy \citep[Thm. 8.1.2]{li2019introduction}. This idea has led to the two-part code objective in statistical inference \citep{grunwald2007minimum}, where the irrgularity part is upper bounded by the Shannon code of an iid sequence $c(x_1\dots x_n):=\sum_{i=1}^n -\log \PP(x_i)$ \textit{given a codebook}, but how to bound the regularity part, $l_U(T)$, to our knowledge, has not been discussed in previous literature. 
In the following, we give an upper bound of Kolmogorov complexity and the regularity part $l_U(T)$ in the discrete finite sample space.

\subsection{Finite codebook complexity}\label{sec:FC_complexity}

After decomposing the Kolmogorov complexity of a multi-env dataset $C_U(x_1, \dots x_n)$ into a two-part code $l_U(T)+C_U(x_1, \dots x_n|T)$, there is still an uncomputable part $l_U(T)$.  We can construct an upper bound of $l_U(T)$ by constraining the Turing machine class, without constraining the distribution class or codebook class that our Turing machines can compute.
\begin{definition}\label{def:FC_FCM}
    We say that an injective function $g: \Acal\subset \Xcal^d \to \Bcal^*$ is a \emdef{finite codebook} if the set $g (\Acal)$ is prefix-free. We say that a Turing machine $T$ is a \emdef{finite coding mechanism (FCM)} if it computes a finite codebook.
\end{definition}

\begin{definition}\label{def:UFCC}
    Given the dimension $d$ and precision $m$ of $\Xcal^d$, we say that a Turing machine $V$ is a \emdef{universal finite codebook computer (UFCC)}\footnote{For simplicity of presentation we do not set $(m,d)$ as part of the input in a UFCC, while in the general sense, a UFCC should take $m,d,k,p$ as input. Suppose we are given a recursively enumerable set of FCMs for each $(m,d)$, then we can construct a UFCC inputting the self-delimited $(m,d,k,p)$, i.e. $\langle m, \langle d, \langle k,p \rangle\rangle\rangle\rangle$.} if
    \begin{enumerate}
        \item $V$ takes input $\langle k,p \rangle$, where $k$ is the index of an FCM in a decidable set\footnote{Namely, the set is recursively enumerable (r.e.) and its complement set in the set of all Turing machines is also r.e.} of FCMs; $p$ is a natural number, which is equivalent to a binary string $B(p)$ (see \cref{def:binary_lexico_code}); same as \cref{lem:two-part_code}, $\langle \cdot, \cdot \rangle$ is the self-delimiting concatenation.
        \item for any finite codebook $g: \Acal\subset \Xcal^d\to \Bcal^*$, there exists $k$ such that $V(\langle k, \cdot \rangle)$ is an FCM computing $(g^*)^{-1}$, which is a partial (i.e., not everywhere defined) function $\Bcal^* \to \Acal^*$, and $g^*$ is the extension of the codebook $g$ (see \cref{def:extension_codebook}, $(g^*)^{-1}$ is well-defined because $g$ is prefix-free);
        \item $V(\langle k,p \rangle)$ is computed by decoding the binary string $B(p)$ using $V(\langle k, \cdot \rangle)$.
    \end{enumerate}

\end{definition}

\begin{definition}\label{def:FC_complexity}
    Given a UFCC $V$, for any finite codebook $g$, the \emdef{finite codebook (FC) complexity} is defined as
    \begin{equation}
        C^{FC}_V(x_1\dots x_n) := \min_{(k,p)\in\NN^2} \{l(\langle k,p \rangle): V(\langle k,p \rangle)=x_1\dots x_n\}.
    \end{equation}
\end{definition}

We now first apply~\cref{lem:UTM_optimal} to upper bound $C_U(x_1\dots x_n)$ by $C^{FC}_V(x_1\dots x_n)$, and then split $C^{FC}_V(x_1\dots x_n)$ into a two-part code using the idea similar to~\cref{lem:two-part_code}.

For any additively optimal (\cref{lem:UTM_optimal}) UTM $U$ and any UFCC $V$, using the same argument as in the proof of \cref{lem:UTM_optimal}, there exists $n$ such that $V=T_n$, so we can bound the Kolmogorov complexity $C_U$ by FC complexity $C^{FC}_V$:
\begin{equation}
    C_U(x_1\dots x_n) \leq C^{FC}_V(x_1\dots x_n) + O(1).
\end{equation}
Replace $C^{FC}_V$ by \cref{def:FC_complexity},
\begin{align}
    C_U(x_1\dots x_n) &\leq \min_{k,p} \{l(\langle k,p \rangle): V(\langle k,p \rangle)=x_1\dots x_n\} + O(1)\\
    &= \min_{T,p}\{2l_V(T) + l(p): T(p)= x_1\dots x_n\} + O(1)
    \label{eq:two-part-objective}
\end{align}
where $2l_V(T) (+1)$ is the self-delimiting code length (\cref{def:self_delim_code}) of the FCM $T$ in the effective enumeration (\cref{def:KC}) of FCMs according to $V$, and $l(p)$ is the literal length of the binary code that $T$ takes as input and thereby outputs $x_1\dots x_n$.

Given a UFCC $V$, given any finite codebook $g: \Xcal^d\to \Bcal^*$ and a number $p$ such that $(g^*)^{-1}\circ B(p)=x_1\dots x_n$, there exists a FCM $T$ that computes $(g^*)^{-1}$, and thus $C_U(x_1\dots x_n)$ is upper bounded by $2l_V(T) + l(p)$. In particular, if we believe that the data $x_1\dots x_n$ is iid sampled from a (multi-env) system or discrete distribution, then we should use a Shannon codebook (i.e. the codeword length for $x$ is $-\log \PP_{\hat{\theta}}(x)$) and a Shannon codeword $p$ (i.e. $p=g(x_1)\dots g(x_n)$), where $\hat{\theta}$ is the maximum likelihood estimator in the class of all the discrete distributions supported in $\Xcal^d$. This is justified by Shannon's source coding theorem: a shortest $p$ when $n\to\infty$ is $g(x_1)\dots g(x_n)$.

There is a trade-off between $l_V(T)$ and $l(p)$: if $n\to \infty$ then $l(p)\to nH(\PP_{\hat{\theta}})$ dominates. In the finite data case $l_V(T)$ (codebook part) is not negligible.

 \vspace{-.05cm}
\begin{definition}\label{def:huffman_mechanism}
    Given $\Acal\subset \Xcal^d$, We say that a Turing machine $\gamma$ is a \emdef{Huffman coding program} if given  any discrete finite distribution $\PP$ supported on $\Acal$ as input it outputs a Turing machine that computes a Huffman code (\cref{def:Huffman_code}) of $\PP$.
\end{definition}

    \vspace{-.3cm}
\begin{example}\label{eg:UFCC}
    Some examples of UFCC:
    \begin{itemize}[itemsep=1pt,topsep=3pt]
        \item There exists a Turing machine $V_\Ccal$ that takes any element in a finite class $\Ccal$ of CFMP as input and combines it with a Huffman coding program to compute a Huffman code, then decodes an integer $p$ which is equivalent to an input binary sequence $B(p)$.
        \item There exists a Turing machine $V_1$ that takes a (multi-env) discrete distribution table as input and combines it with a Huffman coding program to compute a Huffman code, then decodes $p$.
        \item There exists a Turing machine $V_2$ that inputs any FCM as a table and then decodes $p$ using that FCM.
    \end{itemize}
    Given precision $(m,n)$ for $\Xcal^d$, any UFCC can simulate any finite codebook of this precision. By~\cref{cor:correpondance_proba_code}, each prefix code corresponds to a unique probability semi-measure, therefore any UFCC can express any probability measures in $\Xcal^d$ in precision $(m,n)$.
\end{example}
\begin{example}
Note that the following Turing machine is not a UFCC:

    Given $\Xcal^d$ of precision $m$, $V_{\text{halt}}$ inputs two integers:
    \begin{itemize}
        \item an index $k$ of any Turing machine that halts at any point in $\Xcal^d$ and computes a finite codebook;
        \item an integer $p$ which is equivalent to a binary sequence as codeword.
    \end{itemize}
The reason is a direct reduction from the halting problem (\cref{sec:proof_undecidable}):
\begin{restatable}{lemma}{FCMundecidable}\label{lem:FCMundecidable}
    Given $l\in \NN$, the set of Turing machines that halt at any input of length $l$ and compute a finite codebook is undecidable.
\end{restatable}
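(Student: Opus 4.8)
The plan is to prove undecidability by a many-one reduction from the halting problem. Write $S_l$ for the set in question, i.e.\ the set of Turing machines that halt on every input of length $l$ and compute a finite codebook (in the sense of \cref{def:FC_FCM} and \cref{def:compute}). Suppose, towards a contradiction, that $S_l$ were decidable, say by a decider $D$. I will construct, effectively from an arbitrary instance $(M,w)$ of the halting problem, a Turing machine $T_{M,w}$ such that $M$ halts on $w$ if and only if $T_{M,w}\in S_l$. Feeding the description of $T_{M,w}$ to $D$ would then decide whether $M$ halts on $w$, contradicting the undecidability of the halting problem.

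The construction is as follows. Fix once and for all a concrete finite codebook $g$ on the inputs of length $l$, for instance the self-delimiting encoding $x\mapsto \bar{x}$ (\cref{def:self_delim_code}), which is injective and has prefix-free image. On any input $x$, the machine $T_{M,w}$ first simulates $M$ on $w$, \emph{ignoring} $x$; if and only if that simulation halts, $T_{M,w}$ proceeds to compute and output $g(x)$ (rejecting inputs outside the intended domain). The crucial design point is that this first phase does not depend on $x$, so its termination behaviour is identical for every input. The assignment $(M,w)\mapsto T_{M,w}$ is computable by a standard effective (s-m-n) construction.

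It remains to verify the equivalence. If $M$ halts on $w$, then for every input $x$ the first phase terminates and $T_{M,w}$ outputs $g(x)$; hence $T_{M,w}$ halts on all length-$l$ inputs and computes the finite codebook $g$, so $T_{M,w}\in S_l$. Conversely, if $M$ does not halt on $w$, then the first phase diverges on every input, so $T_{M,w}$ halts on no input whatsoever, in particular not on length-$l$ inputs, and therefore $T_{M,w}\notin S_l$ irrespective of the finite-codebook clause. This gives $M \text{ halts on } w \iff T_{M,w}\in S_l$, so $D$ decides the halting problem, and the contradiction establishes that $S_l$ is undecidable.

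The reduction is direct, so there is no deep obstacle; the only points needing care are (i) exhibiting a concrete $g$ that is genuinely a finite codebook (injective with prefix-free image) so that the halting case really lands in $S_l$, and (ii) keeping the simulation of $M$ on $w$ independent of the input $x$, so that in the divergent case the machine uniformly violates the ``halts on every length-$l$ input'' clause. Note that it is precisely this halting clause, rather than the finite-codebook clause, that carries the undecidability, which is exactly why the machine $V_{\text{halt}}$ described just above fails to qualify as a UFCC.
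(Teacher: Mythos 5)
Your proposal is correct and follows essentially the same route as the paper: a reduction from the halting problem via a machine that first simulates $M$ on $w$ (independently of its own input) and, upon termination, computes a fixed finite codebook. The only difference is cosmetic — the paper uses the trivial codebook $0\mapsto 0$ on a single input, while you spell out a prefix-free codebook on all length-$l$ inputs, which handles the ``halts on every input of length $l$'' clause slightly more explicitly.
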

Therefore, it is necessary to constrain the class of FCMs when defining UFCC, while not constraining the class of finite codebooks or probability distributions that FCMs can compute.
\end{example}

Given the precision $(m,n)$ and dimension $d$, given a multi-env dataset on $\Xcal^d$, we can minimize the upper bound of FC complexity \cref{eq:two-part-objective} in a class of Turing machines or CFMPs. 
For many UFCCs, computing FC complexity is very hard. 
By \cref{cor:correpondance_proba_code},there is a one-to-one correspondence between code length functions and probability distributions.
For those UFCCs that simulate a CFMP followed by a \textit{fixed} Huffman coding program, it suffices to calculate the bit length needed in the CFMP part in order to perform model selection. In \cref{sec:case_studies} we compare the bits needed by different CFMPs under a given UFCC.

\subsection{Comparisons among UFCCs}\label{sec:compare_UFCC}
Are some UFCCs better than others? 
We are not interested in finding the UFCC that achieves the smallest FC complexity for all data, because it is often not computable. Instead, we are interested in finding some UFCCs that are good at model selections, i.e. such a UFCC should not consider all FCMs to be equally preferable to select. 

For the Kolmogorov complexity, the choice of UTM is not important because all the additively optimal UTMs are equivalent up to $O(1)$ (\cref{lem:UTM_optimal}). For UFCC it is not the case.

Consider the following extreme example: a UFCC $U_{\text{unif}}$ first takes $m,d,n$ as input where $m$ denotes the precision of $\Xcal$, i.e. $|\Xcal|=2^m$, and $d$ denotes the number of variables, and $n$ denotes the precision of discrete distribution values. Let $U_{\text{unif}}$ encode any distribution by a table, where each row is a distribution value for a point in $\Xcal^d$. Suppose the rows are well ordered so we do not need to encode the points for simplicity. Then $U_{\text{unif}}$ only needs to encode $(2^{m})^{d}$ numbers, with each number occupying $n$ bits. After coding the distribution values, $U_{\text{unif}}$ uses a Huffman coding program to turn it into a prefix code.\footnote{Notice that $U_{\text{unif}}$ does not save the Huffman code on each point in $\Xcal^d$, instead, it saves the distribution value on each point. The program that inputs a distribution and outputs a Huffman code is constant w.r.t. the distribution.} 
Therefore, for any codebook $g$, any binary codeword sequence $B(p)$ is decoded by FCMs with the same model length, i.e. same $l_{U_{\text{unif}}}(T)$ in \cref{eq:two-part-objective}. For any $(m,n,d)$, $U_{\text{unif}}$ gives a uniform prior over all codebooks that are Shannon codes of a distribution on $\Xcal^d$ with precision $(m,n)$. Using $U_{\text{unif}}$ as UFCC, the objective~\cref{eq:two-part-objective} is equivalent to maximum likelihood. Namely, from the perspective of $U_{\text{unif}}$, no codebook is simpler or more preferable than another.
\begin{figure}[ht]
    \centering
    \includegraphics[width=.9\linewidth]{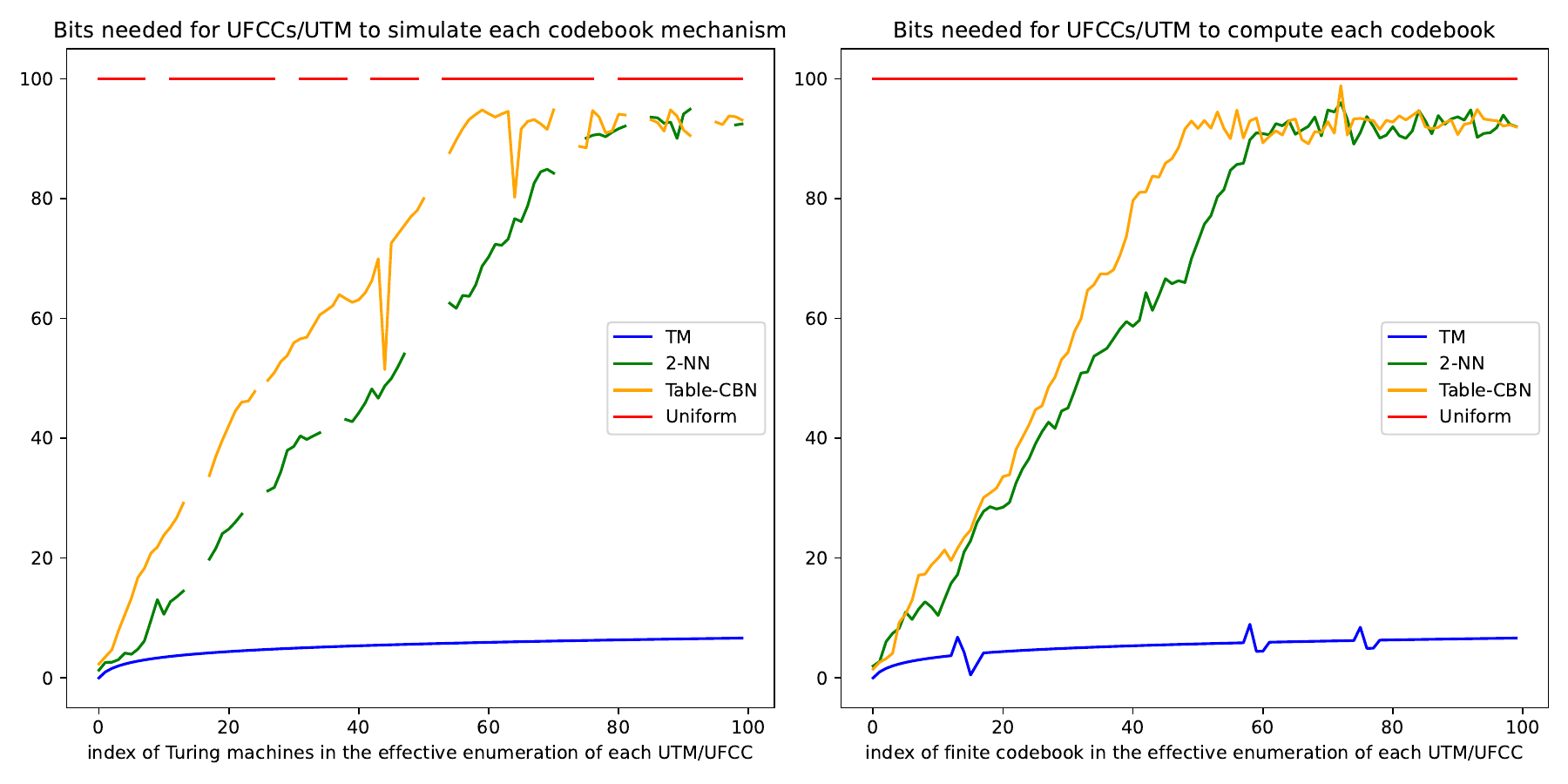}
    \caption{Given $(m,n,d)$, we consider the codebooks on $\Xcal^d:=(\Bcal^m)^d$ with precision $(m,n)$. The curves are fictitious for illustration, since some of them are not computable. Left figure: The x-axis is the index of Turing machines in an effective enumeration of all Turing machines that can compute a codebook. The y-axis is the coding length using a universal Turing machine or a UFCC. Right figure: The x-axis is the index of codebooks in an effective enumeration of all codebooks. The y-axis is the minimum coding length of the Turing machine or FCM that the universal Turing machine or UFCC can simulate. Blue line: Turing machines simulated by an arbitrary universal Turing machine. Green line: FCMs simulated by a universal two-layer neural network computer. Yellow line: FCMs simulated by $U_{\text{TabCBN}}$ defined in~\cref{prop:factorize_shorter}. Red line: FCMs simulated by $U_{\text{unif}}$ defined above.}
    \label{fig:comp_UFCC}
\end{figure}

Back to our question: are some UFCCs better than others? We conjecture that a criterion for good UFCC is: for any additively optimal UTM, a good UFCC $U$ should have a similar landscape (i.e. the order) as the UTM in the right figure of~\cref{fig:comp_UFCC}. Namely, a good UFCC should preserve the order of codebooks in a certain UTM.
\vspace{-.25cm}
\section{Case studies}\label{sec:case_studies}
We study the solutions for~\cref{eq:two-part-objective} under some particular UFCCs, showing that compression leads to selecting CFMPs that have algorithmic causal or symmetric structures. In the following, all the UFCCs simulate FCMs by composing a CFMP with a Huffman coding program, and all the CFMPs only involve no-hidden-variable mechanisms.
\vspace{-.2cm}
\subsection{Causal factorizations and sparse mechanism shifts}\label{subsec:SMS}
Consider a UFCC $U_{\text{TabCBN}}$ that first takes $m,d,n$ as input where $m$ denotes the precision of $\Xcal$, i.e., $|\Xcal|=2^m$, and $d$ denotes the number of variables, and $n$ denotes the precision of discrete distribution values. Let $U_{\text{TabCBN}}$ only allow the CFMPs in the following form: each CFMP $\alpha$ is a causal Bayesian network defined in 1. of~\cref{eg:CFMP}, with the further constraints that
\begin{enumerate}
    \item All elements in $\Pcal_\alpha$ are conditional probability tables. Each $f\in \Pcal_\alpha$ is incompressible, i.e., the Kolmogorov complexity $C(f)$ equals the coding length of its probability table.
    \item $\Phi_\alpha$ is restricted to projections on variables: for $d$ variables, there are $2^d$ possible projections, and we define a uniform code on these elements so that each element needs $d$ bits.
\end{enumerate}
\begin{restatable}{proposition}{propFactorizeShorter}\label{prop:factorize_shorter} (Causal factorization is shorter than encoding the joint distribution)
    Suppose $\PP$ is supported on $\Xcal^d \times [I]$ with precision $(m,n)$ and $\PP(X,e_i)= \PP^i(X_1 | X_{S_i}, e_i) \PP(X_2, \dots, X_d) \PP(e_i)$ (w.l.o.g. suppose $\PP(e_i)=\frac{1}{I}$) and $(S_i)_{i\in [I]}$ are subsets of $[d]$ with $|S_i|<d-1$. Suppose $I=2^{\log I}\leq 2^n$. Denote $\alpha, \beta$ as two CFMPs simulated by $U_{\text{TabCBN}}$ and computing the same $\PP$, while $\alpha$ is a CBN model (\cref{eg:CFMP} (1))  proceeds by separately saving and calling factorized mechanisms, and $\beta$ is a statistical density estimator (\cref{eg:CFMP} (6)) which proceeds by encoding the whole multi-env distribution in $\Pcal_\beta$.
    Then $l_{U_{\text{TabCBN}}} (\alpha)(m,d) = o(l_{U_{\text{TabCBN}}} (\beta)(m,d))$
    as $m\to \infty$ or $d\to \infty$.
\end{restatable}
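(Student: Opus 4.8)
The plan is to compute the two coding lengths explicitly and compare their dominant terms. The key simplification is that $U_{\text{TabCBN}}$ forces every probabilistic mechanism in $\Pcal$ to be an \emph{incompressible} conditional probability table and every feature mechanism in $\Phi$ to be one of the $2^d$ coordinate projections (costing $d$ bits each). Hence $l_{U_{\text{TabCBN}}}(\cdot)$ is, up to lower-order self-delimiting overhead, just the total number of bits spent on (i) the entries of the stored tables and (ii) the projection indices that select value and conditional variables. The whole argument therefore reduces to counting table entries.

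First I would bound $l_{U_{\text{TabCBN}}}(\beta)$. The statistical density estimator stores the full joint on $\Xcal^d\times[I]$ as a single incompressible table; this table has $2^{md}\cdot I$ entries of $n$ bits each, so
\[
 l_{U_{\text{TabCBN}}}(\beta) = \Theta\!\left(2^{md}\,I\,n\right),
\]
the identity feature map and the normalization constraint contributing only $O(d+n)$. Next I would bound $l_{U_{\text{TabCBN}}}(\alpha)$. The causal factorization stores the shared marginal $\PP(X_2,\dots,X_d)$, a table with $2^{m(d-1)}$ entries; the $I$ conditional tables $\PP^i(X_1\mid X_{S_i})$, the $i$-th having $2^{m(1+|S_i|)}$ entries; and the uniform environment prior $\PP(e_i)=1/I$, whose cost is dominated by the conditional tables. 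Adding an $O(d)$-bit projection selector for each of the $O(I)$ mechanisms gives
\[
 l_{U_{\text{TabCBN}}}(\alpha) = \Theta\!\Big(2^{m(d-1)}n + \textstyle\sum_{i=1}^{I} 2^{m(1+|S_i|)}n\Big) + O(Id+\log I).
\]
The hypothesis $|S_i|<d-1$, i.e.\ $1+|S_i|\le d-1$, bounds every conditional table by $2^{m(d-1)}$, so $l_{U_{\text{TabCBN}}}(\alpha) = O\!\big((I+1)\,2^{m(d-1)}n + Id\big)$.

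Finally I would form the ratio
\[
 \frac{l_{U_{\text{TabCBN}}}(\alpha)}{l_{U_{\text{TabCBN}}}(\beta)} = O\!\left(\frac{(I+1)\,2^{m(d-1)}n + Id}{I\,2^{md}n}\right) = O\!\left(2^{-m}\right) + O\!\left(\frac{d}{2^{md}n}\right),
\]
and read off the asymptotics. As $m\to\infty$ the factor $2^{-m}$ sends the ratio to $0$, yielding $l_{U_{\text{TabCBN}}}(\alpha)=o\big(l_{U_{\text{TabCBN}}}(\beta)\big)$; the gain comes from two compounding effects, namely storing the $(d-1)$-dimensional marginal once instead of replicating it across the $I$ environments, and replacing a $d$-variable joint per environment by a conditional on at most $d-1$ variables.

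I expect the main obstacle to be the careful bookkeeping of the lower-order overhead: one must verify that the self-delimiting codes for the mechanism indices, the $O(Id)$ projection selectors, the environment prior, and the normalization bits are all dominated by the exponential table terms, so that the asymptotics are governed purely by the leading factors $2^{md}In$ and $2^{m(d-1)}In$. The $d\to\infty$ regime deserves particular attention, since there both lengths grow exponentially in $d$ and the comparison hinges entirely on the gap $md-m(d-1)=m$ between the leading exponents; one must confirm, again using $1+|S_i|\le d-1$, that the shared marginal rather than any accumulation of overhead is the dominant cost of $\alpha$, so that this leading-exponent comparison indeed controls the ratio.
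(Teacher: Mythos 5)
Your proposal is correct and follows essentially the same route as the paper: count the table entries of each CFMP under $U_{\text{TabCBN}}$, obtain $\Theta\!\left(nI2^{m(d-1)}\right)$ for $\alpha$ versus $\Theta\!\left(nI2^{md}\right)$ for $\beta$, and check that projection selectors, the environment prior, and self-delimiting overhead are lower order. The only substantive detail the paper adds is that the factor tables must be stored at precision $2n+4$ (via its product-precision lemma) so that the step-3 product reproduces $\PP$ to precision $n$ --- a constant-factor correction that does not affect the asymptotics --- and the $d\to\infty$ caveat you flag (the ratio tending to the constant $2^{-m}$ rather than to $0$ for fixed $m$) is equally present in the paper's own proof.
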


Although $U_{\text{TabCBN}}$ can compute any finite codebook, and by \cref{cor:correpondance_proba_code}, it can equivalently compute any multi-env distribution (\cref{cor:correpondance_proba_code}), its encoding is inefficient because it encodes all elements in $\Pcal_\alpha$ in the form of probability tables. To relax this assumption, we need to assume compressibility of $\Pcal_\alpha$. Consider a UFCC $U_{\text{CompCBN}}$ which is defined as follows: for all $\alpha$,
\begin{enumerate}
    \item $\Pcal_\alpha$ is compressible and finite, of cardinal $M$: there exists a Turing machine $T$ such that for all $f\in \Pcal_\alpha$, $C_{U_{\text{CompCBN}}}(f | T)=\log M \leq l_{U_{\text{TabCBN}}}(f)$. Namely, we do not need to store each probabilistic mechanism as a table. Instead, $\alpha$ can run the subprogram $T$ in step 1, and then use a uniform code $\log M$ to encode each probabilistic mechanism. For example, we can write a Python program to generate a binomial distribution $B(n,p)$ with only two parameters, instead of storing each probabilistic map as a table.
    \item Same as in $U_{\text{TabCBN}}$, $\Phi_\alpha$ is restricted to projections on variables: for $d$ variables, there are $2^d$ possible projections, encoded uniformly.
    \item We consider two strategies for implementing step 3 of CFMPs in $U_{\text{CompCBN}}$. We assume that we want to use $N$ featurized mechanisms which are elements of a subset of size $k$ of all $M$ available featurized mechanisms.
    \begin{enumerate}
        \item Strategy 1: Directly select $N$ mechanisms from $M$ mechanisms, i.e., encode the selected mechanisms by a map $[N]\to [M]$.
        \item Strategy 2: First, write down $k$, the number of featurized mechanisms that are used. Then choose the $k$ selected mechanisms encoded by a map $[k]\to [M]$.
        Moreover, encode the final assignment of the $N$ mechanisms through a (surjective) map $[N]\to [k]$.
    \end{enumerate}
\end{enumerate}

Under this UFCC, we prove that the factorization that uses the sparsest mechanism shifts compresses best among all factorizations:
\begin{restatable}{proposition}{propSMS}\label{prop:SMS}
In $U_{\text{CompCBN}}$, for $k=o(N)$ and $k<\frac{M}{2}$, strategy 2 has a shorter coding length than strategy 1, and the difference in coding length between strategy 1 and strategy 2 is decreasing in this range of $k$.
\end{restatable}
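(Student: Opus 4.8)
The plan is to write both strategies' coding lengths as explicit functions of $(N,k,M)$, form their difference $D(k)$, and then establish the two claims—positivity and monotonicity in $k$—by elementary estimates. For Strategy 1 we encode a map $[N]\to[M]$, costing
\begin{equation}
L_1 = N\log M,
\end{equation}
which is independent of $k$ (the image happens to have size $k$, but Strategy 1 does not exploit this). For Strategy 2 we pay (i) a negligible cost $\ell(k)=O(\log M)$ to record the integer $k$, (ii) the cost of selecting the $k$ used mechanisms via a map $[k]\to[M]$, which is at most $k\log M$ (and if injectivity is enforced, $\log\tfrac{M!}{(M-k)!}\le k\log M$), and (iii) the cost $N\log k$ of the surjective assignment $[N]\to[k]$, so that
\begin{equation}
L_2(k) = \ell(k) + k\log M + N\log k.
\end{equation}
I would then work throughout with
\begin{equation}
D(k) = L_1 - L_2(k) = N\log(M/k) - k\log M - \ell(k).
\end{equation}

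For the monotonicity claim I would treat $k$ as a continuous variable (or equivalently bound the forward difference $D(k+1)-D(k)$) and compute
\begin{equation}
\frac{d}{dk}D(k) = -\frac{N}{k\ln 2} - \log M - \ell'(k) < 0,
\end{equation}
so $D$ is strictly decreasing on $k\ge 1$ and in particular throughout the stated range. This already encodes the sparse-mechanism-shift intuition: the advantage of Strategy 2 is largest when $k$ is small and erodes as $k$ grows, the two strategies coinciding as $k\to M$.

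For the positivity claim I would use the two hypotheses directly. Since $k<M/2$ we have $M/k>2$, hence $\log(M/k)>1$ and $N\log(M/k)>N$; and since $k=o(N)$ with $M$ (the pool size) treated as fixed, the overhead obeys $k\log M + \ell(k)=o(N)$, whence
\begin{equation}
D(k) \;>\; N - \bigl(k\log M + \ell(k)\bigr) \;=\; N - o(N) \;>\; 0
\end{equation}
for all sufficiently large $N$. Equivalently, $D(k)>0$ reduces to the clean inequality $N/k > \log M/\log(M/k)$, whose right-hand side is bounded above by $\log M$ while the left-hand side diverges under $k=o(N)$.

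The main obstacle I anticipate is not the inequality but pinning down the coding lengths so the estimates are rigorous rather than heuristic. Specifically, I would verify that the number of surjections $[N]\to[k]$ has logarithm $N\log k$ up to a correction that is \emph{exponentially} small when $N\gg k$ (the standard inclusion–exclusion estimate gives $k^N(1 - k\,e^{-N/k}+\cdots)$), and that $\log\tfrac{M!}{(M-k)!}=k\log M - O(k)$ for $k<M/2$; both corrections are $o(N)$ and hence cannot overturn the sign of $D$. A second point to state explicitly is the asymptotic regime: positivity is meaningful precisely under $M$ bounded (more generally $k\log M=o(N)$), which is exactly the sparse-mechanism-shift setting where the pool of raw mechanisms is fixed while the number $N$ of factor slots grows with the number of environments. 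Once this bookkeeping is settled, both conclusions follow from the displayed derivative and the bound $\log(M/k)>1$.
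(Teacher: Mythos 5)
Your proposal is correct and follows essentially the same route as the paper's proof: both write the two strategies' step-3 costs explicitly (yours replaces the exact counts $\log\binom{M}{k}+\log(k!\,S_k^N)$ by the asymptotically equivalent $k\log M+N\log k$), form the difference $D(k)=L_1-L_2(k)$, and obtain monotonicity from the sign of its increment/derivative using $k<M/2$ together with the surjection/Stirling asymptotics valid for $k=o(N)$. The only substantive divergence is the positivity claim: you establish $D(k)>0$ across the whole range under the extra regularity $k\log M=o(N)$ (which you rightly flag as needed), whereas the paper only checks the sign at $k=1$ via $A(1)=(1-N)\log M+\log 2+1<0$ and lets monotonicity carry the rest, so your treatment of that part is, if anything, the more careful one.
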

\begin{remark}
   ~\cref{prop:SMS} shows that in the trade-off of~\cref{eq:two-part-objective},  Strategy 1 can be preferred to Strategy 2 when the data is noisy, namely when the coding length of the codeword part $-\log \PP(x)$ increases. The intuition behind this is that ``identity'' or ``sparse mechanism shifts'' are all approximations; if the model with sparse mechanism shifts fits the data sufficiently well, compression prefers not to code too many mechanisms. The experiments in \cref{sec:experiments} also show this trade-off.
\end{remark}
\begin{figure}[ht]
\centering
\includegraphics[width=0.8\linewidth]{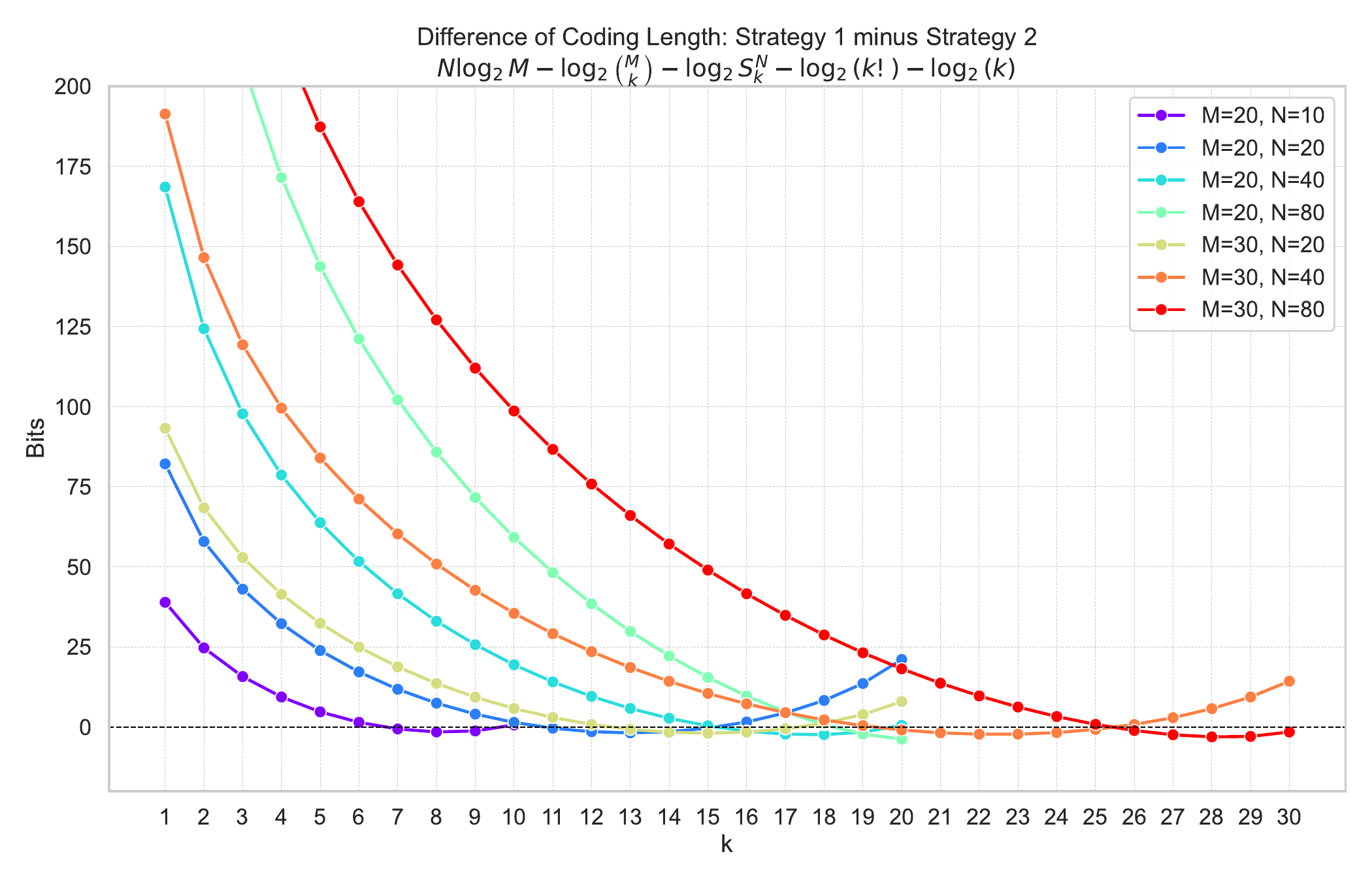}
\captionsetup{width=\textwidth}
\caption{Illustration of~\cref{prop:SMS}. Using $U_{\text{compCBN}}$, the difference in coding length of strategy 1 minus strategy 2 is initially positive and decreases with $k$, then becomes negative.
Under $U_{\text{compCBN}}$, the objective \cref{eq:two-part-objective} minimizes the sum of Shannon code length and the length of CFMP. Strategy 2 is preferred when using few featurized mechanisms is precise enough for the CFMP to model the distribution of the multi-env system.
}
\label{fig:comparison_terms}
\end{figure}
\subsection{Symmetries}
Consider a UFCC $U_{\text{TabInv}}$ that first takes $m,d,n$ as input where $m$ denotes the precision of $\Xcal$, i.e., $|\Xcal|=2^m$, and $d$ denotes the number of variables, and $n$ denotes the precision of discrete distribution values. Let $U_{\text{TabInv}}$ only allow the CFMPs in the form of (4) in~\cref{eg:CFMP} and with further constraint same as the first constraint in the definition of $U_{\text{TabCBN}}$, plus that
\begin{enumerate}
    \item For all $\alpha$ simulated by $U_{\text{TabInv}}$, $\Phi_\alpha$ is restricted to the compositions of projections on variables and quotient maps $\phi_G: \Xcal\to \Xcal / G$ of a certain group action $G$ that only acts on one dimension in $\Xcal^d$ and leaves other dimensions unchanged. Namely, any element in $\Phi_\alpha$ is in the form $\phi_G\circ \pi$.\footnote{If we allow more flexible quotient maps we can detect more symmetric structures. Those constraints are written in the UFCC, which is shared by the sender and receiver, so they do not take up bits in the FC complexity.}
    \item The selected mechanisms in Step 3 is the same for all $x\in\Xcal^d$, namely there is no context-specific mechanism (\cref{eg:CFMP} (2)).
\end{enumerate}
\begin{restatable}{proposition}{propInvShorter}\label{prop:inv_shorter} (Invariance factorization is shorter than Markov factorization)
    Suppose $\PP$ is supported on $\Xcal^2$ and $\PP(X)= \PP(X_1 | \phi(X_2)) \PP(X_2)$, where $\phi: \Xcal\to \Xcal / G$ is the quotient map of a certain group action $G$ that only acts on one dimension in $\Xcal^2$ and leaves the other dimension unchanged. Suppose the number of orbits $\Xcal / G$ is  independent of the precision $m$. Denote $\alpha, \beta$ as two CFMPs simulated by $U_{\text{TabInv}}$ and computing the same $\PP$, while $\alpha$ is a $G$-invariant learning model (\cref{eg:CFMP} (4)) which factorizes $\PP$ into $\PP(X)= f_1(\pi_1(X) | \phi\circ \pi_2(X)) f_2(\pi_2(X))$, and $\beta$ is a CBN model (\cref{eg:CFMP} (1)) which factorizes $\PP$ into $\PP(X)= f'_1(\pi_1(X) | \pi_2(X)) f_2(\pi_2(X))$ for certain $f_1, f'_1, f_2$.
    Then $l_{U_{\text{TabInv}}} (\alpha)(m) = o(l_{U_{\text{TabInv}}} (\beta)(m))$.
\end{restatable}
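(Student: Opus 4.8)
The plan is to split $l_{U_{\text{TabInv}}}(\alpha)$ and $l_{U_{\text{TabInv}}}(\beta)$ into the cost of encoding (i) the probabilistic mechanisms of $\Pcal$ as incompressible probability tables, (ii) the feature mechanisms of $\Phi$, and (iii) the step-2 featurization and step-3 selection, and then to show that item (i) alone already separates the two models by a factor $\Theta(2^m)$.

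First I would isolate the shared mechanism. Both $\alpha$ and $\beta$ use the marginal $f_2(\pi_2(X)) = f_2(X_2)$; by the incompressibility constraint inherited from $U_{\text{TabCBN}}$, its table costs exactly $(2^m-1)n$ bits, so this term is identical and cancels in the comparison. The distinguishing mechanism is the conditional. In $\alpha$ it is $f_1(\pi_1(X)\mid\phi\circ\pi_2(X))$, whose conditioning variable ranges over $\Xcal/G$, a set of size $r := |\Xcal/G|$ assumed independent of $m$; its incompressible table therefore occupies $r(2^m-1)n$ bits. In $\beta$ the conditional is $f'_1(\pi_1(X)\mid\pi_2(X))$, whose conditioning variable ranges over all of $\Xcal$, giving a table of $2^m(2^m-1)n = \Theta(2^{2m}n)$ bits.

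Next I would bound the feature-mechanism cost. For $\beta$ the only feature maps are the projections $\pi_1,\pi_2$, encoded in $O(d)$ bits. For $\alpha$ we additionally pay for the quotient map $\phi_G$, whose description amounts to recording, for each of the $2^m$ points of $\Xcal$, which of the $r$ orbits it belongs to, i.e.\ at most $O(2^m\log r)$ bits; the second constraint (no context-specific mechanism) ensures the step-3 selection is a fixed list of mechanism indices costing only $O(1)$ further bits. Collecting terms, and using that $r$ and $n$ are constants in $m$, I get $l_{U_{\text{TabInv}}}(\alpha)(m) = \Theta(2^m)$ while $l_{U_{\text{TabInv}}}(\beta)(m) = \Theta(2^{2m})$, so that $l_{U_{\text{TabInv}}}(\alpha)(m)/l_{U_{\text{TabInv}}}(\beta)(m) = \Theta(2^{-m}) \to 0$, which is exactly the claim $l_{U_{\text{TabInv}}}(\alpha)(m) = o(l_{U_{\text{TabInv}}}(\beta)(m))$.

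The main obstacle I anticipate is the bookkeeping around $\phi_G$: this is the only place the invariant model pays a cost the Markov factorization does not, and I must confirm its encoding stays at order $2^m$ and cannot secretly blow up to $\Omega(2^{2m})$, which would erase the table savings. The incompressibility assumption is what makes the estimates tight rather than mere upper bounds, since it lets me equate each mechanism's coding length with its exact table size; I would also double-check that no hidden dependence of $r$ on $m$ sneaks in through the definition of $\phi_G$, as the whole separation rests on $r$ being constant in $m$.
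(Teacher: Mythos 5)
Your proposal is correct and follows essentially the same route as the paper's proof: both reduce the comparison to the sizes of the incompressible conditional probability tables, where $\alpha$'s conditional table has $|\Xcal/G|\cdot 2^m = \Theta(2^m)$ entries (with $|\Xcal/G|$ constant in $m$) against $\beta$'s $\Theta(2^{2m})$ entries, with all other components shared or lower order. The only bookkeeping difference is that you charge $O(2^m\log r)$ bits for encoding the quotient map $\phi_G$, whereas the paper's $U_{\text{TabInv}}$ builds the admissible quotient maps into the UFCC itself (so they cost no bits); your more conservative accounting still leaves $l(\alpha)=\Theta(2^m)=o(2^{2m})=o(l(\beta))$, so the conclusion is unaffected.
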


With a proof similar to \cref{prop:factorize_shorter} we can show that for the general case $\PP(X)= \PP(X_1|\phi\circ\pi(X))\PP(\pi(X))$, if the dimension of $\pi(\Xcal^d)$ is lower than $d-1$, then the model length of the Markov factorization $\beta$ in \cref{prop:inv_shorter} is shorter than the statistical density estimator. In this case, invariant factorization is shorter than Markov factorization, which is shorter than no factorization.

The orbits of many group actions are independent of the precision $m$, such as the reflection and translation; rotation is also the case when $m$ is sufficiently large.
\section{Experiments}\label{sec:experiments}
We illustrate our theoretical findings through simple experiments with synthetic data.

We consider two synthetic settings with sparse mechanism shifts. In both settings, the goal is to show that by minimizing FC complexity, we select a model that trades off the complexity of the model itself and the data-to-model coding length, i.e. negative log-likelihood. Then we say that according to the model selection method of minimizing FC complexity, $X$ algorithmically causes $Y$. Details of experiments are in \cref{sec:details_experiments}. The code can be found here: \href{https://github.com/WendongL/algorithmic-causality-compression}{https://github.com/WendongL/algorithmic-causality-compression}

\subsection{Covariate shifts}\label{sec:exp_cov_shifts}
Consider a multi-env system with covariate shift: $\PP(X,Y,E)=\PP(X|E)\PP(Y|X)\PP(E)$. Suppose many CFMPs $(\alpha_l)_{l\in[L]}$ generate the same $\Pcal$ and $\Phi$ and featurize them in the same way, as described in strategy 2 before \cref{prop:SMS}. The question is: among these CFMPs, under the UFCC $U_{\text{CompCBN}}$ in \cref{prop:SMS}, given multi-env finite data, if we only consider the case ``$X$ causes $Y$'', which CFMP should we select?
\begin{figure}[ht]
    \centering
    \includegraphics[width=0.45\linewidth]{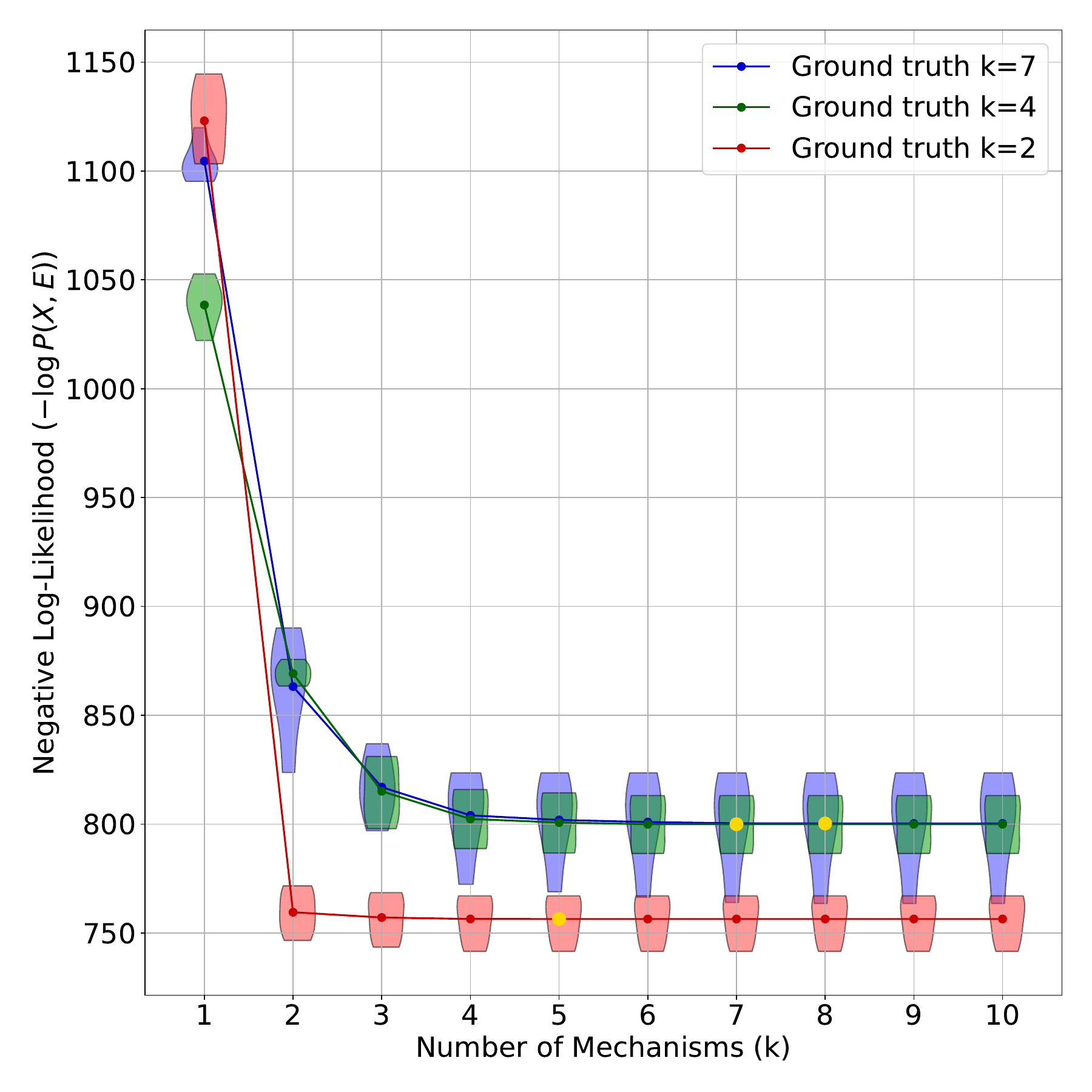}\hspace{0.08\linewidth}
    \includegraphics[width=0.45\linewidth]{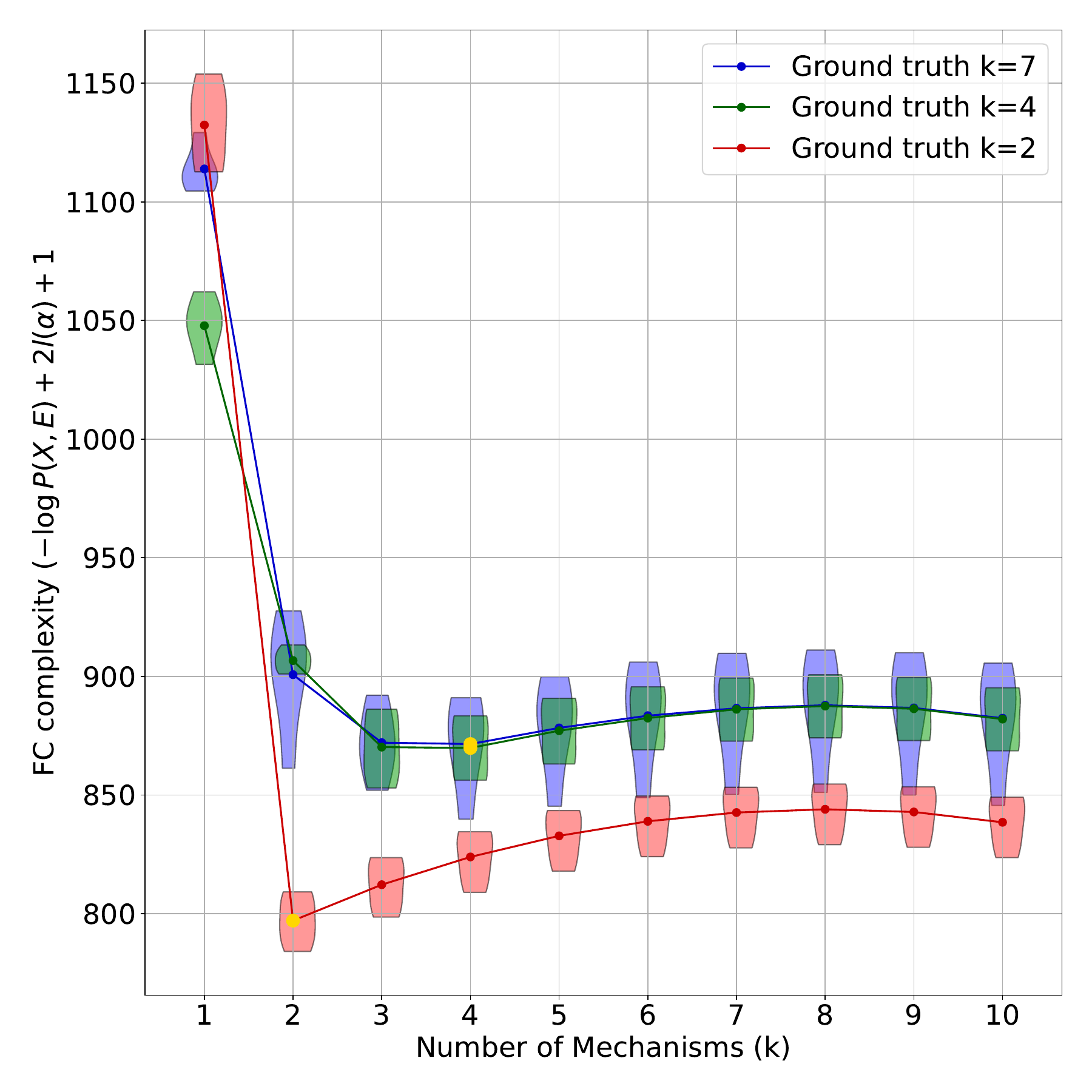}
    \caption{Results in \cref{sec:exp_cov_shifts}. The left figure shows the minimal negative log-likelihood of the CFMPs that use $k$ mechanisms $\PP(X|E)$. The right figure shows the minimal FC complexity (NLL+model coding length $2l_{U_{\text{CompCBN}}}(\alpha)+1$ (\cref{eq:two-part-objective})) of the CFMPs that use $k$ mechanisms $\PP(X|E)$. We choose 3 different multi-env distributions to generate the data, respectively using 2,4 and 7 mechanisms among 10 environments. The experiments are run with 5 seeds. The argmin $k$ are highlighted.}
    \label{fig:exp_covariate_shift}
\end{figure}
\begin{figure}[ht]
    \centering
    \includegraphics[width=0.5\linewidth]{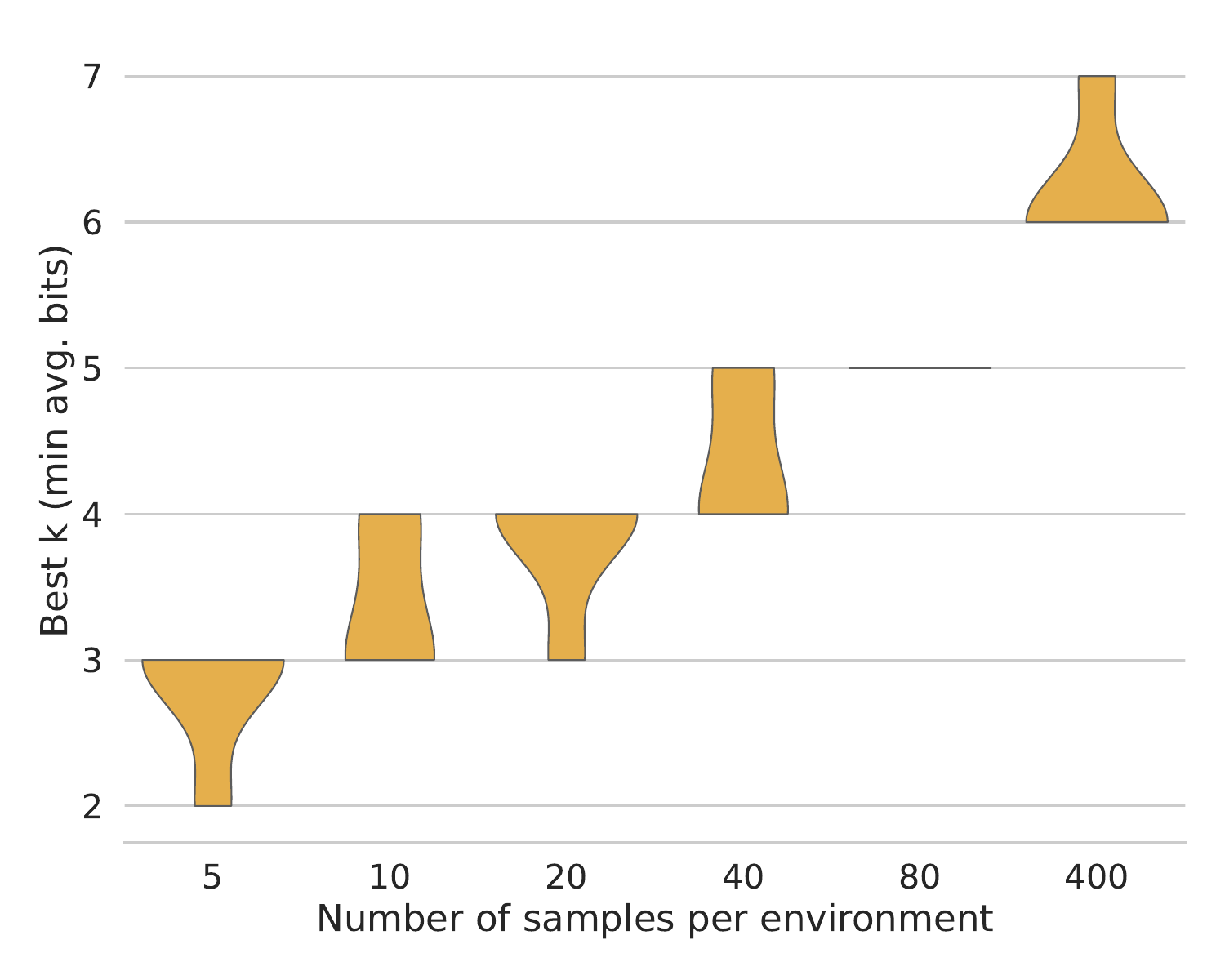}
    \caption{For the experiment with ground truth $k=7$ for 10 envs, we increase the number of samples per env. As the number increases, the model selected by minimizing FC complexity tends to use more mechanisms in different environments.}
    \label{fig:best_k_violin_with_penalty}
\end{figure}
In the current experiment, we generate $10$ environments for $(X,Y)$ supported in $\{0,1,\dots 17\}^2$, each having $10$ iid samples. We make the shifts in $\PP(X|E)$ sparse, i.e. less than $10$, see the legends in \cref{fig:exp_covariate_shift}. $\PP(X|E)$ is modeled by Poisson distribution, with the support outside of $\{0,1,\dots 17\}^2$ absorbed in $(X=17)$. We define the invariant $P(Y|X)\sim \Ncal(X, 1)$.

To proceed with model selection, we consider a set of CFMPs $(\alpha_l)_{l\in[L]}$ which have $18$ featurized mechanisms for $\PP(X|E)$, and the known $\PP(Y|X)$ and $\PP(E)$. Since the goal is to compare and select among these CFMPs, we do not calculate the coding length for steps 1 and 2 which is the same for all those CFMPs. The only step that makes a difference in coding length, step 3, costs $\log\binom{M}{k} + \log S_k^N + \log (k!) + \log k$ bits, as shown in \cref{fig:comparison_terms} and the proof of \cref{prop:SMS}. Namely, the coding length in step 3 depends on how many different $\PP(X|E)$ mechanisms $\alpha$ should use. Once $k$ is chosen, $\alpha$ performs a greedy search over all possible choices of using $k$ mechanisms in $10$ environments and selects the choice that minimizes the negative log-likelihood (NLL) of multi-env data, which is equivalent to the sum of Shannon code length for the multi-env data.

The results in \cref{fig:exp_covariate_shift} show that with finite data, if we only minimize NLL, we tend to select the ground truth mechanisms, and sometimes we choose even more mechanisms than the ground truth needs. However, by minimizing FC complexity (NLL+model length), we tend to trade off the NLL and the model complexity and select a simple (sparse mechanisms needed) model that explains the data well enough.

\subsection{Causal discovery without identifiability}\label{sec:exp_cd}
\begin{figure}[ht]
    \centering
    \includegraphics[width=0.45\linewidth]{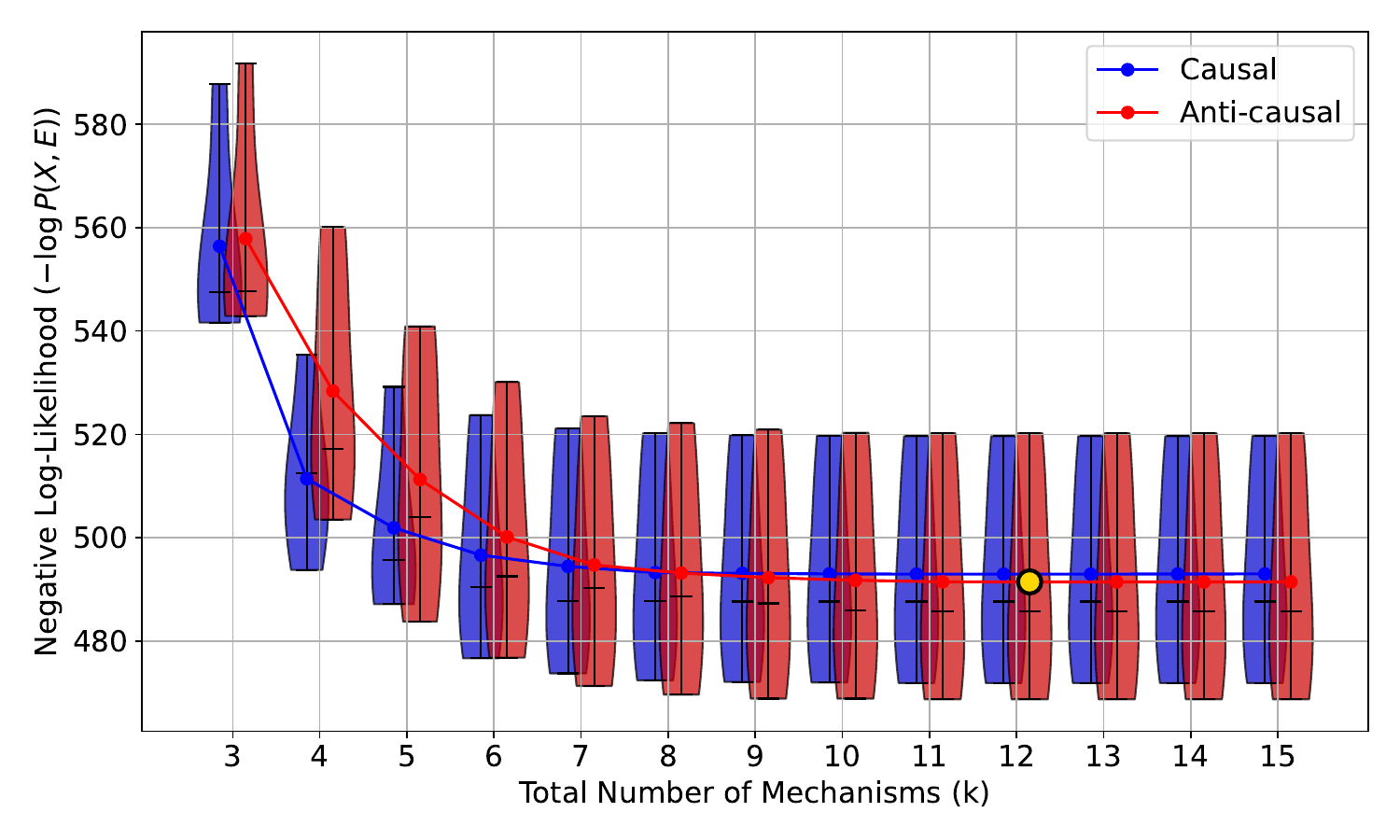}\hspace{0.08\linewidth}
    \includegraphics[width=0.45\linewidth]{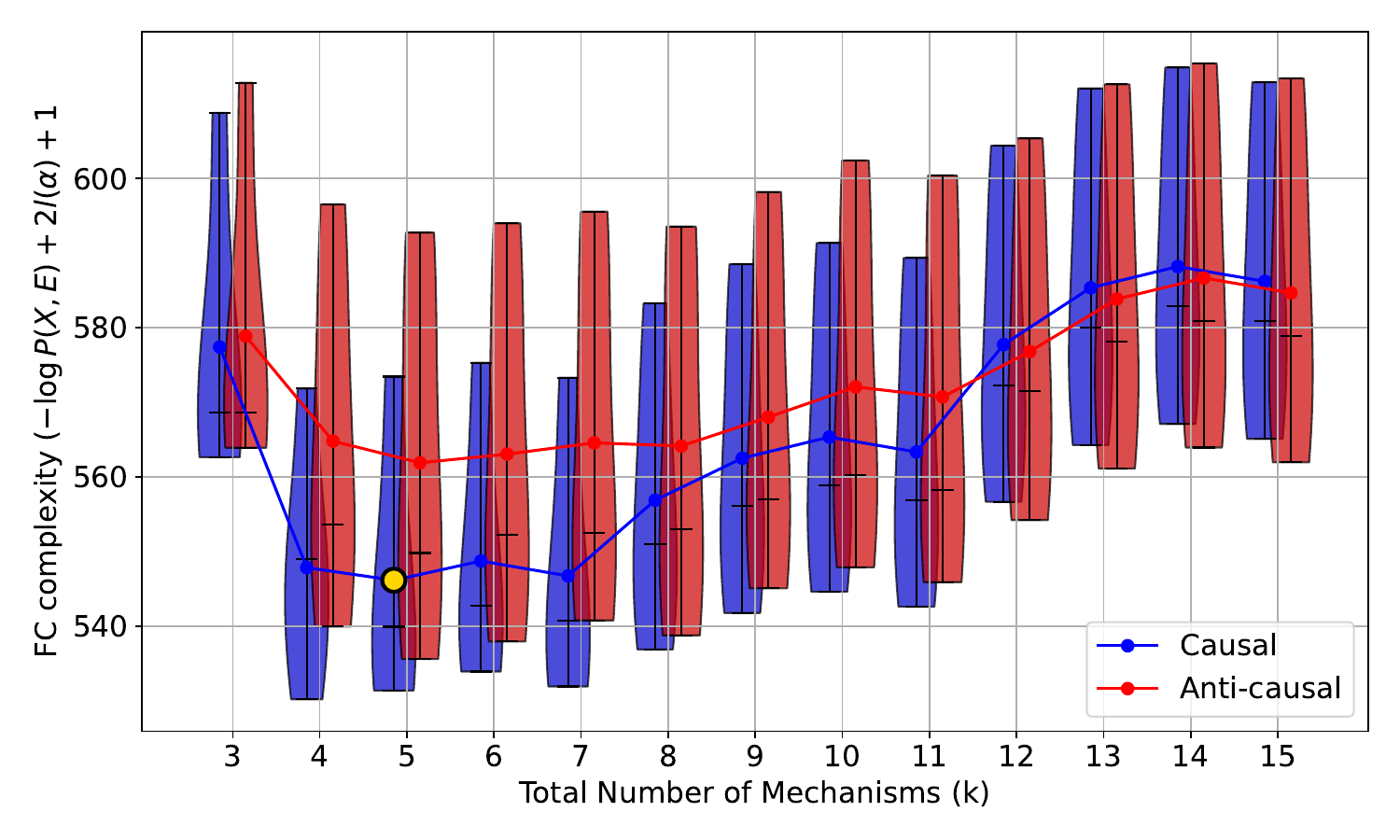}
    \caption{Results in \cref{sec:exp_cd}. The x-axis is always the overall number of mechanisms (maximum 24 but truncated at 15). The left figure shows the minimal negative log-likelihood of the CFMPs that use $k$ mechanisms. The right figure shows the minimal FC complexity (NLL+model coding length $2l_{U_{\text{CompCBN}}}(\alpha)+1$ (\cref{eq:two-part-objective})) of the CFMPs that use $k$ mechanisms. The experiments are run with 5 seeds. The argmin $k$ are highlighted.}
    \label{fig:exp_cd}
\end{figure}
Consider a multi-env system with 5 environments generated by linear Gaussian SCMs:
$X\sim \mathcal{N}(0,\sigma_1^2), Y=a X + \epsilon$ with $\epsilon\sim \mathcal{N}(0,\sigma_2^2)$.
We generate 10 iid samples for each environment.

In total, 8 parameters are needed to fit the data optimally.

However, the causal graph is not identifiable, because we can also find parameters for each env using the linear Gaussian model $Y\to X$:
$Y\sim \mathcal{N}(0,\tau_1^2), X=b Y + \epsilon$ with $\epsilon\sim \mathcal{N}(0,\tau_2^2)$.

For both causal and anti-causal linear Gaussian models, we allow 8 choices for each parameter, which include the optimal parameters, in total 24 parameters for the multi-env system. We constrain the number of mechanisms (in this case, parameters) before training the model, just as Strategy 2 in Proposition 18.

We see that FC complexity provides a new criterion of a “good model” that is applicable when the model is not identifiable from infinite sample and addresses the model selection problem, e.g., here fewer mechanisms than in the ground truth are selected.

\section{Related work}\label{sec:related_work}
\citet{janzing2010causal} were the first to consider causal mechanisms implemented by Turing machines. They replace statistical (conditional) independence with algorithmic (conditional) independence, conjecturing that if the Kolmogorov complexity of a string or a joint distribution can be decomposed into a sum of conditional Kolmogorov complexities of the causal mechanisms according to a graph, then this graph should be selected. They also extended their model selection principle to probabilistic models \citep[Postulate 7]{janzing2010causal}, which we comment in detail in \cref{sec:remark_AMC} and compare with our \cref{priniple:compression}.
The incomputable objective in \cite[Postulate 7]{janzing2010causal} is replaced by entropy~\citep{SteJanSch10,pranay2021causal} or MDL~\citep{Budhathoki,budhathoki2017mdl,marx2019causal,mian2021discovering,mian2023information} in all subsequent papers. We discuss in~\cref{sec:comment_entropy} and~\cref{sec:MDL} the difference between our approach and them.
\cite{marx2021formally} discusses the relationship between Postulate 7 and two-part code. We give our comments on their results and on Postulate 7 in \cref{sec:remark_AMC}.
Some papers~\citep{marx2019identifiability, marx2019telling, mameche2022discovering, mameche2024learning} claim identifiability (i.e., recovery of the ground-truth graph) by minimizing their proxy of Postulate 7 in \cite{janzing2010causal}. We show in~\cref{lem:IdentminCE} that any identifiability of graphs is reduced to the claim of the uniqueness of the solution of minimum cross-entropy instead of minimizing a bound of Kolmogorov complexity. Our approach focuses on an upper bound of Kolmogorov complexity of a specific class of Turing machines that compute probabilistic models. Our objective is fundamentally different from \cite[Postulate 7]{janzing2010causal}, as we show in \cref{sec:remark_AMC}.

\cite{dhirbivariate} address bivariate causal discovery without confounding by comparing the posteriors of two graphs, with the correctness (probability of selecting the ground truth graph) depending on the total variation between the ground-truth distribution and marginal likelihood. In \cref{sec:MDL}, we discuss the difference between our approach and Bayesian model selection.

On the side of computation theory, there is abundant work on constraining the definition of Kolmogorov complexity to make it computable, such as 
resource-bounded complexity~\citep{barzdin1968complexity}, logical depth~\citep{chaitin1977algorithmic}, automatic complexity~\citep{shallit2001automatic}.%
Those definitions are fit for compressing more general strings without any probabilistic structure, therefore the entropic code ($-\log \PP$) is not applicable. 
In the domain of knowledge representation, \cite{shen2018conditional,kisa2014probabilistic} use probabilistic sentential decision diagrams (PSDD) to model Bayesian networks and learn them by maximum likelihood. Some use instead trees~\citep{chen2022definition} and arithmetic circuits~\citep{darwiche2022causal, huang2024causal}.

\section{Discussion}

If it is the case that compression in some cases may {\em automatically} yield causal structure, then this has significant implications for modern machine learning. For instance, there is an ongoing debate as to whether large language models (LLMs) can
understand causality in the sense of correctly applying causal principles across a range of problems~\citep{Jinetal23}. After all, a large extent of apparent causal knowledge may be explained by simply regurgitating causal knowledge which is abundant in the training set, and one may thus argue that the apparent causal knowledge of LLMs may be entirely superficial.
The arguments put forward in the present paper suggest that perhaps these two extremes may not be entirely irreconcilable: training a (large, but finite) model on a significant fraction of the internet necessarily forces a model to compress data, and even though the classical identifiability assumptions do not hold, an algorithmic causal model can still be extracted. In other words, if the empirical scaling laws for LLMs continue to hold, the model may have no choice but to learn (algorithmic) causality. Similar indirect arguments have been made for non-causal learning for the case of grokking~\citep{power2022grokking} and the emergence of complex skills~\citep{arora2023theoryemergencecomplexskills}.
An open question is whether LLMs simulate nontrivial CFMPs, i.e., whether they call and reuse some mechanisms like in CFMP. We know that in principle, they can~\citep{perez2021attention}. 

The use cases of algorithmic causality and Pearl's causality are disjoint. Algorithmic causality is not a competitor of Pearl's causality, since (i) if we have infinite data and the likelihood converges, compression is trivialized to minimum cross-entropy, which outputs any Turing machine that computes the same probability distribution, so the model selection by any UFCC suffers from the same non-identifiability as Pearl's framework; (ii) algorithmic causality deals with multi-env but correlational data and is a rung 1 model in Pearl's hierarchy; (iii) its advantage lies only in those cases where the intervention targets are not certain, for example in LLM pre-training, there is no prior knowledge about which context token is the environment label or intervention-target variable. In this case, using Pearl's causality might not be as appropriate as using algorithmic causality; on the other hand, if scientists have data generated by some strictly randomized controlled experiments on many variables respectively, then the identifiability results in Pearl's causal models are more convincing than algorithmic causality for the scientific community.

\acks{The authors thank Frederik Hytting Jørgensen and the anonymous reviewers for helpful comments and discussions. This work was supported by the T\"ubingen AI Center.}

\bibliography{ref}

\newpage
\appendix
\begin{center}
{\centering \LARGE APPENDIX}
\vspace{0.8cm}
\sloppy
\end{center}
\section*{Overview}
\begin{itemize}
    \item Appendix~\cref{sec:notations} recapitulates the notation used in this paper.
    \item Appendix~\cref{sec:Preliminaries} contains preliminary backgrounds.
    \begin{itemize}
        \item Appendix~\cref{sec:Computation} contains preliminaries in computation theory.
        \item Appendix~\cref{sec:Discrete_proba} contains preliminaries in discrete probability theory.
        \item Appendix~\cref{sec:comment_entropy} contains preliminaries in compression (source coding) in information theory.
    \end{itemize}
    \item Appendix~\cref{sec:proof_sec_prelim_ident} contains the proof in~\cref{sec:prelim_ident}.

    \item Appendix~\cref{sec:proof_undecidable} contains the proof in~\cref{sec:learn_compression}.
    \item Appendix~\cref{sec:supp_case_studies} contains supplemental results and proofs in~\cref{sec:case_studies}.
    \item Appendix~\cref{sec:details_experiments} contains details of experiments in \cref{sec:experiments}.
    \item Appendix~\cref{sec:remark_AMC} contains remarks on the Algorithmic Markov Condition \citep{janzing2010causal} and the subsequent works based on it.
    \item Appendix~\cref{sec:MDL} contains remarks on the Minimum Description Length (MDL) principle and Bayesian model selection, their difference and relation to our approach.
\end{itemize}
\newpage
\section{Notations}\label{sec:notations}
\begin{table}[h]
\centering
\begin{tabular}{ll}
\hline
Symbol & Description \\
\hline
$\log$ & Simplified symbol for $\log_2$\\
$\QQ$ & The set of rational numbers\\
$G$ & A directed acyclic graph
with nodes $V=[d]$ and arrows $E$ \\
$[d]$ & The natural numbers $1, \ldots, d$ \\
$\pa^G(i)$ & Parents of $i$, defined as $\{j\in V(G)\mid (j,i)\in E(G)\}$ \\
$\Bcal$ & Binary alphabet $\{0,1\}$\\
$\Xcal$ & The discrete finite space for one dimension of samples, of cardinal $2^m$\\
$\Omega$ & The discrete finite space for one dimension of the formal variable $\omega$, of cardinal $2^m$\\
$\epsilon$ & Empty string\\
$\alpha$ & A conditional feature-mechanism program (CFMP) (\cref{def:cond_feat_mechanism_program})\\
$\Gamma_x$ & Cylinder, defined in~\cref{def:cylinder}\\
$\Pcal_\alpha$ & List of probabilistic mechanisms (\cref{def:prob_mech_map}) generated by CFMP $\alpha$\\
$\Phi_\alpha$ & Set of feature mechanisms (\cref{def:feature_mechanism}) generated by CFMP $\alpha$\\
$f_1$ & A probabilistic mechanism (\cref{def:prob_mech_map}) \\
$g.x$ & Group action on $x\in\Xcal^d$, with $g\in G$ for a certain group $G$\\
$C_U(x)$ & Kolmogorov complexity of a string $x$ under a universal Turing machine $U$ (\cref{def:KC})\\
$\Bar{n}$ & Self-delimiting code of a natural number $n$, see~\cref{def:self_delim_code}\\
$\langle x,y \rangle$ & Self-delimiting concatenation of natural numbers $x, y$, defined in~\cref{def:self_delim_code}\\
$l_U(T)$ & Length of the index of Turing machine $T$ in the effective enumeration of\\
 & Turing machines in a universal Turing machine or UFCC $U$, see~\cref{def:KC}\\
$l(n)$ & Length of a binary string or equivalently a natural number $n$ in the binary\\
& representation, defined in~\cref{eq:def_length_number}\\
$U_{\text{TabCBN}}$ & A UFCC that is defined and used in~\cref{sec:case_studies}. Same for $U_{\text{CompCBN}}, U_{\text{TabInv}}$\\
\hline
\end{tabular}
\end{table}
\section{Preliminaries}\label{sec:Preliminaries}
\subsection{Computation theory}\label{sec:Computation}
We introduce some notions that we mentioned in the paper. We follow \cite{li2019introduction}.

We use an alphabet of binary symbols $\mathcal{B}=\{0,1\}$. The set of all finite strings over $\mathcal{B}$ is denoted by $\mathcal{B}^*$, defined as
\begin{equation}
    \mathcal{B}^*=\{\epsilon, 0,1,00,01,10,11,000, \ldots\}
\end{equation}
with $\epsilon$ denoting the empty string, with no letters. Concatenation is a binary operation on the elements of $\mathcal{B}^*$ that associates $x y$ with each ordered pair of elements $(x, y)$ in the Cartesian product $\mathcal{B}^* \times \mathcal{B}^*$. 

We now consider a correspondence of finite binary strings and natural numbers. The standard binary representation has the disadvantage that either some strings do not represent a natural number, or each natural number is represented by more than one string. For example, either 010 does not represent 2, or both 010 and 10 represent 2. We can map $\Bcal^*$ one-to-one onto the natural numbers by associating each string with its index in the length-increasing lexicographic ordering
\begin{equation}\label{eq:lexico_ordering}
    (\epsilon, 0), (0, 1), (1, 2), (00, 3), (01, 4), (10, 5), (11, 6), \dots
\end{equation}
\begin{definition}\label{def:binary_lexico_code}
    We call the \emdef{binary lexicographic code $B$} the map that maps~\cref{eq:lexico_ordering} reversely, i.e. $0\mapsto \epsilon$, $1\mapsto 0$, $2\mapsto 1$, $3\mapsto 00 \dots$.
\end{definition}

The length of a finite binary string $x$ is the number of bits it contains and is denoted by $l(x)$. One can verify that for a natural number $n$ represented by~\cref{eq:lexico_ordering}, 
\begin{equation}\label{eq:def_length_number}
    l(n)=\lfloor \log_2 (n + 1) \rfloor
\end{equation}
For the readability of the paper, we define $l(n)=\log_2 n$ instead.
\begin{definition}\label{def:self_delim_code}
    A \emdef{self-delimiting code} of a natural number $n \in \NN$ is a function $\NN\to \Bcal^*$ that maps $n\in\NN$ to
    \begin{equation}
        \Bar{n}=\underbrace{11\dots1}_{\begin{array}{c}l(n)\text{ times}\\\end{array}}\quad 0 B(n).
    \end{equation}
    where $B$ is binary lexicographic code defined in~\cref{def:binary_lexico_code}.
    
    We call $\langle \cdot,\cdot \rangle: \NN \times \NN\to \Bcal^*$ a \emdef{concatenation function}: given $m,n\in \NN$,
    \begin{equation}
        \langle m,n \rangle := \Bar{m} B(n)
    \end{equation}
    We call $\langle m,n \rangle$ a \emdef{self-delimiting concatenation} of $m,n$.
\end{definition}
The usage of the self-delimiting concatenation is in a universal Turing machine or a universal finite codebook computer (UFCC), where the input is two natural numbers. We have to use a self-delimiting concatenation so that the string $mn$ is uniquely identified as $(m,n)$.

\begin{definition}
A \emdef{Turing machine (TM)} consists of a finite program, called the \emdef{finite control}, capable of manipulating a linear list of cells, called the \emdef{tape}, using one access pointer, called the head. We refer to the two directions on the tape as right and left. The finite control can be in any one of a finite set of states $Q$, and each tape cell can contain a $0$, a $1$, or a blank $B$.
The TM can perform the following basic operations:
\begin{enumerate}
    \item  it can write an element from $A = \{0, 1, B\}$ in the cell it scans; and
    \item it can shift the head one cell left or right.
\end{enumerate}
After each step, the finite control takes on a state from $Q$, and then decides an action according to a global list of \emdef{rules}. The rules have format $(p, s, a, q)$: $p$ is the current state of the finite control; $s$ is the symbol under scan; $a$ is the next operation to be executed of type 1 or 2 designated in the obvious sense by an element from $S = \{0, 1, B, L, R\}$; and $q$ is the state of the finite control to be entered at the end of this step. If the TM enters a state that does not appear as an entry $p$ in the list of rules, then the TM \textbf{halts}. In particular, we define an extra state ``reject'' such that TM halts on ``reject''. We say that a TM \emdef{rejects} $x$ if $T$ inputs $x$ and halts on ``reject''.
\end{definition}
\begin{definition}[Formal version of~\cref{def:compute}]\label{def:compute_formal}
    A Turing machine $T$ is said to \emdef{compute} a function $f: A\subset \Xcal \to \Bcal^*$, if $T$ rejects any input outside $A$ and for all $x\in A$, $T$ halts in a finite number of steps with $f(x)$ written on the tape.
\end{definition}
Intuitively, a TM $T$ computes a function $f$ if they have the same domain $A$, and for all $x\in A$, $T(x)=f(x)$.
\begin{definition}
    We define the \emdef{equivalence relation} $\sim$ between two Turing machines: $S\sim T$ if $S$ and $T$ halt on the same inputs (i.e. considered as functions, they have the same domain $L$) and for all $x\in L$, $S(x)=T(x)$. Namely, $S$ and $T$ compute the same function.
\end{definition}
\begin{definition}[informal]\citep{li2019introduction}
    A universal Turing machine $U$ is a Turing machine that can imitate the behavior of any other Turing machine $T$.
\end{definition}
In this paper, we mean ``simulate'' by ``imitating the behavior of another Turing machine''. We have not found any formal definition of ``simulation''. For the formalization of how a universal Turing machine simulates any other Turing machines, see \cite{hennie1966two}.

\subsection{Discrete probability theory}\label{sec:Discrete_proba}
\begin{definition}\label{def:cylinder}\citep{li2019introduction}
    Let $\Bcal$ be a finite or countably infinite set of symbols. In this paper, we use $\Bcal=\{0,1\}$. A \emdef{ cylinder} is a set $\Gamma_x\subseteq \Bcal^\infty$ defined by
    \begin{equation}
        \Gamma_x=\{x\omega:\omega\in\mathcal{B}^\infty\}
    \end{equation}
    with $x\in\mathcal{B}^*$. Let $\mathcal{G}=\{\Gamma_x:x\in\mathcal{B}^*\}$ be the set of all cylinders in $\Bcal^\infty$.
A function $\mu:\mathcal{G}\to\RR$ defines a probability measure if
$$\begin{aligned}&\mu(\Gamma_\epsilon)=1,\\&\mu(\Gamma_x)=\sum_{b\in\mathcal{B}}\mu(\Gamma_{xb}).\end{aligned}$$
\end{definition}
Consider the function $\mu^{\prime}:\mathcal{B}^*\to\RR$ defined by $\mu^{\prime}(x)=\mu(\Gamma_x)$. Trivially from $\mu^\prime$ we can reconstruct $\mu$ and vice versa. From now on we identify $\mu^{\prime}$ with $\mu.$ Formally, we use the definition of measure below. One should keep in mind that our notation is shorthand for the original measure. 

\begin{definition}
    A \emdef{ finite cylinder of depth $m$} is defined by
    \begin{equation}
        \Gamma_x^m=\{x\omega:\omega\in\mathcal{B}^m\}.
    \end{equation}
\end{definition}
The probability space $\Xcal$ we consider in~\cref{def:discrete_proba_space} can be redefined as $\Xcal:=\{\Gamma_x:x\in\mathcal{B}^*\}$.

\begin{definition}\citep{li2019introduction}
    A function $\mu:\mathcal{B}^*\to\RR$ is a \emdef{ probability measure} (measure for short) if
\begin{equation}
    \mu(\epsilon)=1 \quad \text{and} \quad \mu(x)=\sum_{b\in\mathcal{B}}\mu(xb),
\end{equation}
for all $x\in\mathcal{B}^*.$ A \emdef{semi-measure} is a defective measure. A function $\mu:$
$\mathcal{B}^*\to\RR$ is a semi-measure if for all $x\in\mathcal{B}^*$,
$$\begin{aligned}\mu(\epsilon)&\leq1,\\\mu(x)&\geq\sum_{b\in\mathcal{B}}\mu(xb).\end{aligned}$$
\end{definition}

\begin{lemma}
    When the precision of $Z$ is fixed, the conditional independence set 
    $$\{X \indep Y | Z ; \quad X,Y,Z { \text{ are random variables in } \PP}\}$$
    is decreasing as the precisions of $X$ and $Y$ increase. 
\end{lemma}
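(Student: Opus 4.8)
The plan is to reduce the statement to the elementary fact that conditional independence is preserved when one applies a deterministic (measurable) function to each of the two separated variables while leaving the conditioning variable untouched. First I would fix the meaning of ``precision'': following \cref{def:disc_distr_cylinder}, raising the precision of a variable from $m$ to $m'\ge m$ means replacing its $m$-bit prefix $X|_m$ by the longer prefix $X|_{m'}$, and since $X|_m=(X|_{m'})|_m$ the coarser variable is a deterministic function of the finer one. So the entire content of the lemma is a data-processing-type observation: the CI relation stated at higher precision is logically \emph{stronger} than the one stated at lower precision.

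Concretely, I would first state and prove the auxiliary fact: if $A\indep B \mid C$ under $\PP$ and $\phi,\psi$ are any measurable maps, then $\phi(A)\indep \psi(B)\mid C$. This follows directly from the factorization definition of conditional independence: for all measurable $E,F$,
\[
\PP(\phi(A)\in E,\ \psi(B)\in F \mid C)=\PP(A\in\phi^{-1}(E),\ B\in\psi^{-1}(F)\mid C)=\PP(\phi(A)\in E\mid C)\,\PP(\psi(B)\in F\mid C),
\]
where the middle equality is exactly $A\indep B\mid C$. Equivalently one argues at the level of generated $\sigma$-algebras, using $\sigma(\phi(A))\subseteq\sigma(A)$ and $\sigma(\psi(B))\subseteq\sigma(B)$.

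The main step is then to instantiate this with $A=X|_{m'}$, $B=Y|_{m'}$, and $C=Z|_{k}$ for the fixed precision $k$ of $Z$, taking $\phi,\psi$ to be the prefix-truncation maps down to precision $m\le m'$. This yields
\[
\big(X|_{m'}\indep Y|_{m'}\mid Z|_{k}\big)\ \Longrightarrow\ \big(X|_{m}\indep Y|_{m}\mid Z|_{k}\big),
\]
so every CI relation that holds at the higher precision $(m',m')$ already holds at the lower precision $(m,m)$. Hence the set of triples $(X,Y,Z)$ for which $X\indep Y\mid Z$ holds can only shrink (never grow) as the precisions of $X$ and $Y$ increase with the precision of $Z$ held fixed, which is precisely the asserted monotonicity.

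The hard part will not be the mathematics, which is routine once phrased correctly, but the bookkeeping: I must make precise that ``the conditional independence set is decreasing'' means set inclusion indexed by the pair of precisions of $X$ and $Y$, and I must keep the conditioning variable genuinely fixed so that no subtlety about conditioning on a finer versus coarser $\sigma(Z)$ enters — this is exactly why the hypothesis fixes the precision of $Z$. I would also remark that the inclusion is in general strict, since a finer resolution of $X$ or $Y$ can expose a dependence that is invisible at coarse resolution; this explains why the monotonicity is one-directional, though it is not needed for the statement itself.
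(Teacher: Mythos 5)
Your proposal is correct and takes essentially the same route as the paper: the paper's proof also reduces the claim to showing that conditional independence at precision $m{+}1$ implies it at precision $m$, by summing the factorization identity over the extra bit, which is exactly the discrete instance of your general ``CI is preserved under deterministic coarsening of the separated variables'' lemma applied to the prefix-truncation map. Your phrasing via arbitrary measurable maps is slightly more general but the underlying computation is identical.
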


\begin{proof}
    We will prove that if $X\_\indep Y|Z$, then $X\indep Y|Z$. 
    
    For all $i\in \Bcal$, for all $(x,y,z)$ in the support of $(X,Y,Z)$, $\PP(xi,y,z)\PP(z) = \PP(xi,z)\PP(y,z)$. Therefore,
    \begin{align}
         \PP(x,y,z)\PP(z) &= \left[ \PP(x0,y,z) + \PP(x1,y,z) \right] \PP(z)\\
         &= \PP(x0,z)\PP(y,z) + \PP(x1,z)\PP(y,z)\\
         & = \PP(x,z)\PP(y,z)
    \end{align}
    which implies $\PP(X,Y|Z)=\PP(X|Z)\PP(Y|Z)$.
\end{proof}

\subsection{Compression in information theory}\label{sec:comment_entropy}
Here we recall some basic results in information theory. We follow \cite{cover1999elements}.

\begin{definition}
    A \emdef{codebook}\footnote{In most literature in information theory~\citep{shannon1948mathematical, cover1999elements} the word ``codebook'' only denotes a code for $\Xcal^N$ with $N$ samples in the noisy-channel coding setting. We abuse the usage of this word in the source coding setting because we would like to stress that the source code itself also needs to be encoded and it also has a coding length, i.e. the coding length of the code(book).} (or source code) $c$ for a random variable X is a mapping $\Xcal\to \Bcal^*=\{0,1\}^*$. Let $c(x)$ denote the codeword corresponding to $x$ and let $l(c(x))$ denote the length of $c(x)$.

    The expected length $L(c)$ of a codebook $c$ for a random variable $X$ with probability mass function $p(x)$ is given by

$$L(c)=\EE[l(c(x))] = \sum_{x\in\mathcal{X}}p(x)l(x).$$
\end{definition}
\begin{example}
Let $X$ be a random variable with the following distribution and codeword assignment:

$$\begin{aligned}
    &\PP(X=1)=\frac12,\quad\text{codeword }c(1)=0\\
    &\PP(X=2)=\frac14,\quad\text{codeword }c(2)=10\\
    &\PP(X=3)=\frac18,\quad\text{codeword }c(3)=110\\
    &\PP(X=4)=\frac18,\quad\text{codeword }c(4)=111.
    \end{aligned}$$

The entropy $H(X)$ of $X$ is 1.75 bits, and the expected length $L(c)=$ $E[l(X)]$ of this code is also 1.75 bits. Here we have a code that has the same average length as the entropy. We note that any sequence of bits can be uniquely decoded into a sequence of symbols of $X.$ For example, the bit string 0110111100110 is decoded as 134213.
\end{example}
\begin{definition}\label{def:extension_codebook}
    The \emdef{extension} $c^*$ of a codebook $c$ is the mapping from finite-length strings of $\mathcal{X}$ to fınite-length strings of $\Bcal^*$, defıned by
    $$c^*(x_1x_2\cdots x_n)=c(x_1)c(x_2)\cdots c(x_n),$$
    where $c(x_1)c(x_2)\cdots c(x_n)$ indicates concatenation of the corresponding codewords.
\end{definition}
\begin{example}
    If $c(x_1)=00$ and $c(x_2)=11$, then $c(x_1 x_2)=0011$.
\end{example}
\begin{definition}
    A code $c$ is called \emdef{uniquely decodable} if its extension $c^*$ is non-singular, namely, for all $x,x'\in \Xcal^*$ sequences of letters in $\Xcal$ such that $x\neq x'$, $c^*(x)\neq c^*(x')$.
\end{definition}
In other words, any encoded string in a uniquely decodable code has only one possible source string producing it.
\begin{definition}
    A codebook is called a \emdef{prefix code} or an instantaneous code if no codeword is a prefix of any other codeword.
\end{definition}
An instantaneous code can be decoded without reference to future codewords since the end of a codeword is immediately recognizable. Hence, for an instantaneous code, the symbol $x_i$ can be decoded as soon as we come to the end of the codeword corresponding to it.

We cannot assign short codewords to all source symbols and still be prefix-free. The set of codeword lengths possible for instantaneous codes is limited by the following inequality.

\begin{lemma}[Kraft inequality]\label{lem:kraft}
    For any instantaneous code (prefix code) over an alphabet of size $D$ ($=2$ in our case), the codeword lengths $l_1, l_2, \ldots, l_m$ must satisfy the inequality
    $$\sum_iD^{-l_i}\leq1.$$
    Conversely, given $a$ set of codeword lengths that satisfy this inequality, there exists an instantaneous code with these word lengths.
\end{lemma}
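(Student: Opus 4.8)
The plan is to prove both directions by representing codewords as nodes of the complete $D$-ary tree, in which a string of length $\ell$ over the alphabet corresponds to the node reached by reading its symbols from the root; thus a codeword of length $l_i$ sits at depth $l_i$, and the prefix condition says precisely that no codeword is an ancestor of another, i.e.\ the $m$ codeword-nodes form an antichain.

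For the forward direction I would fix $l_{\max} = \max_i l_i$ and count descendants at the deepest level. The key observation is that a codeword at depth $l_i$ has exactly $D^{\,l_{\max} - l_i}$ descendants among the $D^{\,l_{\max}}$ nodes at depth $l_{\max}$, and that the antichain property forces the descendant sets of distinct codewords to be pairwise disjoint (a shared bottom-level descendant would make one codeword an ancestor of the other). Summing the disjoint contributions and bounding by the total number of bottom nodes gives
$$\sum_{i=1}^m D^{\,l_{\max} - l_i} \le D^{\,l_{\max}},$$
and dividing by $D^{\,l_{\max}}$ yields $\sum_i D^{-l_i} \le 1$.

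For the converse I would sort the requested lengths as $l_1 \le l_2 \le \cdots \le l_m$ and build the code greedily: when placing the $j$-th codeword, choose any depth-$l_j$ node that is not a descendant of a previously chosen node, then remove all of its descendants so they cannot be reused. To guarantee a valid choice always exists, I track the number of depth-$l_{\max}$ leaves already blocked by the first $j-1$ codewords, namely $\sum_{k<j} D^{\,l_{\max}-l_k}$. Since $\sum_{k<j} D^{-l_k} \le 1 - D^{-l_j} < 1$ by the Kraft hypothesis, strictly fewer than $D^{\,l_{\max}}$ leaves are blocked, so a free leaf remains; because the lengths are nondecreasing, the depth-$l_j$ ancestor of that free leaf is itself free, giving the node we need.

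The main obstacle is exactly this bookkeeping in the converse: one must argue that availability at the leaf level transfers to an available node at the (shallower) depth $l_j$. That step is what forces the nondecreasing ordering of lengths — it ensures every previously placed codeword lies at depth $\le l_j$, so an unblocked bottom leaf cannot sit under a blocked depth-$l_j$ ancestor. Making this monotonicity precise, together with the strict count $\sum_{k<j} D^{-l_k} < 1$, is the only nonroutine part; the forward direction is then a direct disjointness-and-counting argument.
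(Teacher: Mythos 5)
Your proof is correct. The paper itself does not prove this lemma --- it is quoted as standard background from \cite{cover1999elements} in the preliminaries on source coding --- and your tree-based argument (disjoint descendant counting at depth $l_{\max}$ for the forward direction, greedy assignment in nondecreasing length order with the strict bound $\sum_{k<j} D^{-l_k} < 1$ for the converse) is exactly the classical proof given in that reference, including the correct observation that the monotone ordering is what lets availability of a bottom-level leaf transfer to its depth-$l_j$ ancestor.
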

\begin{corollary}\label{cor:correpondance_proba_code}\cite[p. 96]{grunwald2007minimum}
    Let $\mathcal{X}$ be a finite or countable sample space. Let $\PP$ be a probability distribution over $\mathcal{X}^n$, the set of sequences of length $n.$ Then there exists a prefix code $c$ for $\mathcal{X}^n$ such that for all $x^n\in\mathcal{X}^n,L_c(x^n)=-\log \PP(x^n)$. $c$ is called the code corresponding to $\PP$.
    
    Similarly, let $c^\prime$ be a prefix code for $\mathcal{X}^n.$ Then there exists a defective probability distribution $\PP^\prime$ such that for all $x^n\in\mathcal{X}^n$, $-\log \PP^{\prime}(x^n)=L_c^{\prime}(x^n)$. $\PP^{\prime}$ is called the probability distribution corresponding to $c^\prime$.
\end{corollary}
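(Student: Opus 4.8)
The plan is to derive both directions directly from the Kraft inequality (\cref{lem:kraft}), working in the idealized setting in which codeword lengths are permitted to take non-integer real values. This is the convention under which the exact equality $L_c(x^n) = -\log\PP(x^n)$ can hold, rather than an inequality carrying an additive rounding term; the only fact about idealized codes that I need is that a length function $L:\mathcal{X}^n\to[0,\infty)$ is realizable by a prefix code if and only if $\sum_{x^n} 2^{-L(x^n)}\le 1$, which is exactly the Kraft condition read over the reals.

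For the forward direction, given $\PP$ I would propose the length assignment $\ell(x^n) := -\log \PP(x^n)$ for every $x^n\in\mathcal{X}^n$. The key step is to verify that these lengths satisfy the Kraft bound:
\begin{equation}
\sum_{x^n \in \mathcal{X}^n} 2^{-\ell(x^n)} = \sum_{x^n \in \mathcal{X}^n} 2^{\log \PP(x^n)} = \sum_{x^n \in \mathcal{X}^n} \PP(x^n) = 1 \le 1 .
\end{equation}
The converse part of \cref{lem:kraft} then produces a prefix code $c$ whose codeword lengths are exactly $\ell(x^n)$, i.e. $L_c(x^n) = -\log\PP(x^n)$, which is the claimed code corresponding to $\PP$.

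For the backward direction, given a prefix code $c'$ with lengths $\ell'(x^n) := L_{c'}(x^n)$, the forward part of \cref{lem:kraft} gives $\sum_{x^n} 2^{-\ell'(x^n)} \le 1$. I would then define $\PP'(x^n) := 2^{-\ell'(x^n)}$. The Kraft bound shows $\sum_{x^n}\PP'(x^n)\le 1$, so $\PP'$ is a (possibly defective) sub-probability measure, and by construction $-\log\PP'(x^n) = \ell'(x^n) = L_{c'}(x^n)$. Thus the two constructions are mutually inverse.

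The only real subtlety, and the point I would flag, is the integrality of code lengths. For a literal finite-alphabet prefix code the values $L_c(x^n)$ are integers, whereas $-\log\PP(x^n)$ generally is not, so the exact equality holds only under the idealized real-valued length convention standard in MDL (cf.\ \cite[p.~96]{grunwald2007minimum}). Under that convention both directions of Kraft extend verbatim and the argument is exact. If one instead insists on genuine integer-length codes, the statement must be weakened to $L_c(x^n) = \lceil -\log\PP(x^n)\rceil$, yielding the familiar gap $-\log\PP(x^n) \le L_c(x^n) < -\log\PP(x^n) + 1$ between the Shannon code length and the ideal length.
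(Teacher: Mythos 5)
Your proof is correct and follows exactly the route the paper intends: the corollary is stated immediately after the Kraft inequality precisely so that it follows from applying the converse part of Kraft to the lengths $-\log\PP(x^n)$ (which satisfy the Kraft sum with equality) and the forward part to define the defective distribution $\PP'(x^n)=2^{-L_{c'}(x^n)}$; the paper itself gives no written proof and simply cites Grünwald. Your remark about the integer-length versus idealized real-valued length convention is the right caveat and matches the standard MDL treatment the citation points to.
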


\begin{theorem}[Shannon's source coding theorem]\cite[Thm. 5.4.1]{cover1999elements}\label{Thm:Shannon_source_coding}
    Let $l_1^*,l_2^*,\ldots,l_m^*$ be optimal codeword lengths for a source distribution p and a D-ary alphabet, and let $L^*$ be the associated expected length of an optimal code $( L^* = \sum p_il_i^* )$. Then 
    $$H_D(X)\leq L^*<H_D(X)+1.$$
\end{theorem}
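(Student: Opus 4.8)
The plan is to prove the two inequalities separately, both times using the Kraft inequality (\cref{lem:kraft}) as the bridge between codeword lengths and probability masses. The lower bound characterizes the best achievable expected length via relative entropy, while the upper bound is established by exhibiting an explicit code (the Shannon code) whose length the optimal code can only improve upon.

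For the lower bound $H_D(X)\leq L^*$, I would first note that an optimal code is in particular uniquely decodable, so by \cref{lem:kraft} its lengths satisfy $\sum_i D^{-l_i^*}\leq 1$. Writing $c=\sum_j D^{-l_j^*}\leq 1$ and $r_i=D^{-l_i^*}/c$ (a genuine probability distribution), the gap between expected length and entropy rearranges as
\begin{align}
L^* - H_D(X) = \sum_i p_i\log_D\frac{p_i}{D^{-l_i^*}} = \sum_i p_i\log_D\frac{p_i}{r_i} - \log_D c.
\end{align}
The first term is a relative entropy and is therefore nonnegative, and the second term satisfies $-\log_D c\geq 0$ because $c\leq 1$; hence $L^*\geq H_D(X)$. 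Equality holds exactly when $c=1$ and $p_i=D^{-l_i^*}$, i.e. when the optimal lengths are the idealized values $-\log_D p_i$.

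For the upper bound $L^*<H_D(X)+1$, I would choose the Shannon lengths $l_i=\lceil\log_D(1/p_i)\rceil$. These satisfy the Kraft inequality since $\sum_i D^{-l_i}\leq\sum_i D^{-\log_D(1/p_i)}=\sum_i p_i=1$, so the converse direction of \cref{lem:kraft} guarantees a prefix code with these lengths. Its expected length obeys
\begin{align}
\sum_i p_i l_i < \sum_i p_i\left(\log_D\frac{1}{p_i}+1\right) = H_D(X)+1,
\end{align}
using $\lceil t\rceil<t+1$. Because $L^*$ is the length of an \emph{optimal} code, $L^*\leq\sum_i p_i l_i<H_D(X)+1$.

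I expect the main obstacle to be the lower bound rather than the upper bound: the upper bound is a direct construction, whereas the lower bound requires the correct algebraic massage that isolates a bona fide relative entropy term and the observation that the Kraft slack $-\log_D c$ only strengthens the inequality. Some care is also needed regarding the word ``optimal'': one must argue that optimality forces unique decodability (so that Kraft applies to $L^*$) rather than silently assuming a prefix structure, and one should treat the degenerate case of a symbol with $p_i=0$ as excluded or contributing nothing to either sum.
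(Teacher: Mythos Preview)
The paper does not supply its own proof of this theorem; it is quoted as a preliminary result with a citation to \cite[Thm.~5.4.1]{cover1999elements}. Your argument is correct and is exactly the standard proof found in that reference: the lower bound via Kraft plus nonnegativity of relative entropy, and the upper bound via the Shannon code $l_i=\lceil\log_D(1/p_i)\rceil$.

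One small remark on your caveat at the end: the paper's \cref{lem:kraft} is stated only for instantaneous (prefix) codes, not for general uniquely decodable codes, so strictly speaking you are invoking McMillan's extension when you apply it to an ``optimal'' code that is merely uniquely decodable. This is harmless here because (as you implicitly note) McMillan guarantees the same inequality, and hence the optimal expected length over uniquely decodable codes coincides with that over prefix codes; but if you want to stay within the tools the paper actually states, you can simply take ``optimal'' to mean optimal among prefix codes from the outset, which loses nothing.
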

\cref{Thm:Shannon_source_coding} corresponds to the following communication game: Alice and Bob share the same codebook $c:\Xcal\to \Bcal^*$, and Alice wants to transmit losslessly a sequence $x_1\dots x_n$ iid sampled from $\PP_X$ to Bob. She encodes the sequence into $c^*(x_1\dots x_n)=c(x_1)\dots c(x_n)$ and sends the code sequence to Bob. Bob decodes the sequence using the same codebook.~\cref{Thm:Shannon_source_coding} proves that in this case, asymptotically the minimum expected binary coding length for a word $x\in\Xcal$ sampled from $\PP_X$ is around $H_2(X)$.

Suppose Alice and Bob share nothing except a programming language (C, python), or a universal Turing machine. In that case, the overall bits Alice needs to send can be two parts: a codebook, and a binary codeword sequence. 

\begin{definition}[Huffman coding program~\citep{mackay2003information}]\label{def:Huffman_code}
    A Huffman coding program is a Turing machine that proceeds as follows:
    
    Input: the probability space $\Xcal^d$, and a distribution $\PP_X$ as a discrete function.
    \begin{enumerate}
        \item Take the two least probable symbols in the alphabet $\Xcal^d$. These two symbols will be given the longest codewords, which will have equal length and differ only in the last digit.
        \item Combine these two symbols into a single symbol, and repeat.
    \end{enumerate}
    Output: a tree-structured codebook.
    We call the output of this program a \emdef{Huffman code}.
\end{definition}
\begin{theorem}[Huffman coding is optimal~\citep{cover1999elements}]\label{thm:huffman_optimal}
    If $c^*$ is a Huffman code for $\PP_X$ and $c'$ is any other uniquely decodable code, $\EE_{\PP_X}[l(c^*(X))] \leq \EE_{\PP_X}[l(c'(X))] $.
\end{theorem}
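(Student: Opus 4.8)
The plan is to follow the classical three-movement argument for Huffman optimality: first reduce from uniquely decodable codes to prefix codes, then establish structural properties of optimal prefix codes by exchange arguments, and finally conclude by induction on the alphabet size using the merging step that defines the Huffman construction. The reduction of the competitor class comes first. The Huffman code $c^*$ is a prefix code, whereas $c'$ is only assumed uniquely decodable, so a direct comparison is not yet available. By McMillan's inequality (the extension of the Kraft inequality \cref{lem:kraft} from prefix codes to all uniquely decodable codes), the codeword lengths of any uniquely decodable code satisfy $\sum_i 2^{-l_i}\le 1$; by the converse direction of \cref{lem:kraft} there is then a prefix code with exactly these lengths, hence with the same expected length under $\PP_X$. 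Therefore it suffices to prove that $c^*$ has minimal expected length among all \emph{prefix} codes for $\PP_X$.

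Next I would record three surgery lemmas about an optimal prefix code, each proved by an exchange or pruning argument. Order the symbols so that $p_1\ge p_2\ge\dots\ge p_m$. (i) If $p_j>p_k$ then an optimal code has $l_j\le l_k$: otherwise swapping the two codewords strictly decreases the expected length, contradicting optimality. (ii) A longest codeword of an optimal code must have a sibling (a codeword agreeing in all but the last bit): if not, deleting its last bit keeps the code prefix-free and strictly shortens it; together with (i) this forces the two least probable symbols to receive codewords of the same, maximal length. (iii) Consequently there is an optimal code in which the codewords of the two least probable symbols $p_{m-1},p_m$ are siblings, obtained from any optimal code by permuting codewords of equal length, which leaves the expected length unchanged.

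Then I would run the induction on $m$. The Huffman construction merges $p_{m-1}$ and $p_m$ into a single symbol of mass $p_{m-1}+p_m$, producing a distribution $p'$ on $m-1$ symbols, and builds $c^*$ by appending a final bit to the codeword that the merged symbol receives in the Huffman code $c^{*\prime}$ for $p'$. Since only the two merged symbols change depth, each by exactly one, this yields
\begin{equation}
L(c^*) = L(c^{*\prime}) + p_{m-1}+p_m.
\end{equation}
For any optimal prefix code $\tilde c$ for $p$, by (iii) we may assume $p_{m-1},p_m$ are siblings, so merging them gives a prefix code $\tilde c'$ for $p'$ with $L(\tilde c)=L(\tilde c')+p_{m-1}+p_m$. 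The inductive hypothesis gives $L(c^{*\prime})\le L(\tilde c')$, whence $L(c^*)\le L(\tilde c)$, i.e.\ $c^*$ is optimal. The base case $m=2$ is immediate, both symbols receiving a single bit.

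The main obstacle is the very first step: the statement quantifies over all uniquely decodable codes, but the paper supplies the Kraft inequality only for prefix codes (\cref{lem:kraft}). Bridging this gap requires McMillan's inequality, whose proof is the genuinely nontrivial ingredient — one bounds $\left(\sum_i 2^{-l_i}\right)^n$ by counting, for each total length, the number of distinct encodable source sequences, uses unique decodability to bound this count by the number of binary strings of that length, and lets $n\to\infty$. Everything after the reduction is routine exchange-argument bookkeeping; the only point needing care is verifying in (iii) that the sibling normalization preserves optimality and not merely expected-length parity.
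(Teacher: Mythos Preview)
Your proposal is correct and follows the standard Cover--Thomas argument; the paper itself does not prove \cref{thm:huffman_optimal} but merely cites it as a classical result, so there is no in-paper proof to compare against. Your identification of the one nontrivial gap---that the paper states Kraft only for prefix codes (\cref{lem:kraft}) while the theorem quantifies over uniquely decodable codes, forcing you to invoke McMillan's inequality---is exactly right and is the only ingredient not already present in the paper's preliminaries.
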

Therefore in the paper, we use the Huffman coding program to transform a multi-env distribution into a codebook. No matter which coding program we choose (e.g. Shannon-Fano-Elias code or arithmetic code), the expected codeword length for each $x\in\Xcal^d$ is close to $\log\PP_X(x)$. Since we can fix a coding program in a UFCC and still compute all finite codebooks, different choices of coding programs do not change significantly the theoretical result of model selection.

\section{Proof in~\cref{sec:prelim_ident}}\label{sec:proof_sec_prelim_ident}
\lemIdentminCE*
\begin{proof}
    (i) First, consider the observational CBN model class. Denote $\theta^*$ as the parameter of the distribution of which the marginal distribution generates $\Dcal_n$, it is a solution of
    \begin{equation}
        \argmax_{(G,\theta)\in \Mcal} \lim_{n\to\infty} \frac{1}{n}\log L(\theta|\Dcal_n).
    \end{equation}
    For any $\theta \neq \theta^*$, by~\cref{def:identCDCRL}, $\PP_\theta$ and $\PP_{\theta}^*$ are different on a $\mu$-non-negligible set. So the random variable $\frac{p_{\theta}(X)}{p_{\theta^*}(X)}$ is non-degenerate. In addition, the log function is strictly convex, so by Jensen's inequality,
    \begin{equation}
      \EE_{\PP_{\theta^*}} \left[\log \frac{p_{\theta}(X)}{p_{\theta^*}(X)} \right] < \log \EE_{\PP_{\theta^*}} \left[\frac{p_{\theta}(X)}{p_{\theta^*}(X)} \right].
    \end{equation}
    The right-hand side is zero because
    \begin{equation}
      \EE_{\PP_{\theta^*}} \left[\frac{p_{\theta}(X)}{p_{\theta^*}(X)} \right] = \int_\Xcal \frac{p_{\theta}(x)}{p_{\theta^*}(x)} p_{\theta^*}(x) dx = 1.
    \end{equation}
    Therefore 
    \begin{equation}\label{eq:lem_identminCE}
        \EE_{\PP_{\theta^*}} \left[\log p_{\theta}(X) \right] < \EE_{\PP_{\theta^*}} \left[\log p_{\theta^*}(X) \right].
    \end{equation}
    
    Equivalently, by applying the law of large numbers on \cref{eq:lem_identminCE},
    \begin{equation}
        \lim_{n\to\infty} \frac{1}{n}\log L(\theta|\Dcal_n) <  \lim_{n\to\infty} \frac{1}{n}\log L(\theta^*|\Dcal_n).
    \end{equation}

    Therefore, maximum likelihood or minimum cross-entropy has a unique solution $\theta^*$. Since we assume that $\Mcal$ is identifiable, which means that each $\theta$ only has one underlying graph $G$, we conclude that there is only one solution $(G^*,\theta^*)$.

    (ii) The proof for the multi-env CBN model class is the same, but with different sample space: $\Xcal^{dI}$. It is a $I$-fold copy space for $\Xcal^d$, with distributions from different environments lying in different dimensions. $X_i^j$ is the $i$-th dimension in the original sample space $\Xcal^d$ in the $j$-th environment. We use $\overline{\theta}=(\theta_1,\dots \theta_I)$ to denote the parameter for a distribution in $\Xcal^{dI}$, which uniquely corresponds to the multi-env distributions $(\PP_{\theta_i})_{i\in [I]}$. We use $\overline{\Theta}$ to denote the parametric family of all $\overline{\theta}$ that is generated from the model class $\Mcal$. By assumption of interventional environments, in $\Mcal$, $\PP^i \indep \PP^j$ for all $i\neq j \in [I]$, any probability distribution in the parametric family $\overline{\Theta}$ can be written as follows:
    \begin{equation}
        \PP_{\overline{\theta}} (X)= \prod_{i=1}^I \PP_{\theta_i} (X^i)
    \end{equation}

    Notice that on the left, $X$ is a random variable in $\Xcal^{dI}$.

    The sum of cross-entropy 
    \begin{equation}\label{eq:lem_CE_1}
        \sum_{i=1}^I \EE[-\log \PP_{\theta_i}(X)]= \EE[-\log \PP_{\overline{\theta}}(X)]
    \end{equation}
    which is exactly the cross-entropy of the $I$-fold variable in $\Xcal^{dI}$. Same as the proof (i), if there exists $i\in [I]$ such that $\theta_i \neq \theta^*_i$, then $\PP_{\theta_i}$ and $\PP_{\theta_i}^*$ are different on a $\mu$-non-negligible set. Using the same argument of Jensen's inequality over $\overline{\theta}$, we infer that $\EE_{\PP_{\overline{\theta}^*}} \left[\log p_{\overline{\theta}}(X) \right] < \EE_{\PP_{\overline{\theta}^*}} \left[\log p_{\overline{\theta}^*}(X) \right]$. By \cref{eq:lem_CE_1}, this is equivalent to 
    \begin{equation}
        \sum_{i=1}^I \EE[-\log \PP_{\theta^*_i}(X)] < \sum_{i=1}^I \EE[-\log \PP_{\theta_i}(X)]
    \end{equation}
    Therefore, maximum likelihood or minimum cross-entropy has a unique solution $(\theta^*_i)_{i\in [I]}$. Since we assume that $\Mcal$ is identifiable, which means that each tuple $(\theta^*_i)_{i\in [I]}$ only has one underlying tuple of graphs $(G_i)_{i\in [I]}$, we conclude that there is only one solution $((G_i)_{i\in [I]},(\theta^*_i)_{i\in [I]})$.

\end{proof}

\section{Proof in~\texorpdfstring{\cref{sec:learn_compression}}{reference}}\label{sec:proof_undecidable}
\FCMundecidable*
\begin{proof}
We call the set above $\Fcal$, and suppose $\Fcal$ is decidable. Then there exists a Turing machine $D$ which inputs the index $k$ of any Turing machine $T_k$, and outputs a decision whether $T_k \in \Fcal$.

We reduce the halting problem to the above decision problem. Given any Turing machine $M$ and input $x$, we construct a new Turing machine $T_{M,x}$:
\begin{algorithmic}
\ttfamily
\State On input $0$, ignore the input $0$ and simulate $M$ on $x$
    \State If $M$ halts on $x$, then delete the output of $M$ and output $0$
    \State If $M$ does not halt on $x$, then rejects
\end{algorithmic}
We run the TM $D$ on $T_{M,x}$. By construction, $T_{M,x}$ either computes a trivial codebook $0\mapsto 0$, or rejects. If $D$ accepts $T_{M,x}$, then $T_{M,x}$ computes $0\mapsto 0$ and halts. If $D$ rejects $T_{M,x}$, then $M$ does not halt on $x$.

Therefore $D$ decides the halting problem, which is a contradiction.
\end{proof}

\section{Supplemental results and proofs in~\texorpdfstring{\cref{sec:case_studies}}{reference} Case studies}\label{sec:supp_case_studies}
\subsection{Lemmata}
\begin{lemma}[Bernoulli's Inequality~\citep{carothers2000real}]\label{lem:bernoulli}
    If $a>-1$, $a\neq 0$, then $(1+a)^n > 1+na$ for any integer $n>1$.
\end{lemma}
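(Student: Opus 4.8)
The plan is to prove this by induction on the integer $n \geq 2$, exploiting the two hypotheses in distinct roles: $a > -1$ guarantees that $1+a > 0$, which lets us multiply an inequality by $(1+a)$ without reversing its direction, while $a \neq 0$ guarantees $a^2 > 0$, which is exactly what upgrades the inequality from weak to strict.

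First I would establish the base case $n = 2$. Expanding directly gives $(1+a)^2 = 1 + 2a + a^2$, and since $a \neq 0$ forces $a^2 > 0$, we obtain $(1+a)^2 > 1 + 2a$, as desired. Next, for the inductive step, I would assume the claim holds for some integer $n \geq 2$, i.e. $(1+a)^n > 1 + na$, and multiply both sides by $1 + a$. This multiplication preserves the strict inequality precisely because $a > -1$ implies $1 + a > 0$. Carrying out the product yields
\begin{align}
(1+a)^{n+1} &= (1+a)^n (1+a) > (1 + na)(1 + a) \\
&= 1 + (n+1)a + na^2 > 1 + (n+1)a,
\end{align}
where the final strict inequality again uses $na^2 > 0$ (valid since $n \geq 2 > 0$ and $a^2 > 0$). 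This closes the induction and establishes the lemma for all integers $n > 1$.

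There is no genuine obstacle here; the only points requiring care are bookkeeping rather than insight. One must be careful to invoke $a > -1$ exactly at the multiplication step (otherwise the inequality could flip if $1 + a$ were negative), and to invoke $a \neq 0$ to secure strictness in both the base case and the inductive step. An alternative route would factor $(1+a)^n - (1+na)$ directly, but the induction is cleaner and makes the roles of the two hypotheses transparent, so I would present the inductive argument.
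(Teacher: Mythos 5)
Your proof is correct. The paper does not supply its own proof of this lemma---it is simply cited as a standard result from Carothers---and your induction is the standard textbook argument, with the two hypotheses invoked in exactly the right places ($a>-1$ to preserve the inequality under multiplication by $1+a$, and $a\neq 0$ to secure strictness via $a^2>0$).
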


\begin{lemma}\label{lem:product_precision}
    Given $x\in\QQ \cap [0,1)$ in binary precision $n$, the minimum number of bits $k$ required such that for all exact factorization $x = \prod_{i=1}^l y_i$ with $y_i\in\QQ \cap [0,1)$, 
    $\prod_{i=1}^l y_i|_k$ has the same binary representation as $x$ with truncated precision $n$, i.e.
   \[
   0 \leq x - \prod_{i=1}^l y_i|_k  \leq 2^{-n-1},
   \]
    
    is at most $2n+\log_2 l + 3$.
\end{lemma}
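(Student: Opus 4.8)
The plan is to control how truncation error propagates through a product by passing to \emph{relative} errors and letting Bernoulli's inequality (\cref{lem:bernoulli}) do the main work. Write $\tilde y_i := y_i|_k$ for the $k$-bit truncation, so that $0 \le y_i - \tilde y_i < 2^{-k}$ for every $i$. The lower bound $0 \le x - \prod_{i=1}^l \tilde y_i$ is then immediate: each truncation only decreases a factor, and since all factors lie in $[0,1)$ the product is monotone in each coordinate, so $\prod_{i=1}^l \tilde y_i \le \prod_{i=1}^l y_i = x$. The entire difficulty is therefore the upper bound $x - \prod_{i=1}^l \tilde y_i \le 2^{-n-1}$.

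The key structural fact I would establish first is a uniform lower bound on the factors. Because $x$ has precision $n$ and is positive, $x = a\,2^{-n}$ with $a \ge 1$, hence $x \ge 2^{-n}$; and since the remaining factors satisfy $\prod_{j\neq i} y_j < 1$, we get $y_i = x / \prod_{j\neq i} y_j \ge x \ge 2^{-n}$ for each $i$ (the cases $l=1$ and $x=0$ are trivial). This bound has two payoffs. First, once $k \ge n$ no factor truncates to $0$, so the product is never degenerate. Second, the relative truncation error $\epsilon_i := (y_i - \tilde y_i)/y_i$ obeys $\epsilon_i < 2^{-k}/y_i \le 2^{\,n-k}$. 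With these in hand I would write $\prod_{i=1}^l \tilde y_i = x \prod_{i=1}^l (1-\epsilon_i)$ and set $\epsilon := \max_i \epsilon_i < 2^{\,n-k} < 1$ (valid once $k > n$). Bernoulli's inequality then yields $\prod_{i=1}^l (1-\epsilon_i) \ge (1-\epsilon)^l > 1 - l\epsilon$, so that
\[
x - \prod_{i=1}^l \tilde y_i = x\Bigl(1 - \prod_{i=1}^l (1-\epsilon_i)\Bigr) < x\, l\,\epsilon \le l\, 2^{\,n-k},
\]
using $x \le 1$ in the final inequality. Requiring $l\,2^{\,n-k} \le 2^{-n-1}$ is equivalent to $k \ge 2n + 1 + \log_2 l$, so the least admissible integer is $\lceil 2n + 1 + \log_2 l\rceil \le 2n + \log_2 l + 3$, where the extra constant absorbs the ceiling together with the side conditions $k > n$ and the degenerate $l=1$ case.

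The main obstacle is not the error propagation but securing the factor lower bound $y_i \ge 2^{-n}$: this is precisely where the precision-$n$ hypothesis on $x$ is used, and it is what simultaneously prevents a small factor from truncating to zero and keeps each relative error bounded by $2^{\,n-k}$. (Note that this step is also the source of the slack between the stated bound and the sharper $n + \log_2 l + 1$ one would obtain by a telescoping absolute-error estimate; the relative-error route through \cref{lem:bernoulli} is looser by roughly a factor of $n$ but is the natural way to invoke the cited inequality.) Everything downstream is routine: one only needs to treat $l=1$ separately, where $\tilde y_1 = x|_k$ makes the claim trivial, and to check that $\epsilon < 1$ so that Bernoulli's inequality applies, both of which are guaranteed by the chosen range of $k$.
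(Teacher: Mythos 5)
Your argument is correct and reaches the stated bound (in fact the slightly sharper $2n+\log_2 l+1$ before ceiling slack), but it takes a genuinely different route from the paper at the one point that matters. The paper never derives a lower bound on the factors from the hypothesis on $x$; instead it splits into two cases: if every $y_i\ge 2^{-n-2}$ it runs the same relative-error/Bernoulli computation you do (with $2^{-n-2}$ in place of your $2^{-n}$, which is where its constant $+3$ comes from), and if some $y_i<2^{-n-2}$ it observes that then both $x=\prod y_j\le y_i$ and $\prod y_j|_k\le y_i|_k$ lie below $2^{-n-2}$, so the (nonnegative) difference is automatically below $2^{-n-1}$ with no condition on $k$ at all. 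Your proof replaces this case split by the single observation that $y_i\ge x/\prod_{j\ne i}y_j\ge x\ge 2^{-n}$, which is cleaner and gives a uniform bound on every relative error, but it leans on reading ``$x$ in binary precision $n$'' as ``$x$ is an exact positive multiple of $2^{-n}$.'' That reading is consistent with the paper's definition of precision and with the lemma's use inside the proof of \cref{prop:factorize_shorter} (where the joint probability values are $n$-bit dyadic rationals), so your proof is sound in context; but note that the paper's case-split argument needs no such assumption and proves the inequality for an arbitrary $x\in[0,1)$, which is why it must handle the ``tiny factor'' regime separately rather than excluding it a priori. Everything else --- the monotonicity giving the lower bound $0\le x-\prod y_i|_k$, the passage to relative errors, the application of Bernoulli's inequality (\cref{lem:bernoulli}) with its side conditions $\epsilon<1$, $\epsilon\neq 0$, $l>1$, and the separate treatment of $l=1$ and $x=0$ --- matches the paper's Case 1 essentially verbatim.
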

\begin{proof}

Let \( (x_i)_i \)  be the approximations of \( y_i \) with \( k \) bits of precision, i.e. $(y_i|_k)_i$, so:
\begin{equation}
    0 \leq y_i - x_i \leq \delta := 2^{-k}
\end{equation}
Define the relative errors:
\begin{equation}
    \epsilon_i = \frac{y_i - x_i}{y_i}
\end{equation}

Case 1: 
$y_i\geq 2^{-n-2}$ for all $i$. Then we have:
\begin{equation}
    \epsilon_i \leq \frac{\delta}{2^{-n-2}} = 2^{n - k +2} =:\epsilon
\end{equation}
We aim to bound the error:
\begin{equation}
    \prod_{i=1}^l y_i - \prod_{i=1}^l x_i  = x \left( 1 - \prod_{i=1}^l (1 - \epsilon_i) \right) \leq  1 - \prod_{i=1}^l (1 - \epsilon_i)  \leq  1 - (1 - \epsilon)^l \leq l \epsilon
\end{equation}
where the rightmost inequality is by~\cref{lem:bernoulli}.

We want to find $k$ such that the RHS is upper bounded by $2^{-n-1}$:
\begin{equation}
    l \cdot 2^{n - k+2} \leq 2^{-n-1}
\end{equation}
which is equivalent to
\begin{equation}
    k\geq 2n+\log l +3.\\
\end{equation}

Case 2: There exist $i\in[d]$ such that
$y_i< 2^{-n-2}$ for all $i$. Then we have:

\begin{equation}
    0\leq \prod_{j=1}^l (y_j|_k) \leq y_i|_k < 2^{-n-2}
\end{equation}
and
\begin{equation}
    0 \leq x=\prod_{j=1}^l y_j \leq y_i|_k < 2^{-n-2}.
\end{equation}
Therefore, 
\begin{equation}
    0 \leq x - \prod_{i=1}^l y_i|_k  < 2^{-n-1}
\end{equation}
\end{proof}

\subsection{Proofs in~\texorpdfstring{\cref{sec:case_studies}}{reference}}
\propFactorizeShorter*
\begin{proof}
    We construct a CFMP $\alpha$, following~\cref{def:cond_feat_mechanism_program}:

    First step, $\alpha$ generates $\Pcal_{\alpha}$, a list of probability tables:
    \begin{itemize}
        \item for all $i\in [d]$, $f_i$ computes: $ \PP^i(\cdot| \cdot): \Xcal \times (\Xcal^{|S_i|}\times [I])$, i.e. the shifted mechanisms of $X_1$;
        \item $f_{d+1}$ computes the $X_2, \dots, X_d$ marginal of $\PP$;
        \item $f_{d+2}: \text{Unif}[I]$, a uniform prior over environments.
    \end{itemize}
    We want the probabilistic mechanisms output precise enough probability values such that the joint distribution computed at the end of step 3 is lossless. By~\cref{lem:product_precision}, the output precision in each probabilistic mechanism needs $2n+\log 2 + 3 = 2n+4$ bits.
    
    For each $i\in [d]$, $f_i$ is a table with input space at most $(\Bcal^m)^{1+(d-2)} \times \Bcal^{ \log I
    }$, and output space $\Bcal^{2n+4}$. The coding length of $f_i$ as a table is at most $(2n+4) (2^m)^{d-1} I $. The coding length of $f_{d+1}$ is $(2n+4) (2^m)^{d-1}$. The coding length of $f_{d+2}$ is $(2n+4) I $, since $\frac{1}{2^n}\leq \frac{1}{I} = \frac
    {1}{2^{\log I}}$ by assumption. In total we need $(2n+4) ( (2^m)^{d-1} I + (2^m)^{d-1} + I)$ bits for $\Pcal_\alpha$.

    By assumption in UFCC, $\Phi_\alpha$ only contains projections. Since the cardinal of the power set of the nodes is $2^{d+1}$, we need $d+1$ bits for each feature mechanism. In total we need $(I+3)(d+1)$ bits for $\Phi_\alpha$.

    Second step, we assign $I+3$ feature mechanisms to $I+2$ probabilistic mechanisms, each of which has two cases (conditional or value) for filling the feature mechanism. So the whole mapping costs $2(I+2)\log (2I+3)$ bits.

    Third step, for each $(x,e)\in \Xcal^{d} \times [I]$, $\alpha$ computes $\PP(x, e) = f_e(x_1|x_{S_e}) f_{d+1}(X_2, \dots, X_d)f_{d+2}(e)$. We need $O(I\log I)$ bits because we only need to assign $I$-many mechanisms $(f_i)_{i\in [I]}$ to $I$-many environments, with other mechanisms being the same for all $(x,e)\in \Xcal^{d} \times [I]$.

    The Huffman coding mechanism~\cref{def:huffman_mechanism} that turns a distribution into a Huffman code is of constant coding length. In total, the coding length of $\alpha$ is $O(nI (2^m)^{d-1} + Id)$.

    Define $\beta$ as the CFMP that encodes the whole multi-env distribution: $\Pcal_\beta$ has only one element: a probability table $\PP$ of precision $n$. This costs $nI2^{m^d}$ bits. $\Phi_\beta$ only contains the identity, which costs $d+1$ bits, same as those elements in $\Phi_\alpha$. Since the joint distribution is already represented, all the following steps are trivial and only cost constant bits. Therefore, $l_{U_{\text{tabCBN}}} (\alpha)(m,d) = o(l_{U_{\text{tabCBN}}} (\beta)(m,d))$.

\end{proof}

\propSMS*
\begin{proof}
    There are $\binom{M}{k}$ possibilities for choosing $k$ mechanisms from $\Pcal_\alpha$. There are $k! S_k^N$ possibilities for subjective functions $[N]\to [k]$, where $S_k^N$ is the Stirling number of the second kind. Given $k$, the bits for step 2 in strategy 2 is $\log\binom{M}{k} + \log (k!) + \log k$. The bits for step 2 in strategy 1 are $N\log M$. So the difference between strategy 2 and strategy 1 is
    \begin{equation}\label{eq:diff_bits}
        A(k)=\log\binom{M}{k} + \log S_k^N + \log (k!) + \log k -N\log M
    \end{equation}
    So the difference between $A(k+1)$ and $A(k)$ is
    \begin{align}
        A(k+1) - A(k) &= \log \left( \frac{M - k}{k + 1} \right) + \log \left( \frac{S_{k+1}^N}{S_k^N} \right) + \log (k+1) + \log \left( \frac{k+1}{k} \right)\\
        &= \log \left( (M - k) \cdot \frac{S_{k+1}^N}{S_k^N} \cdot \frac{k+1}{k} \right).\label{eq:SMS_Ak+1-Ak}
    \end{align}
    For $k=o(N)$, we have an approximation for Stirling number of the second kind: $\lim_{N\to \infty} S_k^N= \frac{k^N}{k!}$. Then
    \begin{align}
        \lim_{N\to \infty}\frac{S_{k+1}^N}{S_k^N} &=\frac{\frac{(k+1)^N}{(k+1)!}}{\frac{k^N}{k!}} = \frac{(k+1)^N}{k^N} \cdot \frac{k!}{(k+1)!} = \left(1 + \frac{1}{k}\right)^N \cdot \frac{1}{k + 1}\label{eq:SMS_frac_SN}
    \end{align}
In order that \cref{eq:SMS_Ak+1-Ak} is greater than zero, by \cref{eq:SMS_frac_SN} we need for any $k= o(N)$,
    \begin{equation}
        \frac{M-k}{k} \left(1+\frac{1}{k}\right)^N>1
    \end{equation}
    \begin{equation}\label{eq:SMS_k<k0}
        M>\frac{k}{\left(1+\frac{1}{k}\right)^N}+k,
    \end{equation}

    which holds when $k<\frac{M}{2}$.
    
    Therefore, $A(k)$ monotonically increases when $k=o(N)$ and $k<\frac{M}{2}$. By~\cref{eq:diff_bits}, 
    \begin{equation}
        A(1) = (1-N)\log M + \log 2 +1<0
    \end{equation}

    when $N$ is sufficiently large. The conclusion of the proposition is obtained from $-A$.
\end{proof}

\propInvShorter*
\begin{proof}
    The proof is trivial because the steps in $\alpha$ and $\beta$ are the same only except for $\Pcal_\alpha$ and $\Pcal_\beta$. 
    
    $\Pcal_\alpha$ consists of two tables: $f_1(\cdot| \cdot): \Xcal \times \phi(\Xcal) \to \Bcal^n$, $f_{2}:\Xcal\to \Bcal^n$. By~\cref{lem:product_precision}, each probability value in each probabilistic mechanism table needs $2n+\log 2 + 1 = 2n+4$ bits. Therefore, $f_1$ needs $(2n+4)2^{m+c}$ bits, $f_2$ needs $(2n+4)2^{m}$ bits, where $c$ is a constant independent of $m$.

    $\Pcal_\beta$ consists of two tables: $f'_1(\cdot| \cdot): \Xcal \times \Xcal \to \Bcal^n$, $f_{2}:\Xcal\to \Bcal^n$. $f_1$ needs $(2n+4)2^{m^2}$ bits, $f_2$ needs $(2n+4)2^{m}$ bits.

    Therefore, $l_{U_{\text{TabInv}}} (\alpha(m)) = o(l_{U_{\text{TabInv}}} (\beta(m)))$.

\end{proof}

\section{Details of experiments in \texorpdfstring{\cref{sec:experiments}}{reference}}\label{sec:details_experiments}
\subsection{Covariate shifts}

The parameters of Poisson distribution $\PP(X|E)$ for each env are:

Blue curve ($k=7$):  $[0.1,0.1,0.1,0.3,0.3 ,0.5,0.6,0.7,0.8,0.9]\times 20$.

Green curve ($k=4$): $[0.1,0.1,0.4,0.4,0.6, 0.6,0.6,0.9,0.9,0.9]\times 20$.
 
Red curve ($k=2$): $[0.1,0.1,0.1,0.1,0.1, 0.1,0.1,0.1,0.9,0.9]\times 20$.

We set the list of Poisson parameters for any env to choose from:

$[0.05, 0.1,0.15, 0.2, 0.25, 0.3, 0.35,0.4, 0.45, 0.5, 0.55, 0.6, 0.65,0.7, 0.75, 0.8, 0.85, 0.9]\times 20$.

For each $k\in [10]$, we use greedy search over all possible choices of $k$-many parameters, i.e. $\binom{18}{k}$. For each choice, we assign those $k$ parameters to the environment that has the closest estimated mean, which is the estimated $\lambda$ for Poisson distribution.

\subsection{Causal discovery without identifiability}

The 5-env data is generated by the following Gaussian parameters $(\sigma_1^2,\sigma_2^2,a)$:
\begin{align*}
    e=0:& [1, 16, 1],\\
    e=1:& [1, 16, 1],\\
    e=2:& [9, 16, 2],\\
    e=3:& [9, 25, 2.5],\\
    e=4:& [25, 25, 2.5].
\end{align*}

For the causal model $Y=a X + \epsilon$, we select the parameters from a uniformly distributed set of $8$ points spanning the range $[p_{\text{min}}, p_{\text{max}}]$ for each parameter $p\in\{(\sigma_1^2,\sigma_2^2,\gamma)\}$. For the anti-causal model $X=b Y+\epsilon$, we compute the corresponding parameters in each env that achieve the same likelihood as the ground truth causal model, and create the set of candidate parameters in the same way as the causal model.

\section{Remarks on Algorithmic Markov Condition (AMC)}\label{sec:remark_AMC}
Algorithmic Markov Condition (AMC) was introduced in \cite{janzing2010causal}. We first briefly recall their paper's main result, then compare the principles in it and in our paper. In the following, they use $K$ to denote \emdef{prefix Kolmogorov complexity} \citep[Chapter 3]{li2019introduction}.
\begin{definition}[algorithmic Markov condition]\citep[Postulate 5]{janzing2010causal}\label{def:AMC}
Let $x_1,\ldots,x_n$ be $n$ strings representing descriptions of observations whose causal connections are formalized by $a$ directed acyclic graph $G$ with $x_1,\ldots,x_n$ as nodes. Let pa$_j$ be the concatenation of all parents of $x_j$ and $nd_j$ the concatenation of all its non-descendants except $x_j$ itself. Then
$$x_j\perp nd_j\mid pa_j^*.$$
where $pa_j^*=(pa_j, K(pa_j))$, and the above independence is algorithmic, i.e. the algorithmic conditional mutual information $I(x_j:nd_j|pa_j^*):=K(x_j|pa_j^*)+K(nd_j|pa_j^*)-K(x_j:nd_j|pa_j^*)+O(1) =O(1)$.
\end{definition}

Then they prove in their Thm. 3 that the above condition is equivalent to the statement 
\begin{equation}\label{eq:AMC_string}
    K(x_1,\dots, x_n)=\sum_{j=1}^n K(x_j|pa_j^*) +O(1).
\end{equation}
In other words, for $n$ strings $x_1, \dots, x_n$, if \cref{eq:AMC_string} does not hold for a DAG $G$ then we reject the statement that $x_1, \dots, x_n$ satisfy an algorithmic causal model with DAG $G$, see their Postulate 6.

Inspired by \cref{eq:AMC_string}, they propose a principle for the DAG selection given a joint distribution $P$:
\begin{principle}\citep[Postulate 7]{janzing2010causal}\label{pr:AMC_postulate7}
A causal hypothesis $G$ (i.e., a DAG) is only acceptable if the shortest description of the joint
density $P$ is given by a concatenation of the shortest description of the Markov kernels, i.e. 
\begin{equation}\label{eq:AMC_postulate7}
    K(P(X_1,\ldots,X_n))=\sum_jK(P(X_j|PA_j))+O(1)
\end{equation}
where $K(P)$ is the length of the shortest prefix-free program that computes $P(x, y)$ from $(x, y)$.
If no such causal graph exists, we reject every possible DAG and assume that there is a causal relation of a different type, e. g., a latent common cause, selection bias, or a cyclic causal structure.
\end{principle}
We note that although \cref{eq:AMC_postulate7} seems similar to \cref{eq:AMC_string}, they are in fact different principles and one cannot derive one from the other. By \cref{def:KC}, the Kolmogorov complexity, whether prefix ($K_U$) or not ($C_U$), is a function that inputs strings instead of functions. A function has multiple string representations, so does a function or distribution $P$ over $(X_1,\ldots,X_n)$. If we consider $P$ as $n$ strings and apply \cref{eq:AMC_string} to them, this is ill-defined because of the non-uniqueness of string representations. In fact, the motivation of proposing \cref{pr:AMC_postulate7} as a model selection principle is not related to \cref{eq:AMC_string}, but to what is afterwards named as ``independent causal mechanism'' (ICM) principle \citep{peters2017elements}:

``We can think of $P(X)$ as describing a source $S$ that generates $x$-values and sends them to a ``machine'' $M$ that generates $y$-values according to $P (Y |X)$. Assume we observe that $I(P(X): P(Y|X)) \gg 0$.\footnote{$I$ denotes the algorithmic mutual information, see \cref{def:AMC}.} Then we conclude that there must be a causal link between $S$ and $M$ that goes beyond transferring $x$-values from $S$ to $M$. This is because $P (X)$ and $P (Y |X)$ are inherent properties of $S$ and $M$ , respectively which do not depend on the current value of $x$ that has been sent.'' \citep[Sec. 3.1]{janzing2010causal}

In other words, if the shortest Turing machines $S$ and $M$ computing respectively $P(X)$ and $P(Y|X)$ have algorithmic mutual information $O(1)$, then they accept the DAG $X\to Y$, otherwise reject.

Our principle of model selection, \cref{priniple:compression}, does not imply \cref{pr:AMC_postulate7}. The reasons and comparisons are the following:
\begin{itemize}
    \item \cref{priniple:compression} allows selecting a CFMP that contains probabilistic mechanisms that are not algorithmically independent, for example $P(X)\sim \Ncal(0,\sigma^2)$ and $P(Y|X)\sim \Ncal(X,\sigma^2)$, which, according to \cref{pr:AMC_postulate7}, leads to rejecting $X\to Y$ because of the compressibility of the shared parameter $\sigma$.
    \item Our output of model selection is different from \cref{pr:AMC_postulate7}. We select a Turing machine, from which the causal and symmetry statements are read off. \cref{pr:AMC_postulate7} selects a graph only among all possible DAGs, which is less than the possible models that we illustrate in \cref{eg:CFMP}.
    \item The communication game setting behind \cref{pr:AMC_postulate7} is: Alice and Bob share a universal Turing machine, and Alice would like to send a string as short as possible to Bob so that Bob could compute a function $P$. In constrast, we are interested in the game where Alice would like to send \textit{datasets}. The idea of UFCC is that Alice should send a codebook and a codeword so that Bob could reconstruct the datasets. A codebook might consists of a probability distribution function $P$ or not. Our choice of $P$ depends on the trade-off between the two part codes, instead of being given a priori in \cref{eq:AMC_postulate7}.
\end{itemize}
The last point shows that the two-part code objective in the sense of MDL or UFCC is fundamentally different from \cref{pr:AMC_postulate7}. \cite{marx2021formally} aim at linking \cref{pr:AMC_postulate7} and two-part code in MDL. They propose a two-part code objective adapted from \cite{Budhathoki}:
\begin{equation}\label{eq:AMC_MDL1}
    K_{X\to Y}:= K(P_X)+ K(x|P_X) + K(P_{Y|X}) + K(y|x, P_{Y|X})
\end{equation}
and they proved that $K_{X\to Y}$ is on expectation equal to 
\begin{equation}\label{eq:AMC_MDL2}
    K(P_X) + K(P_{Y|X}) + H(P_{XY}),
\end{equation}
which is also a two-part code objective consisting of model length $K(P_X) + K(P_{Y|X})$ and the codeword length $H(P_{XY})$. Their two objectives \cref{eq:AMC_MDL1} and \cref{eq:AMC_MDL2}, however, do not agree with $K(P_{XY})$ in \cref{pr:AMC_postulate7}. Therefore, the two-part code objective (whether from MDL or UFCC) and \cref{pr:AMC_postulate7} cannot be derived from each other.

\section{Remarks on Minimum Description Length principle (MDL) and Bayesian model selection}\label{sec:MDL}

In \cref{def:UFCC} we define that each codebook computed by a UFCC must be a function $\Xcal^d \to \Bcal^*$. When the data is iid sampled from a distribution over $\Xcal^d$ then Huffman code is optimal (\cref{thm:huffman_optimal}). If we modify the definition of UFCC by changing the domain of codebook from $\Xcal^d$ to $(\Xcal^d)^*$, and if the data is exchangeable instead of iid, then there is a code called Bayes code that has shorter codeword length than a given Huffman code for all sequence $x\in (\Xcal^d)^*$. This is a fundamental idea in MDL principle  \citep{grunwald2007minimum}:

\begin{example}[Example 6.4 in \cite{grunwald2007minimum}: Bayes code is better than two-part code]\label{eg:Bayes_code_better}

The Bayesian model is in a sense superior to the two-part code. Namely, in the two-part code we first encode an element in the parameter set $\Theta$ using some code $L_0.$ Such a code must correspond to some "prior" distribution $W$ on $\mathcal{M}$ so that the two-part code gives codelengths
\begin{equation}\label{eq:two_part_MDL}
    L_\text{2-part}(x^n)=\min_{\theta\in\Theta}-\log \PP_\theta(x^n)-\log W(\theta),
\end{equation}

where $W$ depends on the specific code $L_0$ that was used. 

Define the Bayes code with prior $W$:
\begin{equation}\label{eq:Bayes_code_MDL}
    L_{\text{Bayes}}(x^n):=-\log \PP_\text{Bayes}(x^n)=-\log\sum_{\theta\in\Theta}\PP_\theta(x^n)W(\theta)
\end{equation}

where $P_\text{Bayes}$ is the marginal likelihood of the data $x^n$ under the prior $W$. Then it is direct to see that

$$L_{\text{Bayes}}(x^n)=-\log\sum_{\theta\in\Theta}\PP_\theta(x^n)W(\theta)\leq\min_{\theta\in\Theta}-\log \PP_\theta(x^n)-\log W(\theta)=L_\text{2-part}(x^n)$$
because a sum is at least as large as each of its terms.

The inequality becomes strict whenever $\PP_\theta(x^n)>0$ for more than one value of $\theta$. We see that in general the Bayesian code is preferable over the two-part code: for all $x^n$ it never assigns code lengths larger than $L_{2-\text{part}}( x^n)$, and in many cases it assigns strictly shorter codelengths for some $x^n$.    
\end{example}

The above example shows that for any two-part code there exists a Bayes code that is uniformly shorter than that two-part code. Therefore, the MDL research prefers Bayes code to two-part code, respectively termed refined MDL and crude MDL in \cite{grunwald2007minimum}. 

Using this example, we can define a \emdef{Universal Bayes Codebook Computer (UBCC)}\footnote{Same as \cref{def:UFCC} we omit $(m,d)$ in the input for simplicity. In the general case we can construct UBCC depending on $(m,d)$ by inputting $\langle m, \langle d, \langle p \rangle \rangle \rangle$ and simulating the codebook for each $(m,d)$ respectively.}:
\begin{definition}\label{def:UBCC}
    A Turing machine is called Universal Bayes Codebook Computer (UBCC) if it is constructed as follows:
\begin{enumerate}
    \item First, same as in \cref{def:UFCC}, define any recursively enumerable set $S$ of FCMs, such that any codebook $g:\Xcal^d \to \Bcal^*$ can be computed by at least one of the FCMs in it (In the following, we will call such a r.e. set a universal set of FCMs.).
    \item Define a discrete probability $\QQ$ fully supported over that countable set $\Scal$.
    \item For any $x\in (\Xcal^d)^*$, compute its marginal likelihood $\PP(x)= \sum_{T\in \Scal} \PP(x|T)\QQ(T)$, where $\PP(\cdot|T)$ is the probability distribution function computed by $T$. By \cref{cor:correpondance_proba_code}, the negative log marginal likelihood $-\log \PP$ is the coding length of a certain Shannon code over $(\Xcal^d)^*$. By \cref{eg:Bayes_code_better}, this code is shorter than any two-part code in any UFCC using the same r.e. set $\Scal$ of FCMs. 
\end{enumerate}
And different from FC complexity which is a two-part code, we define \emdef{Bayes codebook complexity (BC)} $C^{\text{BC}}_V(\cdot)$ as a one-part code, i.e. the shortest integer $p\in \NN$ such that the codebook corresponding to $-\log \PP$ above can decode $B(p)$ and output $x$.
\end{definition}
In other words, a UBCC is one single infinite codebook mechanism, over $(\Xcal^d)^*$. Given a UFCC $V$, we obtain a r.e. set $\Scal$ of FCMs, and we can build a UBCC $V'$ using $\Scal$ by assigning each FCM $T\in \Scal$ to a prior probability $\QQ(T)= 2^{-l_V(T)}$. By \cref{eg:Bayes_code_better}, $C^{\text{BC}}_{V'}(x)\leq C^{\text{FC}}_V(x)$ for all $x\in (\Xcal^d)^*$. Namely, $C^{\text{BC}_{V'}}$ is a more refined upper bound of Kolmogorov complexity than $C^{\text{FC}_V}$, although the r.e. sets of Turing machines that $V$ and $V'$ simulate are disjoint: $V'$ always \textit{computes} the same codebook and feed it with different binary inputs to output different $x$, while $V$ \textit{simulates} explicitly each FCM of which the input is a binary sequence.

However, to proceed the model selection over $\Scal$, Bayesian model selection in UBCC and two-part code model selection in UFCC coincide: they are both maximum a posteriori. For UFCC, the prefered FCM is $\min_{T,p}\{2l_V(T) + l(p)\}$ which equals $-\log\QQ(T) -\log\PP(x|T)$ for certain $\PP$ and $\QQ$ (the existence of them are again by \cref{cor:correpondance_proba_code}). For UBCC, Bayesian model selection chooses the $T$ that maximizes the posterior $\PP(T|x)$.

The Bayesian model selection in \cite{dhirbivariate} can be considered as a compression scheme between UFCC (pure two-part code) and UBCC (pure Bayes code). They defined their decision criterion of causal graph as the ratio of posterior

\begin{equation}
    \log\frac{\PP(G_1|x)}{\PP(G_2|x)}=\log\frac{\PP(x|G_1)\PP(G_1)}{\PP(x|G_2)\PP(G_2)}
\end{equation}

and $G_1$ is preferred to $G_2$ if the log ratio is positive. To represent the lack of knowledge over graph choices, they set the prior over graphs to be uniform. Since they choose the graph $G$ that maximizes $\PP(x|G)\PP(G)$, which is equivalent to minimize $-\log \PP(x|G) - \log \PP(G)$, their objective is also a two-part code: first encode the graph, and then encode a \textit{Bayes code} (negative log marginal likelihood) of $x$ given $G$.

The main difference between \cite{dhirbivariate} and the computational-theoretic objective (with reference machine UFCC or UBCC) is that the former approach \citep{dhirbivariate} aims at maximizing the posterior of a \textit{graph}, while the latter aims at maximizing the posterior of a \textit{Turing machine}, which, from \cite{dhirbivariate} or a probabilistic point of view, determines a graph and conditional probabilities on it. For \cite{dhirbivariate}, the conditional probabilities in the selected model are uncertain under a given prior. The preference of theories on causality is, in our view, fundamentally subjective.

Here we summarize all the compression games we mentioned in our paper:
\begin{itemize}
    \item Shannon source coding: as explained after~\cref{Thm:Shannon_source_coding}, Alice and Bob know the data distribution $\PP_X$. Alice wants to send iid samples losslessly to Bob using a codebook. Before sending iid data, they can design together a codebook.\\ Question: what is the shortest expected average coding length for each sample, among all possible codebooks? (without counting the length of the codebook)
    \item Kolmogorov complexity: Alice and Bob share a programming language (C, Python) or a universal Turing machine. They do not know any structure of the data sequence $x$ to be sent.\\ Question: what is the length of the shortest program that Alice can send so that Bob can losslessly recover $x$?
    \item Finite codebook (FC) complexity (\cref{def:FC_complexity}): Alice and Bob share a finite universal codebook computer (UFCC,~\cref{def:UFCC}). They know that the data sequence $x$ is sampled in $\Xcal^d$ with precision $m$. Before sending data, they can design together a codebook $\Xcal^d \to \Bcal^*$.\\ Question: what is the length of the shortest program that Alice can send so that Bob can losslessly recover $x$? Since we define the UFCC to only accept two-part code (codeword and codebook), the question is equivalent to: what is the minimal sum of the length of the codewords and the length of the codebook mechanism (TM) to compress a certain sequence $x$?
    \item Bayes codebook (BC) complexity (\cref{def:UBCC}): Alice and Bob share a universal Bayes codebook computer (UBCC,~\cref{def:UBCC}). They know that the data sequence $x$ is sampled in $\Xcal^d$ with precision $m$. Before sending data, they can design together a codebook $(\Xcal^d)^* \to \Bcal^*$.\\ Question: what is the length of the shortest binary string (Bayes codeword) that Alice can send so that Bob can losslessly recover $x$?
\end{itemize}
\end{document}